\DeclareMathOperator{\argmax}{argmax}
\newtheorem{them}{Theorem}
\newtheorem{prop}{Proposition}
\newtheorem{assm}{Assumption}
\def\o{{\bf o}}
\def\R {\mathbb{R}}
\def\D{\Delta}
\def\XX{{\cal X}}
\def\M{{\cal M}}
\def\bM{{\bf M}}
\def\bH {{\bf H}}
\def\X{{\bf X}}
\def\YY {{\cal Y}}
\def\A{{\cal A}}
\def\I{{\bf I}}
\def\eps{\epsilon}
\def\psd{\succeq 0}
\def\K{{\bf K}}
\def\a{\alpha}
\def\b{\beta}
\def\ba{\boldsymbol{\alpha}}
\def\bmu{\boldsymbol{\mu}}
\def\bxi{\boldsymbol{\xi}}
\def\th{\theta}
\def\md{{\mathcal{D}}}
\def\I{{\bf I}}
\def\v{{\bf v}}
\def\w{{\bf w}}
\def\x{{\bf x}}
\def\y{{\bf y}}
\def\0{{\bf 0}}
\def\1{{\bf 1}}
\def\pd{\succ 0}
\def\psd{\succeq 0}
\def\A{{\cal A}}
\def\I{{\bf I}}
\def\x{{\bf x}}
\def\0{{\bf 0}}
\def\1{{\bf 1}}
\def\x{{\bf x}}
\def\B{{\cal B}}
\def\C{{\cal C}}
\def\bB{{\bf B}}
\def\dd{{\bf d}}
\def\ss{{\bf s}}
\def\mL{{\mathcal L}}
\def\mU{{\mathcal U}}
\def\bpsi{\boldsymbol{\psi}}
\def\br{{\bf r}}
\def\bt{\boldsymbol{\tau}}
\def\bW{{\bf W}}
\newcommand{\lgsvm}[0]{\textsc{WellSVM}}
\begin{document}

\title{Convex and Scalable Weakly Labeled SVMs}

\author{\name Yu-Feng Li \email liyf@lamda.nju.edu.cn \\
       \addr National Key Laboratory for Novel Software Technology\\
       Nanjing University\\
       Nanjing 210023, China
       \AND
       \name Ivor W. Tsang \email IvorTsang@ntu.edu.sg \\
       \addr School of Computer Engineering\\
       Nanyang Technological University\\
       Singapore 639798
       \AND
       \name James T. Kwok \email jamesk@cse.ust.hk \\
       \addr Department of Computer Science and Engineering\\
       Hong Kong University of Science \& Technology\\
       Hong Kong
       \AND
       \name Zhi-Hua Zhou \email zhouzh@lamda.nju.edu.cn \\
       \addr National Key Laboratory for Novel Software Technology\\
       Nanjing University\\
       Nanjing 210023, China}

\editor{Sathiya Keerthi}

\maketitle

\begin{abstract}
In this paper, we study the problem of
learning from \emph{weakly labeled data}, where
labels of the training examples are incomplete.
This includes, for example, (i) semi-supervised learning where labels are
partially known; (ii) multi-instance learning where labels are implicitly known;
and (iii) clustering where labels are completely unknown. Unlike
supervised learning, learning with weak labels involves a difficult
Mixed-Integer Programming (MIP) problem. Therefore, it can suffer from poor
scalability and may also get stuck in local minimum. In this paper, we focus on SVMs
and propose the \lgsvm\ via a novel \emph{label generation} strategy.
This leads to a convex relaxation of the original MIP, which
is at least as tight as existing convex Semi-Definite Programming (SDP)  relaxations.
Moreover, the
\lgsvm\ can be solved via a sequence of SVM subproblems that are much more scalable than previous convex SDP relaxations.
Experiments on three weakly labeled learning tasks, namely, (i) semi-supervised
learning; (ii) multi-instance learning for locating regions of interest in
content-based information retrieval; and (iii) clustering, clearly demonstrate
improved performance, and \lgsvm\ is also readily applicable on large data sets.
\end{abstract}

\begin{keywords}
weakly labeled data, semi-supervised learning, multi-instance learning, clustering,
cutting plane, convex relaxation
\end{keywords}

\section{Introduction}

Obtaining labeled data is expensive and difficult.
For example, in scientific applications, obtaining the labels
involves repeated experiments that may be hazardous; in drug prediction,
deriving active molecules of a new drug involves expensive   expertise that
may not even be available.
On the other hand, \emph{weakly labeled data}, where the labels are
incomplete, are often ubiquitous in many applications.
Therefore,
exploiting weakly labeled training data may help improve performance and discover the
underlying structure of the data. Indeed,
this has been regarded as one of the most challenging tasks in machine learning research \citep{mitchell2006discipline}.

Many weak-label learning problems have been proposed. In the following, we summarize several major learning paradigms with weakly labeled data:
\begin{itemize}

\item \emph{Labels are partially known.} \hspace{-.1in} A representative example is
semi-supervised learning (SSL) \citep{chapelle2006semi,zhu2006semi,Zhou:Li2010}, where most of the training examples are unlabeled and only a few are labeled.
SSL improves generalization performance by using the unlabeled examples that are often abundant. In the past decade, SSL has attracted much attention
and achieved successful results in diverse applications such as text categorization, image retrieval, and medical diagnosis.

\item \emph{Labels are implicitly known.} Multi-instance learning
(MIL) \citep{dietterich1997smi} is the most prominent example in this category. In MIL,  training examples are called {\em bags\/}, each of which
contains multiple instances. Many real-world objects can be naturally described by multiple instances. For example, an image (bag) usually contains
multiple semantic regions, and each region is an instance.
Instead of describing an object as a single instance, the multi-instance
representation can help separate different semantics within the object.
MIL has been successfully applied to diverse domains such as image
classification, text categorization, and web mining. The
relationship between multi-instance learning and semi-supervised learning has also been discussed in \citet{zhou2007relation}.

In traditional MIL, a bag is labeled positive when it contains at least one positive instance, and is labeled negative otherwise. Although the bag
labels are often available, the instance labels are only implicitly known. It is worth noting that identification of the key (or positive) instances
from the positive bags can be very useful in many real-world applications. For example, in content-based information retrieval (CBIR), the explicit
identification of regions of interest (ROI) can help the user to recognize images that he/she wants quickly (especially when the system returns a
large number of images). Similarly, to detect suspect areas in some medical and military applications, a quick scanning of a huge number of images is
required. Again, it is very desirable if ROIs can be identified.
Besides providing an accurate and efficient prediction, the identification of key instances is also useful in understanding ambiguous objects
\citep{li2012}.

\item \emph{Labels are totally unknown.} This becomes unsupervised learning
\citep{jain1988acd}, which aims at discovering the
underlying structure (or concepts/labels) of the data and grouping similar
examples together. Clustering is valuable in data
analysis, and is widely used in various domains including
information retrieval, computer version, and bioinformatics.

\item There are other kinds of weak-label learning problems. For instances, \cite{angluin1988learning}
and references therein studied noisy-tolerant
problems where the label information is noisy; \citet{sheng2008get} and references
therein considered learning from multiple annotation results by different
experts in which all the experts are imperfect; \citet{sun2010multi} and
\citet{bucak2011multi} considered weakly labeled data in the context of multi-label
learning, whereas \citet{yangIJCAI13} considered weakly labeled data in the context of multi-instance multi-label
learning \citep{zhou2012multi}.
\end{itemize}

Unlike supervised learning where the training labels are complete, weak-label learning needs to infer the integer-valued labels of the training examples, resulting in a difficult mixed-integer programming (MIP).
To solve this problem, many algorithms have been proposed, including global optimization \citep{Chapelle2008,sindhwani2006das} and convex SDP relaxations \citep{xu2005mmc,xu2005semi,de2006semi,guo2009max}. Empirical studies have demonstrated their promising performance on small data sets. Although SDP convex relaxations can reduce the training time complexity of global optimization methods from exponential to polynomial, they still cannot handle medium-sized data sets having thousands of examples. Recently, several algorithms resort to using non-convex optimization techniques (such as alternating optimization methods \citep{andrews2003svm,zhang2007mmc,li2009semi} and constrained convex-concave procedure \citep{collobert2006lst,cheung2006rfm,zhao:emm}. Although these approaches are often efficient, they can only obtain locally optimal solutions and can easily get stuck in local minima. Therefore, it is desirable to develop a scalable yet convex optimization method for learning with large-scale weakly labeled data. Moreover, unlike several scalable graph-based methods proposed for the transductive setup \citep{subramanya2009entropic,zhang2009prototype,vapnik1998statistical}, here we are more interested in inductive learning methods.

In this paper, we will focus on the binary support vector machines (SVM). Extending our preliminary works in
\citet{li2009convex,li2009tighter}, we propose a convex weakly labeled SVM (denoted \lgsvm\ (WEakly LabeLed SVM)) via a novel ``label generation" strategy. Instead of obtaining a label relation matrix via SDP, \lgsvm\ maximizes the margin
by generating the most violated label vectors iteratively, and then combines them via efficient multiple kernel
learning techniques. The whole procedure can be formulated as a convex relaxation of the original MIP problem.
Furthermore, it can be shown that the learned linear combination of label vector outer-products is in the convex hull
of the label space. Since the convex hull is the smallest convex set containing the target non-convex
set \citep{boyd2004co}, our formulation is at least as tight as the convex SDP relaxations proposed
in \citet{xu2005mmc}, \citet{de2006semi} and \citet{xu2005semi}. Moreover, \lgsvm\ involves a series of SVM subproblems, which can be
readily solved in a scalable and efficient manner via state-of-the-art SVM software such as LIBSVM \citep{fan2005wss}, SVM-\emph{perf} \citep{joachims2006tls}, LIBLINEAR \citep{hsieh2008dcd} and CVM \citep{tsang2006cvm}. Therefore,
\lgsvm\ scales much better than existing SDP approaches or even some non-convex approaches. Experiments on three common weak-label learning tasks (semi-supervised learning, multi-instance learning, and clustering) validate the
effectiveness and scalability of the proposed \lgsvm.

The rest of this paper is organized as follows. Section \ref{sec:well} briefly
introduces large margin weak-label learning. Section \ref{sec:lgsvm}
presents the proposed \lgsvm\ and analyzes its time complexity. Section
\ref{sec:implementations} presents detailed formulations on three weak-label
learning problems. Section~\ref{sec:expt} shows some comprehensive
experimental results and the last section concludes.

In the following, $\bM\pd$ (resp. $\bM\psd$) denotes that the matrix $\bM$ is
symmetric and positive definite (pd) (resp. positive semidefinite (psd)). The
transpose  of vector / matrix (in both the input and feature spaces) is
denoted by the superscript $'$, and $\0, \1 \in \R^n$ denote the zero vector
and the vector of all ones, respectively. The inequality
$\v=[v_1,\ldots,v_k]'\geq \0$ means that $v_i \geq 0$ for $i=1,\ldots,k$.
Similarly, $\bM \geq \0$ means that all elements in the matrix $\bM$ are nonnegative.

\section{Large-Margin Weak-Label Learning}
\label{sec:well}

We commence with the simpler supervised learning scenario. Given a set of labeled
examples $\mathcal{D}=\{\x_i,y_i\}_{i=1}^N$ where $\x_i \in \XX$ is the input
and  ${y}_i \in \{\pm 1\}$ is the output, we aim to find a decision
function $f: \XX \rightarrow \{\pm 1\}$ such that the following structural risk
functional is minimized:
\begin{equation}\label{eq:supervise}
\min_{f} \; \Omega(f)+ C\; \ell_{f}(\mathcal{D}).
\end{equation}
Here, $\Omega$ is a regularizer related to large margin on $f$, $\ell_f(\mathcal{D})$ is the empirical loss on
$\md$, and $C$ is a regularization parameter that trades off the empirical risk and model complexity. Both $\Omega$ and $\ell_f{(\cdot)}$ are problem-dependent. In particular, when $\ell_f{(\cdot)}$ is the hinge loss (or its variants), the obtained $f$ is a large margin classifier. It is notable that both $\Omega$ and $L_{f}(\cdot)$ are usually convex. Thus, Equation~(\ref{eq:supervise}) is a convex problem whose globally optimal solution can be efficiently obtained via various convex optimization techniques.

In weak-label learning, labels are not available on all $N$ training examples, and
so also need to be learned.
Let $\hat{\y} =[\hat{y}_1,\cdots, \hat{y}_N]'
\in \{\pm 1\}^N$ be the vector of  (known and unknown)
labels on all the training examples.
The basic idea of large-margin weak-label
learning is that the structural risk functional in Equation~(\ref{eq:supervise}) is
minimized w.r.t. both the labeling\footnote{To simplify notations, we write
$\min_{\hat{\y} \in \B}$, though indeed one only needs to minimize w.r.t. the
unknown labels in
$\hat{\y}$.}
$\hat{\y}$
and
decision function $f$. Hence, Equation~(\ref{eq:supervise}) is extended
to
\begin{eqnarray}\label{eq:weakly labeled}
\min_{\hat{\y} \in \B} \;\min_{f} \;\;\;\; \Omega(f)+C\; \ell_{f}(\{\x_i,\hat{y}_i\}_{i=1}^N),
\end{eqnarray}
where
$\B$ is a set of candidate label assignments obtained from
some domain knowledge. For example, when the positive and negative examples are
known to be approximately balanced, we can set $\B = \{\hat{\y} \; :\; -\beta \leq \sum_{i=1}^{N}\hat{y}_i \leq \beta\}$ where $\beta$ is a small constant controlling the class imbalance.

\subsection{State-of-The-Art Approaches}
\label{subsec:state-of-the-art}
As Equation~(\ref{eq:weakly labeled}) involves optimizing the integer variables $\hat{\y}$, it is no longer a convex optimization problem but a
mixed-integer program. This can easily suffer from the local minimum problem. Recently, a lot of efforts have been devoted to solve this problem.
They can be grouped into three categories. The first strategy optimizes Equation~(\ref{eq:weakly labeled}) via variants of non-convex optimization.
Examples include alternating optimization \citep{zhang2009mmc,li2009semi,andrews2003svm}, in which we alternatively optimize variable $\hat{\y}$ (or
$f$) by keeping the other variable $f$ (or $\hat{\y}$) constant;
constrained convex-concave procedure (CCCP) (also known as DC programming) \citep{horst1999dc,zhao:emm,collobert2006lst,cheung2006rfm}, in which the
non-convex objective function or constraint is decomposed as a difference of
two convex functions; local combinatorial search \citep{Joachims1999},
in which
the labels of two examples in opposite classes
are sequentially switched.
These approaches are often
computationally efficient. However, since they are based on non-convex
optimization, they may inevitably get stuck in local minima.

The second strategy obtains the globally optimal solution of
Equation~(\ref{eq:weakly labeled}) via global optimization. Examples
include branch-and-bound \citep{Chapelle2008} and deterministic
annealing \citep{sindhwani2006das}. Since they aim at obtaining the
globally optimal (instead of the locally optimal)
solution,
excellent performance
can be expected \citep{Chapelle2008}. However,
their
worst-case computational costs can scale exponentially as the data set size.
Hence, these approaches can only be applied to small data sets
with just hundreds of training examples.

The third strategy is based on convex relaxations. The original non-convex problem is
first relaxed to a convex problem, whose globally optimal solution can be efficiently obtained.
This is then rounded to recover an approximate solution of the original problem.
If the relaxation is tight, the approximate solution obtained is close to the global optimum
of the original problem and good performance can be expected. Moreover, the involved convex programming solver has a
time complexity substantially lower than that for global optimization. A prominent example of convex relaxation is the use of semidefinite programming (SDP) techniques \citep{xu2005mmc,xu2005semi,de2006semi,guo2009max}, in which a
positive semidefinite matrix is used to approximate the matrix of label outer-products. The time complexity of this SDP-based strategy is $O(N^{6.5})$ \citep{lobo1998applications,nesterov1987interior},
where $N$ is the data set size, and can be further reduced to $O(N^{4.5})$ \citep{zhang2009mmc,valizadegan1400gmm}.
However, this is still expensive for medium-sized data sets with several thousands of examples.

To summarize, existing weak-label learning approaches are not scalable or can be sensitive to initialization.
In this paper, we propose the \lgsvm\ algorithm to address these two issues.

\section{\lgsvm}
\label{sec:lgsvm}

In this section, we first introduce the SVM dual which will be used as a basic reformulation of our proposal, and then we present the general formulation of \lgsvm.
Detailed formulations on three common weak-label learning tasks will be presented in Section~\ref{sec:implementations}.

\subsection{Duals in Large Margin Classifiers}
\label{subsec:minimax-convex-relax}

In large margin classifiers, the inner
minimization problem of Equation~(\ref{eq:weakly labeled}) is often cast in
the dual form. For example, for the standard SVM without offset, we have
$\Omega=\frac{1}{2}\|\w\|^2$ and $\ell_{f}(D)$ is the summed hinge loss. The inner
minimization problem is then
\begin{eqnarray*}\label{eq:standard_svm_without_offset}
\min\limits_{\w, \bxi} && \frac{1}{2}||\w||_2^2 + C \sum_{i=1}^{N} \xi_i\\
\mbox{s.t.} && \hat{y}_i \w'\phi(\x_i) \geq 1 - \xi_i, \;\; \xi_i \geq 0, \;\; i = 1
\dots, N, \nonumber
\end{eqnarray*}
where $\phi(\x_i)$ is the feature map induced by kernel $\kappa$, and its dual is
\begin{eqnarray*}
\max\limits_{\ba} && \ba'\1-\frac{1}{2}\ba'\big(\K\odot\hat{\y}\hat{\y}'\big)\ba \\
\mbox{s.t.} && C\1 \geq \ba \geq \0,
\end{eqnarray*}
where $\ba \in \R^N$ is the dual variable, $\K \in \R^{N \times N}$ is the kernel matrix defined on the $N$
samples, and $\odot$ is the element-wise product. For more details on the duals of large margin classifiers, interested readers are referred to \citet{sch?lkopf2002learning} and \citet{cristianini2002kernel}.

In this paper, we make the following
assumption on this dual.

\begin{assm} \label{as:assm}
The dual of the inner minimization of Equation~(\ref{eq:weakly labeled}) can be written
as:
$\max_{\ba \in \A} G(\ba, \hat{\y})$,
where $\ba=[\alpha_1,\ldots,\alpha_N]'$ contains the dual variables and
\begin{itemize}
\item $\A$ is a convex set;
\item $G(\ba,\hat{\y})$ is a concave function in $\ba$ for any fixed $\hat{\y}$;
\item $g_{\y}(\ba)= -G(\ba,\y)$
is $\lambda$-strongly convex and $M$-Lipschitz. In other words, $\nabla^2 g_{\y}(\ba)- \lambda \I \psd $,  where $\I$ is the identity matrix, and
$\|g_{\y}(\ba)-g_{\y}(\bar{\ba})\| \leq M\|\ba -\bar{\ba}\|,\ \forall \y \in \B,\ \ba,\bar{\ba} \in \A$;
\item $\forall \hat{\y} \in \B$, $lb \leq \max_{\ba \in \A} G(\ba, \hat{\y})
\leq ub$, where $lb$ and $ub$ are polynomial in $N$;
\item
$G(\ba,\hat{\y})$ can be rewritten as $\bar{G}(\ba,\bM)$, where
$\bM$ is
a psd matrix, and
$\bar{G}$ is concave in $\ba$ and linear in $\bM$.
\end{itemize}
\end{assm}
With this assumption,
Equation~(\ref{eq:weakly labeled}) can be written as
\begin{equation}\label{eq:dual2}
\min_{\hat{\y} \in \B} \max_{\ba \in \A} \;\;\; G(\ba, \hat{\y}),
\end{equation}

Assume that the kernel matrix $\K$ is pd (i.e., the smallest eigenvalue $\lambda_{\min} > 0$)
and all its entries are bounded ($|K_{ij}| \leq \upsilon$ for some $\upsilon$). It is easy to see that the following SVM variants satisfy Assumption~\ref{as:assm}.

\begin{itemize}
\item Standard SVM without offset: We have
\begin{eqnarray*}
\A & = & \{\ba\;| \; C\1\geq \ba\geq \0\}, \\
G(\ba, \hat{\y}) & = & \ba'\1-\frac{1}{2}\ba'\big(\K\odot\hat{\y}\hat{\y}'\big)\ba,
\\
\nabla^2 g_{\y}(\ba) &= & \K\odot {\y}{\y}' \succeq \lambda_{\min}(\I\odot{\y}{\y}') = \lambda_{\min} \I, \\
\|g_{\y}(\ba)-g_{\y}(\bar{\ba})\| & \leq & (1+C\upsilon N)\sqrt{N}\|\ba
-\bar{\ba}\|, \\
0 & \leq & \max_{\ba \in \A} G(\ba, \hat{\y}) \leq CN, \\
\bar{G}(\ba,\bM_{\hat{\y}}) & = &
\ba'\1-\frac{1}{2}\ba'\big(\K\odot\bM_{\hat{\y}}\big)\ba,  \;\;\;\text{ where }
\bM_{\hat{\y}} = \hat{\y}\hat{\y}'.
\end{eqnarray*}
\item $\nu$-SVM \citep{sch?lkopf2002learning}: We have
\begin{eqnarray*}
\A& = & \{\ba\;|\;\ba\geq \0, \ba'\1=1\}, \\
G(\ba, \hat{\y}) & = &
-\frac{1}{2}\ba'\left((\K+\frac{1}{C}\I)\odot\hat{\y}\hat{\y}'\right)\ba, \\
\nabla^2 g_{\y}(\ba) & = & \left(\K+\frac{1}{C}\I \right)\odot{\y}{\y}' \succeq
\left(\lambda_{\min}+ \frac{1}{C} \right)\; \big(\I\odot{\y}{\y}'\big) = \left(\lambda_{\min}+
\frac{1}{C} \right) \I, \\
\|g_{\y}(\ba)-g_{\y}(\bar{\ba})\| & \leq & \left(\upsilon+\frac{1}{C} \right) N\sqrt{N}\|\ba -\bar{\ba}\|,
\\
-\frac{1}{2}\left(\upsilon + \frac{1}{C}\right) & \leq &\max_{\ba \in \A} G(\ba, \hat{\y}) \leq
0, \\
\bar{G}(\ba,\bM_{\hat{\y}}) & = & -\frac{1}{2}\ba'\left((\K+\frac{1}{C}\I)\odot\bM_{\hat{\y}}\right)\ba.
\end{eqnarray*}
\end{itemize}

\subsection{\lgsvm}

Interchanging the order of $\max_{\ba \in \A}$ and $\min_{\hat{\y} \in \B}$
in Equation~(\ref{eq:dual2}),
we obtain the proposed \lgsvm:
\begin{align}\label{eq:minimax-relax}
(\mbox{\lgsvm}) \;\;\;\; \max_{\ba \in \A} \min_{\hat{\y} \in \B}  \;\;\; G(\ba, \hat{\y}).
\end{align}
Using the minimax theorem \citep{kim08}, the optimal objective of
Equation~(\ref{eq:dual2}) upper-bounds that of Equation~(\ref{eq:minimax-relax}). Moreover,
Equation~(\ref{eq:minimax-relax}) can be transformed as
\begin{eqnarray}\label{eq:LGSVM_framework}
\!\!\! \max\limits_{\ba\in\A} \!\!\! & \Big\{ \max_\th \!\! & \! \th \\
&\mbox{s.t.} \!\!\! \! & \!\! G(\ba,\hat{\y}_t) \geq \th,\;\forall\hat{\y}_t \in\B \Big\}, \nonumber
\end{eqnarray}
from which we obtain the following Proposition.

\begin{prop}
The objective of \lgsvm\ can be rewritten as the following optimization problem:
\begin{eqnarray}\label{eq:LGSVM_framework_v2}
\min_{\bmu\in \M}\max_{\ba\in\A}
\sum_{t:\hat{\y}_t \in \B}\mu_t G(\ba,\hat{\y}_t),
\end{eqnarray}
where $\bmu$ is the vector of $\mu_t$'s, $\M$ is the simplex $\{\bmu \; | \; \sum_t \mu_t=1, \mu_t\geq 0\}$, and
$\hat{\y}_t\in \B$.
\end{prop}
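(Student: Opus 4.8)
The plan is to unwind the reformulation~(\ref{eq:LGSVM_framework}) one layer at a time and then apply a minimax theorem. First I would fix $\ba\in\A$ and analyze the inner problem $\max_\th\{\th\;:\;G(\ba,\hat{\y}_t)\geq\th,\ \forall\hat{\y}_t\in\B\}$. Since $\B$ is a finite set (it is a subset of $\{\pm1\}^N$), the largest feasible $\th$ is exactly $\min_{t:\hat{\y}_t\in\B}G(\ba,\hat{\y}_t)$, so the objective of \lgsvm\ equals $\max_{\ba\in\A}\min_{t:\hat{\y}_t\in\B}G(\ba,\hat{\y}_t)$.

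Next I would express this finite minimum over $t$ as the minimum of a linear function over the simplex $\M$: for any reals $\{c_t\}$ one has $\min_t c_t=\min_{\bmu\in\M}\sum_t\mu_t c_t$, because a linear function on a simplex attains its minimum at a vertex. Taking $c_t=G(\ba,\hat{\y}_t)$ yields
\begin{equation*}
\max_{\ba\in\A}\ \min_{t:\hat{\y}_t\in\B}G(\ba,\hat{\y}_t)\;=\;\max_{\ba\in\A}\ \min_{\bmu\in\M}\ \sum_{t:\hat{\y}_t\in\B}\mu_t\,G(\ba,\hat{\y}_t).
\end{equation*}

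It then remains to interchange $\max_{\ba\in\A}$ and $\min_{\bmu\in\M}$. Writing $\Phi(\bmu,\ba)=\sum_{t:\hat{\y}_t\in\B}\mu_t G(\ba,\hat{\y}_t)$, Assumption~\ref{as:assm} guarantees that each $G(\cdot,\hat{\y}_t)$ is concave in $\ba$, so $\Phi(\bmu,\cdot)$ is concave on $\A$ for every $\bmu\geq\0$; and $\Phi(\cdot,\ba)$ is linear, hence convex, in $\bmu$. Since $\M$ is a compact convex simplex and $\A$ is convex, the minimax theorem of \citet{kim08} (equivalently, Sion's minimax theorem) applies and gives $\max_{\ba\in\A}\min_{\bmu\in\M}\Phi(\bmu,\ba)=\min_{\bmu\in\M}\max_{\ba\in\A}\Phi(\bmu,\ba)$, which is precisely~(\ref{eq:LGSVM_framework_v2}). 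To justify writing $\max$ rather than $\sup$ over $\ba$, I would note that for fixed $\bmu\in\M$ the function $-\Phi(\bmu,\cdot)=\sum_t\mu_t g_{\hat{\y}_t}(\cdot)$ is a convex combination of $\lambda$-strongly convex functions, hence $\lambda$-strongly convex, so it has bounded sublevel sets and the inner maximum is attained (and in the concrete SVM instances $\A$ is simply compact).

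The only genuinely delicate step is verifying the hypotheses of the minimax theorem — compactness of one feasible set (provided by the simplex $\M$) together with the concave-in-$\ba$, convex-in-$\bmu$ and (semi)continuity structure of $\Phi$ — but these are all immediate consequences of Assumption~\ref{as:assm} and the finiteness of $\B$; the rest is routine bookkeeping around the $\th$-reformulation.
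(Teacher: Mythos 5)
Your proof is correct and follows essentially the same route as the paper: reformulate the inner $\th$-problem so that the objective becomes a simplex-weighted combination $\sum_t \mu_t G(\ba,\hat{\y}_t)$, and then swap $\max_{\ba\in\A}$ and $\min_{\bmu\in\M}$ via the minimax theorem using concavity in $\ba$ and convexity (linearity) in $\bmu$. The only difference is cosmetic: where the paper obtains the simplex via the Lagrangian dual of the inner linear program in $\th$, you use the equivalent elementary identity $\min_t c_t=\min_{\bmu\in\M}\sum_t\mu_t c_t$, and your extra remarks on attainment of the inner maximum are a welcome (if unneeded at the paper's level of rigor) bonus.
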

\begin{proof}
For the inner optimization in Equation~(\ref{eq:LGSVM_framework}), let $\mu_t \geq 0$ be the dual variable for each constraint. Its Lagrangian can be obtained as
\[ \theta+\sum_{t:\hat{\y}_t \in\B}
\mu_t \Big(G(\ba,\hat{\y}_t) - \theta \Big). \]
Setting the derivative w.r.t. $\theta$ to zero, we have $\sum_{t}\mu_t = 1$.  We can then replace the inner optimization subproblem with its dual and Equation~(\ref{eq:LGSVM_framework}) becomes:
\[ \max_{\ba\in\A} \min_{\bmu\in \M}
\sum_{t:\hat{\y}_t \in\B}\mu_t G(\ba,\hat{\y}_t) =
\min_{\bmu\in \M}\max_{\ba\in\A}
\sum_{t:\hat{\y}_t \in\B}\mu_t G(\ba,\hat{\y}_t). \]
Here, we use the fact that the objective function is convex in $\bmu$ and
concave in $\ba$.
\end{proof}

Recall that $G(\ba, \hat{\y})$ is concave in $\ba$. Thus, the constraints in Equation~(\ref{eq:LGSVM_framework}) are convex. It is evident that the objective in Equation~(\ref{eq:LGSVM_framework}) is linear in both $\ba$ and $\th$. Therefore, Equation~(\ref{eq:LGSVM_framework}) is a \emph{convex} problem. In other words, \lgsvm\ is a convex relaxation of
Equation~(\ref{eq:weakly labeled}).

\subsection{Tighter than SDP Relaxations}
\label{subsec:tight}

In this section, we compare our minimax relaxation with SDP relaxations. It is
notable that the SVM without offset is always employed by previous SDP
relaxations \citep{xu2005mmc,xu2005semi,de2006semi}.

Recall the symbols in Section~\ref{subsec:minimax-convex-relax}. Define
\begin{equation*}
\YY_0 = \big\{\bM \; | \; \bM = \bM_{\hat{\y}}, \;\; \hat{\y} \in \B \big\}.
\end{equation*}
The original mixed-integer program in Equation~(\ref{eq:dual2}) is the same as
\begin{equation} \label{eq:exactly_well}
\min_{\bM \in \YY_0}\max_{\ba\in\A} \; \bar{G}(\ba,\bM).
\end{equation}
Define $\YY_1 = \big\{\bM \; | \; \bM = \sum_{t:\hat{\y}_t \in \B}\mu_t \bM_{\hat{\y}_t}, \;\; \bmu \in \M  \big\}$. Our minimax relaxation in
Equation~(\ref{eq:LGSVM_framework_v2}) can be written as
\begin{eqnarray}
\min_{\bmu\in \M} \max_{\ba\in\A} \sum_{t:\hat{\y}_t \in \B}\mu_t \bar{G}(\ba,\bM_{\hat{\y}_t}) & = & \min_{\bmu\in \M} \max_{\ba\in\A}
\bar{G}\left(\ba,\sum_{t:\hat{\y}_t \in \B}\mu_t \bM_{\hat{\y}_t}\right) \nonumber\\
&=& \min_{\bM \in \YY_1}\max_{\ba\in\A} \; \bar{G}(\ba,\bM). \label{eq:well_minimax}
\end{eqnarray}
On the other hand, the SDP relaxations
in \citet{xu2005mmc,xu2005semi} and \citet{de2006semi} are
of the form
\begin{equation*}
\min_{\bM \in \YY_2}\max_{\ba\in\A} \; \bar{G}(\ba,\bM),
\end{equation*}
where
$\YY_2 = \big\{\bM \; | \; \bM \psd , \bM \in \M_{\B} \big\}$,
and
$\M_{\B}$ is
a convex set related to $\B$.
For example, in the context of clustering,
\citet{xu2005mmc} used
$\B=\{\hat{\y}|-\beta\leq
\1'\hat{\y}\leq \beta\}$, where $\beta$ is a parameter controlling the class
imbalance, and $\M_{\B}$ is defined as
\begin{eqnarray*}
\M_{\B}^{\text{clustering}} = & \Big\{\bM
=[m_{ij}]
\;| & -1\leq m_{ij}\leq 1; m_{ii}=1, m_{ij}=m_{ji}, \nonumber\\
&&m_{ik}\geq m_{ij}+m_{jk}-1, m_{jk} \geq -m_{ij}-m_{ik}-1, \nonumber\\
&&-\beta \leq \sum_{i=1}^{N}m_{ij} \leq \beta, \;\; \forall i,j,k=1,\dots, N \Big\}.
\end{eqnarray*}
It is easy to verify that $\YY_{0} \subseteq \YY_2$ and $\YY_2$ is convex. Similarly,
in semi-supervised learning, \citet{xu2005semi} and
\citet{de2006semi}
defined $\M_{\B}$ as
a subset\footnote{For a more precise
definition, interested readers are referred to
\citet{xu2005semi} and \citet{de2006semi}.}
of $\M_{\B}^{\text{clustering}}$. Again,
$\YY_{0} \subseteq \YY_2$ and $\YY_2$ is convex.

\begin{them}\label{them:relaxation} The relaxation of \lgsvm\ is at least as tight as
the SDP relaxations in
 \citet{xu2005mmc,xu2005semi} and \citet{de2006semi}.
\end{them}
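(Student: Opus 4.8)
The plan is to recognize that all three objects in play---the original mixed-integer program in Equation~(\ref{eq:exactly_well}), the \lgsvm\ relaxation in Equation~(\ref{eq:well_minimax}), and the SDP relaxations---have the common form $\min_{\bM\in\YY}\max_{\ba\in\A}\bar G(\ba,\bM)$, differing only in the feasible set $\YY$: the MIP uses $\YY_0$, \lgsvm\ uses $\YY_1$, and the SDP relaxations use $\YY_2$. The inner problem defines a single function $h(\bM):=\max_{\ba\in\A}\bar G(\ba,\bM)$, which is well defined and finite since $\bar G$ is concave in $\ba$ over the convex set $\A$ and, by Assumption~\ref{as:assm}, its value is sandwiched between $lb$ and $ub$. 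Because minimizing the \emph{same} function $h$ over a smaller set can only give a larger (or equal) value, it suffices to establish the containment $\YY_0\subseteq\YY_1\subseteq\YY_2$; the first inclusion re-confirms that \lgsvm\ is a valid relaxation, and the second yields $\min_{\bM\in\YY_1}h(\bM)\geq\min_{\bM\in\YY_2}h(\bM)$, i.e., the \lgsvm\ objective is no smaller than the SDP objective while still lying below the MIP optimum---which is exactly the assertion that \lgsvm\ is at least as tight.

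The key step is $\YY_1\subseteq\YY_2$. First I would note that, by definition, $\YY_1=\{\sum_{t:\hat{\y}_t\in\B}\mu_t\bM_{\hat{\y}_t}:\bmu\in\M\}$ is precisely the set of all finite convex combinations of the matrices $\bM_{\hat{\y}}=\hat{\y}\hat{\y}'$ with $\hat{\y}\in\B$; that is, $\YY_1=\mathrm{conv}(\YY_0)$, the convex hull of $\YY_0$. From the excerpt we are already given that $\YY_0\subseteq\YY_2$ and that $\YY_2=\{\bM\,|\,\bM\psd,\ \bM\in\M_{\B}\}$ is convex (it is the intersection of the psd cone with the convex set $\M_{\B}$). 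Since the convex hull is the smallest convex set containing $\YY_0$ \citep{boyd2004co}, every convex set that contains $\YY_0$---in particular $\YY_2$---must contain $\mathrm{conv}(\YY_0)=\YY_1$. This gives $\YY_1\subseteq\YY_2$. Combining with the trivial $\YY_0\subseteq\YY_1$ (take $\mu_t=1$ for a single index) completes the chain of inclusions.

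To finish, invoke the linearity of $\bar G$ in $\bM$ from Assumption~\ref{as:assm}, which was used in Equation~(\ref{eq:well_minimax}) to rewrite the \lgsvm\ objective as $\min_{\bM\in\YY_1}h(\bM)$; the SDP objective is $\min_{\bM\in\YY_2}h(\bM)$. Monotonicity of the minimum under set inclusion then yields
\[
\min_{\bM\in\YY_0}h(\bM)\;\geq\;\min_{\bM\in\YY_1}h(\bM)\;\geq\;\min_{\bM\in\YY_2}h(\bM),
\]
so the \lgsvm\ objective is lower-bounded by the SDP objective and upper-bounded by the MIP optimum, which is the claim.

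The only genuinely nontrivial point---hence the ``main obstacle''---is identifying $\YY_1$ with $\mathrm{conv}(\YY_0)$ and then invoking minimality of the convex hull against the given convex superset $\YY_2$; the remaining ingredients (convexity of $\YY_2$, the inclusion $\YY_0\subseteq\YY_2$, linearity of $\bar G$ in $\bM$, finiteness of $h$) are all supplied by the excerpt. Two minor subtleties worth spelling out in the write-up: first, that ``tighter'' for a minimization relaxation correctly corresponds to a \emph{larger} optimal value, so the inequality directions above are the right formalization; and second, that the argument is uniform across the specific instantiations of $\B$ and $\M_{\B}$ (the clustering $\M_{\B}^{\text{clustering}}$ and the semi-supervised subset of it), since it uses only $\YY_0\subseteq\YY_2$ and convexity of $\YY_2$, both of which the excerpt verifies case by case.
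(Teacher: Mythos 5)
Your proposal is correct and follows essentially the same route as the paper: identify $\YY_1$ as the convex hull of $\YY_0$, invoke minimality of the convex hull against the convex superset $\YY_2$ (with $\YY_0\subseteq\YY_2$ already established in the text) to get $\YY_1\subseteq\YY_2$, and conclude by comparing the minima of the common objective over nested feasible sets. Your write-up merely makes explicit the monotonicity-under-inclusion step and the meaning of ``tighter,'' which the paper leaves implicit.
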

\begin{proof}
Note that $\YY_1$ is the convex hull of $\YY_0$, that is, the smallest convex set containing $\YY_0$ \citep{boyd2004co}. Therefore, Equation~(\ref{eq:well_minimax}) gives the tightest convex relaxation of
Equation~(\ref{eq:exactly_well}), that is, $\YY_1 \subseteq \YY_2$. In other words, our
relaxation is at least as tight as SDP relaxations.
\end{proof}

\subsection{Cutting Plane Algorithm by Label Generation}
\label{subsec:cp-algo}

It appears that existing convex optimization techniques can be readily used to solve the convex problem in Equation~(\ref{eq:LGSVM_framework_v2}), or
equivalently Equation~(\ref{eq:LGSVM_framework}). However, note that there can be an exponential number of
constraints in Equation~(\ref{eq:LGSVM_framework}),
and so a direct optimization is computationally intractable. Fortunately, typically not all these constraints
are active at optimality, and including only a subset of them can lead to a very good approximation of the original optimization problem. Therefore,
we can apply the cutting plane method \citep{kelley1960cpm}.

The cutting plane algorithm is described in Algorithm~\ref{alg:LGSVM}. First, we initialize a label vector $\hat{\y}$ and the working set $\C$ to $\{\hat{\y}\}$, and obtain $\ba$ from

\begin{eqnarray}\label{eq:LGSVM_framework_v22}
\min_{\bmu\in \M}\max_{\ba\in\A}
\sum_{t : \hat{\y}_t \in \C}\mu_t G(\ba,\hat{\y}_t)
\end{eqnarray}
via standard supervised
learning
methods. Then, a violated label vector $\hat{\y}$ in Equation~(\ref{eq:LGSVM_framework})
is \emph{generated} and added to $\C$.
The process is repeated until the termination criterion is met.
Since
the size of the working set $\C$ is often much smaller than that of $\B$,
one can use existing convex optimization techniques
to obtain $\ba$ from
Equation~(\ref{eq:LGSVM_framework_v22}).

For the non-convex optimization methods reviewed in Section~\ref{subsec:state-of-the-art},
a new label assignment for the unlabeled data is also generated in each iteration.
However, they are very different from our proposal. First, those algorithms
do not take the previous label assignments into account, while, as will be seen in Section 4.1.2, our \lgsvm\ aims to learn a combination of previous label assignments. Moreover, they update the label assignment
to approach a locally optimal solution, while our \lgsvm\ aims to obtain a tight convex relaxation solution.

\begin{algorithm}[t]
\caption{Cutting plane algorithm for \lgsvm.}
\begin{algorithmic}[1]
\STATE Initialize $\hat{\y}$ and $\C = \emptyset$.
\REPEAT
\STATE Update $\C \leftarrow \{\hat{\y}\} \bigcup \C$.
\STATE Obtain the optimal $\ba$ from Equation~(\ref{eq:LGSVM_framework_v22}).
\STATE Generate a violated $\hat{\y}$.
\UNTIL{$G(\ba,\hat{\y}) > \min_{\y \in
\C} G(\ba,{\y})-\eps$
(where $\eps$ is a small constant)} or the decrease of objective value is smaller than a threshold.
\end{algorithmic}
\label{alg:LGSVM}
\end{algorithm}

\subsection{Computational Complexity}
\label{subsec:time-analysis}

The key to analyzing the running time of Algorithm~\ref{alg:LGSVM} is its convergence rate, and we have the following Theorem.

\begin{them}\label{them:objective_gain}
Let $p^{(t)}$ be the optimal objective value of Equation~(\ref{eq:LGSVM_framework_v22}) at the $t$-th iteration. Then,
\begin{equation}\label{eq:objective_gain}
p^{(t+1)} \leq p^{(t)}-\eta,
\end{equation}
where
$\eta = \Big( \frac{-c+\sqrt{c^2+4\epsilon}}{2} \Big)^2$, and
$c = M\sqrt{2/\lambda}$.
\end{them}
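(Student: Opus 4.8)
The plan is to rewrite $p^{(t)}$ as the negative of a minimization over $\A$ of a single strongly convex, Lipschitz function, and then show that appending the newly generated violated label vector shrinks that minimum by at least $\eta$. For the reformulation, let $\C^{(t)}\subseteq\B$ denote the working set at the start of iteration $t$ and run the argument of the Proposition with $\C^{(t)}$ in place of $\B$ (the minimum of a linear function over the simplex $\M$ is attained at a vertex): this turns Equation~(\ref{eq:LGSVM_framework_v22}) into $\max_{\ba\in\A}\min_{j:\hat{\y}_j\in\C^{(t)}}G(\ba,\hat{\y}_j)=-\min_{\ba\in\A}F^{(t)}(\ba)$, where $F^{(t)}(\ba):=\max_{j:\hat{\y}_j\in\C^{(t)}}g_{\hat{\y}_j}(\ba)$ and $g_{\y}(\ba)=-G(\ba,\y)$ is as in Assumption~\ref{as:assm}. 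A finite pointwise maximum of $\lambda$-strongly convex, $M$-Lipschitz functions is again $\lambda$-strongly convex and $M$-Lipschitz (subtracting the common $\tfrac{\lambda}{2}\|\ba\|^2$ inside the $\max$ shows the first, and $|\max_j u_j-\max_j v_j|\le\max_j|u_j-v_j|$ the second), so $F^{(t)}$ has both properties on $\A$, as does $F^{(t+1)}$ built from $\C^{(t+1)}=\C^{(t)}\cup\{\hat{\y}^{(t)}\}$. Writing $\ba^{(t)},\ba^{(t+1)}$ for the respective minimizers, we have $F^{(t)}(\ba^{(t)})=-p^{(t)}$, $F^{(t+1)}(\ba^{(t+1)})=-p^{(t+1)}$, and $\ba^{(t)}$ is exactly the optimal $\ba$ obtained from Equation~(\ref{eq:LGSVM_framework_v22}) at iteration $t$.

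Next I would extract two bounds on $\Delta:=\|\ba^{(t)}-\ba^{(t+1)}\|$. Until the stopping test $G(\ba^{(t)},\hat{\y}^{(t)})>\min_{\y\in\C^{(t)}}G(\ba^{(t)},\y)-\eps$ is met, the generated label vector satisfies $G(\ba^{(t)},\hat{\y}^{(t)})\le p^{(t)}-\eps$, i.e.\ $g_{\hat{\y}^{(t)}}(\ba^{(t)})\ge -p^{(t)}+\eps$, so that $F^{(t+1)}(\ba^{(t)})\ge -p^{(t)}+\eps$. Applying the standard estimate for a $\lambda$-strongly convex function at its constrained minimizer to $F^{(t)}$, together with $F^{(t)}\le F^{(t+1)}$ pointwise, gives
\[
-p^{(t+1)}=F^{(t+1)}(\ba^{(t+1)})\ge F^{(t)}(\ba^{(t+1)})\ge F^{(t)}(\ba^{(t)})+\tfrac{\lambda}{2}\Delta^2=-p^{(t)}+\tfrac{\lambda}{2}\Delta^2 ,
\]
hence $p^{(t)}-p^{(t+1)}\ge\tfrac{\lambda}{2}\Delta^2$. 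Using instead the $M$-Lipschitzness of $F^{(t+1)}$ and the lower bound on $F^{(t+1)}(\ba^{(t)})$,
\[
-p^{(t)}+\eps\le F^{(t+1)}(\ba^{(t)})\le F^{(t+1)}(\ba^{(t+1)})+M\Delta=-p^{(t+1)}+M\Delta ,
\]
hence $p^{(t)}-p^{(t+1)}\ge\eps-M\Delta$.

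Finally I would combine them. The first bound yields $\Delta\le\sqrt{2(p^{(t)}-p^{(t+1)})/\lambda}$ (in particular $p^{(t)}-p^{(t+1)}\ge 0$); substituting into the second gives $p^{(t)}-p^{(t+1)}\ge\eps-c\sqrt{p^{(t)}-p^{(t+1)}}$ with $c=M\sqrt{2/\lambda}$. Putting $s:=\sqrt{p^{(t)}-p^{(t+1)}}\ge 0$, this reads $s^2+cs-\eps\ge 0$; since $s\ge 0$ and the larger root of $x\mapsto x^2+cx-\eps$ is $\tfrac12(-c+\sqrt{c^2+4\eps})$, we conclude $s\ge\tfrac12(-c+\sqrt{c^2+4\eps})$, i.e.\ $p^{(t)}-p^{(t+1)}=s^2\ge\big(\tfrac{-c+\sqrt{c^2+4\eps}}{2}\big)^2=\eta$, which is Equation~(\ref{eq:objective_gain}).

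The step I expect to be the main obstacle is the opening reduction: verifying that pointwise maxima inherit the same $\lambda$ and $M$, and that the strong-convexity inequality is invoked correctly at a \emph{constrained}, non-smooth minimizer over $\A$ (through the first-order optimality condition there). After that, the two estimates above and the scalar quadratic manipulation are routine.
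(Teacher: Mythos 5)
Your proof is correct, and it reaches the bound through the same analytical engine as the paper -- a strong-convexity estimate at the constrained minimizer, an $M$-Lipschitz estimate combined with the $\eps$-violation of the newly generated label, and the resulting scalar quadratic inequality in $\sqrt{p^{(t)}-p^{(t+1)}}$ with $c=M\sqrt{2/\lambda}$ -- but the bookkeeping is genuinely different. The paper works with the saddle point $\{\bar{\ba}^{(t)},\bar{\bmu}^{(t)}\}$ of Equation~(\ref{eq:LGSVM_framework_v22}) and applies strong convexity to the \emph{weighted} (hence smooth) objective $J(\ba,\bar{\bmu}^{(t)})=\sum_{\hat{\y}\in\C^{(t)}}\bar{\mu}_{\hat{\y}}^{(t)}g_{\hat{\y}}(\ba)$; it then introduces the auxiliary restricted problem in Equation~(\ref{eq:reduced-form}) (mixing weight $\theta$ between $\bar{\bmu}^{(t)}$ and the new label), defines $\eta$ as the gap $p^{(t)}-\hat{p}^{(t+1)}$ of that problem, and sandwiches $p^{(t+1)}\le\hat{p}^{(t+1)}\le p^{(t)}$, with the two key inequalities stated at the restricted optimizer $\tilde{\ba}^{(t)}$. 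You instead pass to the vertex (max--min) form of Equation~(\ref{eq:LGSVM_framework_v22}), compare the successive piecewise-maximum functions $F^{(t)},F^{(t+1)}$ directly at their minimizers, and bound the true decrease $p^{(t)}-p^{(t+1)}$ with no intermediate problem. What your route buys is a shorter argument and a bound on the actual per-iteration decrease; what it costs is having to justify two points the paper's smoother setup avoids: that the $\ba$ computed by the algorithm is the minimizer of $F^{(t)}$ over $\A$ (which does follow from the saddle-point property, exactly as you assert), and that the strong-convexity growth inequality holds for the \emph{non-smooth} max-function at a constrained minimizer (standard via a subgradient first-order condition, which you correctly flag); the paper instead gets the growth bound by a Taylor expansion of the differentiable weighted sum and replaces your identity $\min_{\y\in\C^{(t)}}G(\ba^{(t)},\y)=p^{(t)}$ by the weighted-average bound in Equation~(\ref{eq:ineq2}). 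Both arguments are sound and yield the same constant $\eta$.
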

Proof is in Appendix~\ref{app:proof of theorem}.
From Theorem~\ref{them:objective_gain}, we can obtain the following convergence rate.
\begin{prop}\label{prop:step}
Algorithm
\ref{alg:LGSVM} converges in no more
than
$\frac{p^{(1)}-p^{*}}{\eta}$ iterations, where
$p^{*}$ is the optimal objective value of \lgsvm.
\end{prop}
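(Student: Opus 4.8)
The plan is to combine the per-iteration decrease guaranteed by Theorem~\ref{them:objective_gain} with a uniform lower bound on the sequence $p^{(t)}$, and then invoke the elementary fact that a sequence decreasing by at least a fixed positive amount at every step, yet bounded below, can perform only finitely many steps.

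First I would establish that $p^{(t)}\ge p^{*}$ for every $t$. At the $t$-th iteration the working set $\C$ is a subset of $\B$, so Equation~(\ref{eq:LGSVM_framework_v22}) is Equation~(\ref{eq:LGSVM_framework_v2}) with only the constraints indexed by $\C$ retained in the equivalent form Equation~(\ref{eq:LGSVM_framework}). Dropping constraints of the form $G(\ba,\hat{\y}_t)\ge\th$ enlarges the feasible region of the inner maximization and hence can only increase the optimal value; equivalently, in the convex-hull picture of Section~\ref{subsec:tight}, the set of $\bM$ expressible as convex combinations of atoms in $\C$ is contained in $\YY_{1}$, so $\min$ over the smaller set is no smaller than $\min$ over $\YY_{1}$, which is the \lgsvm\ value $p^{*}$. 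Moreover, by the boundedness clause of Assumption~\ref{as:assm} we have $p^{*}\ge lb>-\infty$ and $p^{(1)}$ finite, so $p^{(1)}-p^{*}$ is a finite nonnegative number.

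Next I would iterate Theorem~\ref{them:objective_gain}. As long as the loop in Algorithm~\ref{alg:LGSVM} has not terminated at iteration $t$, the generated label vector $\hat{\y}$ is violated, i.e. $G(\ba,\hat{\y})\le\min_{\y\in\C}G(\ba,\y)-\eps$, which is precisely the situation in which Theorem~\ref{them:objective_gain} yields $p^{(t+1)}\le p^{(t)}-\eta$ with $\eta>0$. A one-line induction then gives $p^{(t+1)}\le p^{(1)}-t\eta$, and combining this with $p^{(t+1)}\ge p^{*}$ gives $t\eta\le p^{(1)}-p^{*}$, i.e. $t\le (p^{(1)}-p^{*})/\eta$. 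Hence once the iteration count exceeds $(p^{(1)}-p^{*})/\eta$ no further $\eta$-decrease is possible, so no violated $\hat{\y}$ can exist and the termination criterion must already have been met, which bounds the number of iterations by $(p^{(1)}-p^{*})/\eta$. The argument is essentially bookkeeping on top of Theorem~\ref{them:objective_gain}; the only two points needing care are the justification of $p^{(t)}\ge p^{*}$ and the observation that the ``violated $\hat{\y}$'' driving each non-terminating iteration is exactly the hypothesis under which Theorem~\ref{them:objective_gain} supplies the $\eta$-decrease, so that every such iteration genuinely consumes at least $\eta$ of the finite budget $p^{(1)}-p^{*}$. I do not expect either to be a real obstacle.
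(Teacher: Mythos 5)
Your argument is correct and is exactly the route the paper intends: Proposition~\ref{prop:step} is stated as an immediate consequence of Theorem~\ref{them:objective_gain}, obtained by telescoping the $\eta$-decrease against the lower bound $p^{(t)}\geq p^{*}$ (which holds since the working-set problem in Equation~(\ref{eq:LGSVM_framework_v22}) drops constraints of Equation~(\ref{eq:LGSVM_framework}), equivalently minimizes over a subset of $\YY_1$). Your two ``points needing care'' are handled properly, so nothing is missing.
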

According to Assumption 1, we have $p^{*} = \min_{\hat{\y} \in \B} \max_{\ba
\in \A} G(\ba, \hat{\y}) \geq lb$ and $p^{(1)} = \max_{\ba \in \A} G(\ba,
\hat{\y}) \leq ub$. Moreover, recall that $lb$ and $ub$ are polynomial in $N$.
Thus, Proposition~\ref{prop:step} shows that with the use of the cutting plane
algorithm, the number of active constraints only scales polynomially in $N$. In particular, as discussed in
Section~\ref{subsec:minimax-convex-relax}, for the $\nu$-SVM, $lb =
-\frac{1}{2}(\upsilon+\frac{1}{C})$ and $ub = 0$, both of which are unrelated to $N$. Thus, the number of active constraints only scales as $O(1)$.

Proposition~\ref{prop:step} can be further refined
by taking the search effort of a violated label into account. The proof is similar to that of Theorem~\ref{them:objective_gain}.

\begin{prop}\label{prop:step-plus}
Let $\epsilon_r \geq \epsilon$, $\forall r =1,2,\ldots$, be the magnitude of the
violation of a violated label in the $r$-th iteration, that is, $\epsilon_r = \min_{\y
\in \C_r} G(\ba,{\y})-G(\ba,\hat{\y}^r)$, where $\C_r$ and $\hat{\y}^r$ denote the
set of violated labels and the violated label obtained in the $r$-th iteration, respectively. Let $\eta_r = \Big( \frac{-c+\sqrt{c^2+4\epsilon_r}}{2} \Big)^2$. Then, Algorithm \ref{alg:LGSVM} converges in no more
than $R$ iterations where $\sum_{r=1}^{R} \eta_r \geq p^{(1)}-p^{*}$.
\end{prop}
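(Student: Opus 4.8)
The plan is to reproduce the argument behind Theorem~\ref{them:objective_gain} essentially verbatim, but tracking the \emph{actual} violation $\epsilon_r$ produced at iteration $r$ rather than its worst-case lower bound $\epsilon$, and then to telescope the resulting per-iteration decreases. First I would inspect the proof of Theorem~\ref{them:objective_gain} (given in the appendix) and confirm that its single-step estimate is \emph{local}: when the label $\hat{\y}^{r}$ with violation $\epsilon_r=\min_{\y\in\C_r}G(\ba,\y)-G(\ba,\hat{\y}^{r})$ is appended to the working set, the bound on $p^{(r)}-p^{(r+1)}$ is obtained by perturbing the current dual solution towards the contribution of $\hat{\y}^{r}$ and invoking the $\lambda$-strong convexity and $M$-Lipschitz continuity of $g_{\y}$ granted by Assumption~\ref{as:assm}, and the only iteration-specific quantity entering the estimate is the violation of $\hat{\y}^{r}$ itself. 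Hence the identical computation with $\epsilon$ replaced by $\epsilon_r$ throughout yields, at each iteration $r$ where a violated label is added,
\[
 p^{(r+1)}\ \le\ p^{(r)}-\eta_r,\qquad \eta_r=\Big(\tfrac{-c+\sqrt{c^{2}+4\epsilon_r}}{2}\Big)^{2},\quad c=M\sqrt{2/\lambda}.
\]
Setting $\epsilon_r=\epsilon$ recovers Theorem~\ref{them:objective_gain} exactly, which is what is meant by saying the proof ``is similar''; moreover $\delta\mapsto\big(\tfrac{-c+\sqrt{c^{2}+4\delta}}{2}\big)^{2}$ is increasing on $\delta\ge0$, so $\epsilon_r\ge\epsilon$ implies $\eta_r\ge\eta>0$.

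Next I would telescope. The termination test at iteration $r$ fails precisely when the generated label has violation $\epsilon_r\ge\epsilon$, so if Algorithm~\ref{alg:LGSVM} has not stopped through iteration $R$ then the per-step bound applies for $r=1,\dots,R$ and summation gives $p^{(R+1)}\le p^{(1)}-\sum_{r=1}^{R}\eta_r$. On the other hand $\{p^{(t)}\}$ is non-increasing (Theorem~\ref{them:objective_gain}) and, since the working set $\C_t$ is always contained in $\B$, it is bounded below by the \lgsvm\ optimum $p^{*}$; in particular $p^{(R+1)}\ge p^{*}$. Combining the two inequalities, once $\sum_{r=1}^{R}\eta_r\ge p^{(1)}-p^{*}$ we are forced to have $p^{(R+1)}=p^{*}$, after which no $\hat{\y}\in\B$ can violate the working-set value by $\epsilon$ or more at the current dual solution---appending such a $\hat{\y}$ would, by the bound above, push the next objective strictly below $p^{*}$, contradicting the lower bound---so the termination criterion is met. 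Thus the algorithm performs at most $R$ iterations, which is the assertion; this is the same deduction that turns Theorem~\ref{them:objective_gain} into Proposition~\ref{prop:step}, with $t\eta$ replaced by $\sum_{r\le t}\eta_r$.

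The main obstacle is the opening step: verifying that the appendix proof of Theorem~\ref{them:objective_gain} genuinely uses the newly inserted label only through its violation margin, so that nothing uniform over all of $\B$ is being invoked and the substitution $\epsilon\rightarrow\epsilon_r$ is legitimate. Once this localization is confirmed, the rest is the routine telescoping above together with the already-established monotonicity of $p^{(t)}$ and its lower bound $p^{*}$. A minor secondary point is to fix the exact counting convention behind ``no more than $R$ iterations'' (whether the iteration that detects termination is itself counted); this is handled precisely as in the passage from Theorem~\ref{them:objective_gain} to Proposition~\ref{prop:step}.
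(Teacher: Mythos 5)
Your proposal is correct and follows essentially the same route the paper intends: the appendix proof of Theorem~2 uses the new label only through its violation margin in the inequality corresponding to Equation~(\ref{eq:ineq2}), so replacing $\epsilon$ by $\epsilon_r$ gives the per-iteration bound $p^{(r+1)}\leq p^{(r)}-\eta_r$, and telescoping together with $p^{(t)}\geq p^{*}$ (since $\C_t\subseteq\B$) yields the stated iteration bound, exactly as in the passage from Theorem~2 to Proposition~2.
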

Hence, the more effort is spent on finding a violated label, the faster is the
convergence. This represents a trade-off between the convergence rate and cost in each iteration.

We will show in Section~\ref{sec:implementations} that step 4 of Algorithm~\ref{alg:LGSVM} can
be addressed via multiple kernel learning techniques which only involve a series of SVM subproblems that can be solved efficiently by state-of-the-art SVM software such as LIBSVM \citep{fan2005wss} and LIBLINEAR \citep{hsieh2008dcd}, while step 5 can be efficiently addressed by sorting. Therefore, the total time complexity of \lgsvm\
scales as the existing SVM solvers, and is significantly faster than SDP relaxations.

\section{Three Weak-Label Learning Problems}
\label{sec:implementations}

In this section, we present the detailed formulations of \lgsvm\ on three
common weak-label learning tasks, namely, semi-supervised learning (Section
\ref{subsec:ssl}), multi-instance learning
(Section \ref{subsec:mil}), and clustering (Section \ref{subsec:clustering}).

\subsection{Semi-Supervised Learning}
\label{subsec:ssl}

In semi-supervised learning, not all
the training labels
are known. Let
$\md_{\mL}=\{\x_i,y_i\}_{i=1}^{l}$ and $\md_{\mU}=\{\x_j\}_{j=l+1}^{N}$ be the
sets of labeled and unlabeled examples, respectively,
and ${\mL} = \{1,\ldots, l\}$ (resp. $\mU = \{l+1,\ldots, N\}$) be the index set of
the labeled (resp. unlabeled) examples.
In semi-supervised learning, unlabeled data are
typically much more abundant than labeled data, that is, $N-l \gg l$. Hence, one
can obtain a trivially ``optimal'' solution with infinite margin by assigning
all the unlabeled examples to the same label. To prevent such a useless
solution, \citet{Joachims1999} introduced the balance constraint
\[
\frac{\1'\hat{\y}_{\mU}}{N-l} = \frac{\1'{\y}_{\mL}}{l},
\]
where $\hat{\y}=[\hat{y}_1,\cdots, \hat{y}_N]'$ is the vector of learned labels on both
labeled and unlabeled examples, $\y_{\mL} = [y_1,\dots,y_l]'$, and
$\hat{\y}_{\mU} = [\hat{y}_{l+1},\dots,\hat{y}_{N}]'$. Let $\Omega = \frac{1}{2}\|\w\|^2$ and
$\ell_{f}(\md)$ be the sum of hinge loss values on both labeled and unlabeled data,
Equation~(\ref{eq:weakly labeled}) leads to
\begin{eqnarray}
\min_{\hat{\y} \in \B}\min\limits_{\w, \bxi} && \frac{1}{2}||\w||_2^2 + C_1 \sum_{i=1}^{l} \xi_i + C_2 \sum_{i=l+1}^{N} \xi_i
\nonumber \\
\mbox{s.t.} && \hat{y}_i \w'\phi(\x_i) \geq 1 - \xi_i, \;\; \xi_i \geq 0,\;\; i = 1
\dots, N, \nonumber
\end{eqnarray}
where $\B = \{\hat{\y} \;|\; \hat{\y} = [\hat{\y}_{\mL}; \hat{\y}_{\mU}],
\hat{\y}_{\mL} = \y_{\mL}, \hat{\y}_{\mU} \in \{\pm 1\}^{N-l};
\frac{\1'\hat{\y}_{\mU}}{N-l} = \frac{\1'{\y}_{\mL}}{l} \}$, and $C_1, C_2$
trade off model complexity and empirical losses on the labeled and unlabeled
data, respectively. The inner minimization problem can be rewritten in its
dual, as:
\begin{eqnarray} \label{eq:ssl_dual2}
&\min_{\hat{\y} \in \B} \max_{\ba\in\A} &  G(\ba,\hat{\y}) := \1'\ba - \frac{1}{2}\ba'\Big(\K\odot \hat{\y}\hat{\y}' \Big) \ba,
\end{eqnarray}
where $\ba=[\a_1,\dots,\a_N]'$ is the vector of dual variables, and
$\A=\{\ba \; \big| \; C_1 \geq \alpha_i \geq 0, C_2 \geq \alpha_j \geq 0, i \in
\mL, j \in \mU\}$.

Using Proposition 1, we have
\begin{align}
\min_{\bmu\in \M} \max_{\ba\in\A} \;\; \1'\ba-
\frac{1}{2}\ba'\Big(\sum_{t:\hat{\y}_t\in\B}\mu_t\K\odot
\hat{\y}_t\hat{\y}_t' \Big) \ba, \label{eq:ssl_framework}
\end{align}
which is a convex relaxation of Equation~(\ref{eq:ssl_dual2}). Note that $G(\ba,\hat{\y})$ can be rewritten as
$\bar{G}(\ba,\bM_{\y})=\1'\ba - \frac{1}{2}\ba'\Big(\K\odot \bM_{\y}
\Big) \ba$, where $\bar{G}$ is concave in $\ba$ and linear in $\bM_{\y}$. Hence, according to Theorem 1, \lgsvm\
is at least as tight as the SDP relaxations in \citet{xu2005semi} and \citet{de2006semi}.

Notice the similarity with standard SVM, which involves a single kernel matrix
$\K\odot \hat{\y}\hat{\y}'$. Hence, Equation~(\ref{eq:ssl_framework}) can be regarded as
\emph{multiple kernel learning} (MKL) \citep{lanckriet2004lkm}, where the target
kernel matrix is a convex combination of $|\B|$ base kernel matrices $\{\K\odot
\hat{\y}_t \hat{\y}_t'\}_{t:\hat{\y}_t \in\B}$, each of which is constructed from a feasible label vector $\hat{\y}_t \in \B$.

\subsubsection{Algorithm}

 From Section~\ref{sec:lgsvm}, the cutting plane algorithm is used to
solve Equation~(\ref{eq:ssl_framework}). There are two important issues
that have to be addressed in the use of cutting plane algorithms. First, how to efficiently solve the MKL optimization problem? Second, how to efficiently find a violated $\hat{\y}$? These will be addressed in Sections~\ref{sec:mlkl} and \ref{sec:violate}, respectively.

\subsubsection{Multiple Label-Kernel Learning}
\label{sec:mlkl}

In recent years, a lot of efforts have been devoted on efficient MKL approaches. \citet{lanckriet2004lkm} first proposed the use of quadratically
constrained quadratic programming (QCQP) in MKL. \citet{bach2004mkl} showed that an approximate solution can be efficiently obtained by using
sequential minimization optimization (SMO) \citep{platt-99b}. Recently, \citet{sonnenburg2006lsm} proposed a semi-infinite linear programming (SILP)
formulation which allows MKL to be iteratively solved with standard SVM solver and linear programming. \citet{rakotomamonjy-08} proposed a weighted
2-norm regularization with additional constraints on the kernel weights to encourage a sparse kernel combination. \citet{xu-09} proposed the use of
the extended level method to improve its convergence, which is further refined by the MKLGL algorithm \citep{xusimple}. Extension to nonlinear MKL
combinations is also studied recently in \citet{kloft2009efficient}.

Unlike standard MKL problems which try to find the optimal kernel function/matrix for a given set of labels, here, we have to find the optimal label kernel matrix. In this paper, we use an adaptation of the MKLGL algorithm \citep{xusimple} to solve this multiple label-kernel learning (MLKL) problem. More specifically, suppose that the current working
set is $\C =\{\hat{\y}_1,\dots,\hat{\y}_T\}$. Note that the feature map
corresponding to the base kernel matrix $\K\odot \hat{\y}_t \hat{\y}_t'$ is
$\x_i \mapsto \hat{y}_{ti}\phi(\x_i)$. The MKL problem in Equation~(\ref{eq:ssl_framework}) thus
corresponds to the following primal optimization problem:
\begin{eqnarray}
& \min\limits_{\bmu\in\M,\bW=[\w_1,\dots,\w_T],\bxi} &
\frac{1}{2}\sum_{t=1}^{T}\frac{1}{\mu_{t}} ||\w_t||^{2} +
C_1 \sum_{i=1}^l\xi_i + C_2 \sum_{i=l+1}^{N} \xi_i \label{eq:ssl_solve_beta} \\
&\mbox{s.t.} & \sum_{t=1}^{T}\hat{y}_{ti}\w_t'\phi(\x_i) \geq 1 -
\xi_i, \;\; \xi_i \geq 0, \;\; i = 1,\dots, N. \nonumber
\end{eqnarray}
It is easy to verify that its dual can be written as
\begin{eqnarray*}
\min\limits_{\bmu \in \M} \max\limits_{\ba \in \A} & \1'\ba-\frac{1}{2} \ba'\Big(\sum_{t=1}^{T}\mu_t
\K \odot \hat{\y}_t \hat{\y}_t' \Big)\ba,
\end{eqnarray*}
which is the same as Equation~(\ref{eq:ssl_framework}). Following MKLGL, we can solve
Equation~(\ref{eq:ssl_framework}) (or, equivalently, Equation~(\ref{eq:ssl_solve_beta})) by
iterating
the following two steps until convergence.
\begin{enumerate}
\item
Fix the mixing coefficients $\bmu$ of the base
kernel matrices and solve Equation~(\ref{eq:ssl_solve_beta}). By setting
$\tilde{\w} = [\frac{\w_1}{\sqrt{\mu_1}},\ldots,\frac{\w_T}{\sqrt{\mu_T}}]'$, $\tilde{\x}_i =
[\sqrt{\mu_1}\phi(\x_i),\sqrt{\mu_2}\hat{y}_{1i}\hat{y}_{2i}\phi(\x_i),
\dots,\sqrt{\mu_T}\hat{y}_{1i}\hat{y}_{Ti}\phi(\x_i)]'$ and $\tilde{\y} =
\hat{\y}_1$, Equation~(\ref{eq:ssl_solve_beta}) can be rewritten as
\begin{eqnarray*}
& \min\limits_{\tilde{\w},\bxi} &
\frac{1}{2}||\tilde{\w}||^{2} + C_1 \sum_{i=1}^l\xi_i + C_2 \sum_{i=l+1}^{N} \xi_i \\
&\mbox{s.t.} & \tilde{y}_i \tilde{\w}'\tilde{\x}_i \geq 1 -
\xi_i, \;\; \xi_i \geq 0, \;\; i = 1,\dots, N, \nonumber
\end{eqnarray*}
which is similar to the primal
of the standard
SVM and can be efficiently handled by state-of-the-art SVM solvers.

\item Fix $\w_t$'s and update $\bmu$ in closed-form, as
\begin{equation*}
\mu_t = \frac{\|\w_t\|}{\sum_{t'=1}^{T} \|\w_{t'}\|}, \;\; t = 1,\dots, T.
\end{equation*}
\end{enumerate}

In our experiments, this always converges in fewer than $100$ iterations. With the use of warm-start, even faster convergence can be expected.

\subsubsection{Finding a Violated Label Assignment}
\label{sec:violate}

The following optimization problem corresponds to finding the most violated $\hat{\y}$
\begin{align}\label{eq:most_violate_y}
\min\limits_{\hat{\y}\in \B} & \;\;\; G(\ba,\hat{\y}) = \1'\ba - \frac{1}{2}\ba'\Big( \K \odot \hat{\y}
\hat{\y}' \Big)\ba.
\end{align}
The first term in the objective does not relate to $\hat{\y}$, so
Equation~(\ref{eq:most_violate_y}) is rewritten as
\begin{align*}
\max\limits_{\hat{\y}\in \B} & \;\; \frac{1}{2}\ba'\Big( \K \odot \hat{\y}
\hat{\y}' \Big)\ba.
\end{align*}
However, this is a concave QP and cannot be solved efficiently. Note that while
the use of the most violated constraint may lead to faster convergence, the
cutting plane algorithm only requires the addition of a violated constraint at
each iteration \citep{kelley1960cpm,tsochantaridis2006large}. Hence, we propose in the following a simple and efficient
method for finding a violated label assignment.

Consider the following equivalent problem:
\begin{align}\label{eq:mostviolate}
\max\limits_{\hat{\y}\in \B} \hat{\y}'\bH\hat{\y},
\end{align}
where $\bH = \K \odot (\ba \ba')$ is a psd matrix. Let $\bar{\y} \in \C$ be the
following suboptimal solution of Equation~(\ref{eq:mostviolate})
\[
\bar{\y} = \mathop{\arg\max}\nolimits_{\hat{\y}\in \C} \hat{\y}'\bH\hat{\y}.
\]

Consider an optimal solution of the following optimization problem
\begin{align}\label{eq:a_violated_y}
{\y}^{*} = \argmax\limits_{\hat{\y}\in \B} \hat{\y}'\bH\bar{\y}.
\end{align}
We have the following proposition.
\begin{prop}\label{prop:violate}
${\y}^{*}$ is a violated label assignment if $\bar{\y}'\bH{\y}^{*} \neq \bar{\y}'\bH\bar{\y}$.
\end{prop}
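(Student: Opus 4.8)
The plan is to show that if $\y^{*}$ is \emph{not} a violated label assignment, then necessarily $\bar{\y}'\bH\y^{*} = \bar{\y}'\bH\bar{\y}$, which is the contrapositive of the claim. Recall from Algorithm~\ref{alg:LGSVM} (and the discussion around Equation~(\ref{eq:mostviolate})) that $\hat{\y}$ is a violated label assignment exactly when $G(\ba,\hat{\y}) < \min_{\y \in \C} G(\ba,{\y})$, i.e., when $\hat{\y}'\bH\hat{\y} > \max_{\y \in \C} \y'\bH\y = \bar{\y}'\bH\bar{\y}$ (using $G(\ba,\hat{\y}) = \1'\ba - \frac12 \hat{\y}'\bH\hat{\y}$ from Equation~(\ref{eq:mostviolate}), so smaller $G$ means larger $\hat{\y}'\bH\hat{\y}$). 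So ``$\y^{*}$ not violated'' means $(\y^{*})'\bH\y^{*} \le \bar{\y}'\bH\bar{\y}$.

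The key step is a convexity/Cauchy--Schwarz-type argument using that $\bH \psd$. Since $\bH$ is psd, it induces a semi-inner product $\langle \u,\v\rangle_{\bH} = \u'\bH\v$ with seminorm $\|\u\|_{\bH} = \sqrt{\u'\bH\u}$, so $|\u'\bH\v| \le \|\u\|_{\bH}\|\v\|_{\bH}$. First I would observe that $\bar{\y} \in \B$ (since $\C \subseteq \B$), so $\bar{\y}$ is a feasible point of the maximization in Equation~(\ref{eq:a_violated_y}); hence by optimality of $\y^{*}$,
\[
\bar{\y}'\bH\y^{*} \;=\; (\y^{*})'\bH\bar{\y} \;\ge\; \bar{\y}'\bH\bar{\y}.
\]
Now suppose $\y^{*}$ is not violated, so $(\y^{*})'\bH\y^{*} \le \bar{\y}'\bH\bar{\y}$, i.e., $\|\y^{*}\|_{\bH}^2 \le \|\bar{\y}\|_{\bH}^2$. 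Combining with Cauchy--Schwarz,
\[
\bar{\y}'\bH\bar{\y} \;\le\; \bar{\y}'\bH\y^{*} \;\le\; \|\bar{\y}\|_{\bH}\,\|\y^{*}\|_{\bH} \;\le\; \|\bar{\y}\|_{\bH}^2 \;=\; \bar{\y}'\bH\bar{\y}.
\]
Therefore every inequality in this chain is an equality; in particular $\bar{\y}'\bH\y^{*} = \bar{\y}'\bH\bar{\y}$. This is exactly the negation of the hypothesis $\bar{\y}'\bH\y^{*} \neq \bar{\y}'\bH\bar{\y}$, so by contraposition, if $\bar{\y}'\bH\y^{*} \neq \bar{\y}'\bH\bar{\y}$ then $\y^{*}$ is violated.

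The main thing to pin down carefully is the precise translation between ``violated label assignment'' and the inequality on the quadratic form $\hat{\y}'\bH\hat{\y}$, including which direction the strict inequality goes through the minus sign in $G$; once that bookkeeping is fixed, the only real content is the two-line sandwich above, whose middle step is just Cauchy--Schwarz for the psd form $\bH$ applied to the feasible competitor $\bar{\y}$ in Equation~(\ref{eq:a_violated_y}). No deeper obstacle is expected — the psd-ness of $\bH = \K \odot (\ba\ba')$ is given (it is psd as a Hadamard product of a psd kernel and a rank-one psd matrix), and feasibility $\bar{\y}\in\B$ is immediate from $\C \subseteq \B$.
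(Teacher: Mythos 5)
Your proposal is correct and essentially the same as the paper's argument: both rest on the psd-ness of $\bH$ together with the optimality of $\y^{*}$ in Equation~(\ref{eq:a_violated_y}) against the feasible competitor $\bar{\y}\in\B$, and your Cauchy--Schwarz sandwich in contrapositive form is just a repackaging of the paper's proof by contradiction via the expansion $(\y^{*}-\bar{\y})'\bH(\y^{*}-\bar{\y})\geq 0$. If anything, you make explicit the inequality $\bar{\y}'\bH\y^{*}\geq\bar{\y}'\bH\bar{\y}$ that the paper's strict-inequality step uses only implicitly.
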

\begin{proof}
From $\hat{\y}'\bH{\y}^{*} \neq \bar{\y}'\bH\bar{\y}$, we have ${\y}^{*} \neq \bar{\y}$. Suppose that
$({\y}^{*})'\bH {\y}^{*} \leq \bar{\y}'\bH\bar{\y}$, then $({\y}^{*})'\bH
{\y}^{*} + \bar{\y}'\bH\bar{\y}-2({\y}^{*})'\bH \bar{\y} \leq
2\bar{\y}'\bH\bar{\y} - 2({\y}^{*})'\bH \bar{\y}< 0$ which contradicts with $({\y}^{*})'\bH {\y}^{*} + \bar{\y}'\bH\bar{\y} - 2({\y}^{*})'\bH \bar{\y} = ({\y}^{*} - \bar{\y})'\bH ({\y}^{*}- \bar{\y}) \geq 0$. So, $({\y}^{*})'\bH {\y}^{*} > \bar{\y}'\bH\bar{\y}$ which indicates ${\y}^{*}$ is a violated label assignment.
\end{proof}

As for solving Equation~(\ref{eq:a_violated_y}), it is a integer linear program for
$\hat{\y}$. We can rewrite this as
\begin{eqnarray}\label{eq:toy}
& \max\limits_{\hat{\y}} & \br'\hat{\y} = \br_{\mL}'\hat{\y}_{\mL} +
\br_{\mU}'\hat{\y}_{\mU} \\
& \mbox{s.t.} & \hat{\y}_{\mL} = \y_{\mL}, \hat{\y}_{\mU} \in \{\pm 1\}^{N-l},
\frac{\1'\hat{\y}_{\mU}}{N-l} = \frac{\1'{\y}_{\mL}}{l}, \nonumber
\end{eqnarray}
where $\br = \bH\bar{\y}$.
Since $\hat{\y}_{\mL}$ is constant, we have the following proposition.
\begin{prop}\label{prop:ordering}
At optimality, $\hat{y}_i \geq \hat{y}_j$ if $r_i > r_j$, $i,j\in
\mU$.
\end{prop}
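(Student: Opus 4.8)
The plan is to show the claim by a simple exchange argument: if some pair $i,j \in \mU$ had $r_i > r_j$ but $\hat{y}_i < \hat{y}_j$ at optimality, then swapping the two labels keeps $\hat{\y}$ feasible while strictly increasing the objective, a contradiction. First I would note that in $\{\pm 1\}$, the only way to have $\hat{y}_i < \hat{y}_j$ is $\hat{y}_i = -1$ and $\hat{y}_j = +1$. Consider the alternative assignment $\hat{\y}'$ obtained from $\hat{\y}$ by setting $\hat{y}'_i = +1$, $\hat{y}'_j = -1$, and leaving all other coordinates (including all of $\hat{\y}_{\mL}$) unchanged.

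The key steps are then: (i) verify feasibility of $\hat{\y}'$ for the constraint set in Equation~(\ref{eq:toy}); (ii) compute the change in the objective $\br'\hat{\y}$. For (i), the labeled block is untouched so $\hat{\y}'_{\mL} = \y_{\mL}$ still holds, each swapped entry remains in $\{\pm 1\}$, and the balance constraint $\1'\hat{\y}_{\mU} = \frac{N-l}{l}\1'\y_{\mL}$ is preserved because the swap changes one $+1$ to $-1$ and one $-1$ to $+1$, leaving $\1'\hat{\y}_{\mU}$ unchanged. For (ii), since only coordinates $i$ and $j$ of $\hat{\y}$ change, $\br'\hat{\y}' - \br'\hat{\y} = r_i(\hat{y}'_i - \hat{y}_i) + r_j(\hat{y}'_j - \hat{y}_j) = r_i(1-(-1)) + r_j((-1)-1) = 2(r_i - r_j) > 0$ by the assumption $r_i > r_j$. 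This contradicts optimality of $\hat{\y}$, so no such pair exists, i.e. $r_i > r_j$ forces $\hat{y}_i \geq \hat{y}_j$ for $i,j \in \mU$.

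There is essentially no hard part here — the statement is just the observation that a linear objective over a label vector subject only to a fixed-sum (balance) constraint is maximized by a ``threshold'' assignment, and the exchange argument makes this precise. The only point requiring a moment of care is confirming that the swap respects the balance constraint, which it does precisely because swapping a $+1$ with a $-1$ is sum-preserving; if instead one tried to flip a single label the balance constraint would be violated, which is why the argument must move labels in pairs. One should also remark that this is exactly why Equation~(\ref{eq:a_violated_y}) can be solved efficiently by sorting the entries of $\br_{\mU}$: Proposition~\ref{prop:ordering} says the optimal $\hat{\y}_{\mU}$ assigns $+1$ to the examples with the largest $r_i$ and $-1$ to the rest, with the split point determined by the balance constraint, so only the value of $\1'\y_{\mL}$ and a sort of $\br_{\mU}$ are needed.
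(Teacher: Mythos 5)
Your proof is correct and follows essentially the same exchange argument as the paper: swapping a pair with $r_i > r_j$ but $\hat{y}_i < \hat{y}_j$ increases the objective by $2(r_i - r_j) > 0$, contradicting optimality. Your explicit check that the swap preserves the balance constraint (and hence feasibility) is a small but welcome addition that the paper leaves implicit.
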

\begin{proof} Assume, to the contrary, that the optimal $\hat{\y}$ does not have the same sorted order as $\br$. Then, there are two label vectors $\hat{y}_i$ and $\hat{y}_j$, with $r_i > r_j$ but $\hat{y}_i < \hat{y}_j$. Then $r_i \hat{y}_i + r_j \hat{y}_j < r_i \hat{y}_j + r_j \hat{y}_i$ as $(r_i - r_j)(\hat{y}_i - \hat{y}_j) < 0$. Thus,
$\hat{\y}$ is not optimal, a contradiction.
\end{proof}

Thus, with Proposition~\ref{prop:ordering}, we can solve Equation~(\ref{eq:toy}) by first
sorting in ascending order.
The label
assignment of $\hat{y}_i$'s aligns with the sorted values of $r_i$'s for $i \in \mU$. To satisfy the balance constraint $ \frac{\1'\hat{\y}_{\mU}}{N-l} = \frac{\1'{\y}_{\mL}}{l}$, the first $ \left\lceil \frac{1}{2}\left((N-l)(1-\frac{1}{l}1'{\y}_{\mL}) \right) \right\rceil$ of $\hat{y}_i$'s are assigned
$-1$, while the last $(N-l)-\left\lceil \frac{1}{2}\left((N-l)(1-\frac{1}{l}1'{\y}_{\mL}) \right) \right\rceil$
of them are assigned 1. Therefore, the label assignment in problem Equation~(\ref{eq:toy}) can be determined exactly and efficiently by sorting.

To find a violated label, we first get the $\bar{\y} \in \C$, which takes $O(N^2)$ (resp. $O(N)$) time when a nonlinear (resp. linear)\footnote{When the linear kernel is used, Equation~(\ref{eq:mostviolate}) can be rewritten as $\max\limits_{\hat{\y}\in \C} \;\; (\ba\odot \hat{\y})'\X'\X(\ba\odot \hat{\y})$,
where $\X=[\x_1,\dots,\x_N]$. Hence, one can first compute $\o=\X(\ba \odot \hat{\y})$ and
then compute $\o'\o$. This takes a total of $O(N)$ time. A similar trick can be used in checking if
$\y^{*}$ is a violated label assignment.} kernel is used; next we obtain the $\y^{*}$ in Equation~(\ref{eq:a_violated_y}), which takes $O(N\log N)$ time; and finally check if $\y^{*}$ is a violated label assignment using Proposition~\ref{prop:violate}, which takes $O(N^2)$ (resp. $O(N)$) time for a nonlinear (resp. linear)
kernel. In total, this takes $O(N^2)$ (resp. $O(N\log N)$) time for nonlinear (resp. linear) kernel.
Therefore, our proposal is computationally efficient.

Finally, after finishing the training process, we use $f(\x) = \sum_{t=1}^{T}\w_t'\phi(\x)$ as the prediction function. Algorithm \ref{alg:SSL} summarizes the pseudocode of \lgsvm\ for semi-supervised learning.

\begin{algorithm}[htbp]
\caption{\lgsvm\ for semi-supervised learning.}
\begin{algorithmic}[1]
\STATE Initialize $\hat{\y}$ and $\C = \emptyset$.
\REPEAT
\STATE Update $\C \leftarrow \{{\y}^*\} \bigcup \C$.
\STATE Obtain the optimal $\{\bmu,\bW\}$ or $\ba$ from Equation~(\ref{eq:ssl_solve_beta}).
\STATE Find the optimal solution ${\y}^{*}$ of Equation~(\ref{eq:a_violated_y}).
\UNTIL{ $G(\ba,\y^{*}) > \min_{\y \in \C}G(\ba,\y) - \eps$ or the decrease of objective value is smaller than a threshold.}
\STATE Output $f(\x) = \sum_{t=1}^{T}\w_t'\phi(\x)$ as our prediction function.
\end{algorithmic}
\label{alg:SSL}
\end{algorithm}

\subsection{Multi-Instance Learning}
\label{subsec:mil}

In this section, we consider the second weakly labeled learning problem, namely, multi-instance learning (MIL), where examples are bags containing multiple instances. More formally, we have a data set $\md = \{\bB_i,y_i\}_{i=1}^{m}$, where $\bB_i = \{\x_{i,1},\dots,\x_{i,m_i}\}$ is the input bag, $y_i \in \{\pm 1\}$ is the output and $m$ is the number of bags. Without loss of generality, we assume that the positive bags are ordered before the negative bags, that is, $y_i = 1$ for all $1 \leq i \leq p$ and $-1$ otherwise. Here, $p$ and $m-p$ are the numbers of positive and negative bags, respectively. In traditional MIL, a bag is labeled positive if it contains at least one key (or positive) instance, and negative otherwise. Thus, we only have the bag labels available, while the instance labels are only implicitly known.

Identification of the key instances from positive bags can be very useful in CBIR. Specifically, in CBIR, the whole image (bag) can be represented by multiple semantic regions (instances). Explicit identification of the regions
of interest (ROIs) can help the user in recognizing images he/she wants quickly especially when the system returns a large amount of images. Consequently, the problem of determining whether a region is ROI can be posed as finding the key instances in MIL.

Traditional MIL implies that the label of a bag is determined by its most representative key instance,
that is, $f(\bB_i) = \max\{f(\x_{i,1}),\cdots, f(\x_{i,m_i})\}$. Let $\Omega=\frac{1}{2}\|\w\|_2^{2}$ and $\ell_{f}(\md)$ be the sum of hinge losses on the bags, Equation~(\ref{eq:weakly labeled}) then leads to the MI-SVM proposed in \citet{andrews2003svm}:
\begin{eqnarray}
\min\limits_{\w, \bxi} && \frac{1}{2}||\w||_2^2 + C_1 \sum_{i=1}^{p} \xi_i + C_2 \sum_{i=p+1}^{m} \xi_i
\label{eq:MIL_primal}\\
\mbox{s.t.} && {y}_i \max_{1 \leq j \leq m_i}\w'\phi(\x_{i,j}) \geq 1 - \xi_i, \;\; \xi_i \geq 0, \;\; i = 1,
\dots, m. \nonumber
\end{eqnarray}
Here, $C_1$ and $C_2$ trade off the model complexity and empirical losses on the positive and negative bags, respectively.

For a positive bag $\bB_i$, we use  the binary vector $\dd_i = [d_{i,1},\cdots, d_{i,m_i}]' \in \{0,1\}^{m_i}$ to
indicate which instance in $\bB_i$ is its key instance. Following the traditional MIL setup, we
assume that each positive bag has only one key instance,\footnote{Sometimes, one can allow for more than one key instances in a positive bag \citep{wang2008app,xu2004lra,Zhou:Zhang2007nips06,zhou2012multi}. The proposed method can be extended to this case by setting $\sum_{j=1}^{m_i}d_{i,j} = v$, where $v$ is the known
number of key instances.} and so $\sum_{j=1}^{m_i}d_{i,j} = 1$. In the following,
let
$\ss= [\dd_1; \dots; \dd_p]$, and $\D$ be its domain. Moreover, note that $\max_{1\leq j \leq m_i} \w'\phi(\x_{i,j})$ in Equation~(\ref{eq:MIL_primal}) can be written as $\max_{\dd_i} \sum_{j=1}^{m_i}d_{i,j}\w'\phi(\x_{i,j})$.

For a negative bag $\bB_i$, all its instances are negative and the
corresponding constraint Equation~(\ref{eq:MIL_primal}) can be replaced by
$-\w'\phi(\x_{i,j}) \geq 1 - \xi_i$ for every instance $\x_{i,j}$ in $\bB_i$.
Moreover, we relax the problem by allowing the slack variables $\xi_i$'s to be
different for different instances in $\bB_i$. This leads to a set of slack
variables $\{\xi_{s(i,j)}\}_{i=p+1,\dots,m;j=1,\dots,m_i}$, where the indexing
function $s(i,j)=J_{i-1}-J_{p}+j+p$ ranges from $p+1$ to $q =N-J_p+p$ and $J_i =
\sum_{t=1}^{i} m_t$ ($J_0$ is set to 0).
Combining all these together, Equation~(\ref{eq:MIL_primal}) can be rewritten as
\begin{eqnarray*}
& \min\limits_{\ss\in\Delta}\min\limits_{\w, \bxi} &
\frac{1}{2}||\w||_2^2 + C_1 \sum_{i=1}^{p}\xi_i  + C_2 \sum_{i=p+1}^{m}\sum_{j=1}^{m_i}\xi_{s(i,j)} \\
& \mbox{s.t.} &  \sum_{j=1}^{m_i}\w' d_{i,j}
\phi(\x_{i,j}) \geq 1 - \xi_i, \;\; \xi_i \geq 0, \;\; i  = 1,\dots, p, \nonumber\\
&&  - \w'\phi(\x_{i,j}) \geq 1 - \xi_{s(i,j)}, \;\; \xi_{s(i,j)} \geq 0, \;\; i = p+1,\dots,m, j=1,\dots,m_i. \nonumber
\end{eqnarray*}
The inner minimization problem is usually written in its dual, as
\begin{eqnarray}
\max_{\ba \in \A} &  G(\ba,\ss) = \1'\ba - \frac{1}{2} (\ba \odot
\hat{\y})'\Big (\K^{\ss} \Big)(\ba \odot \hat{\y}),
\label{eq:dual-mil}
\end{eqnarray}
where $\ba=[\a_1,\dots,\a_{q}]' \in \R^{q}$ is the vector of dual variables, $\A = \{\ba \; | \;
C_1 \geq \a_i \geq 0, C_2 \geq \a_j \geq 0, i = 1,\dots,p; j = p+1, \dots,
q\}$, $\hat{\y} = [\1_p,-\1_{q-p}] \in \R^q$, $\K^{\ss} \in \R^ {q \times q}$ is the kernel matrix where
$\K^{\ss}_{ij} = (\bpsi^{\ss}_i)'(\bpsi^{\ss}_j)$ with
\begin{equation}\label{eq:phi}
\bpsi^{\ss}_i = \left\{ \begin{array}{ll}
\sum_{j=1}^{m_i}d_{i,j}\phi(\x_{i,j}) & i=1,\dots,p,  \\
\phi(\x_{s(i,j)}) & i=p+1,\dots,m; j=1,\dots,m_i.
\end{array} \right.
\end{equation}
Thus, Equation~(\ref{eq:dual-mil}) is a mixed-integer  programming problem. With Proposition 1, we
have
\begin{align}
 \min_{\bmu\in \M} \max_{\ba\in\A} \;\;   \1'\ba - \frac{1}{2} (\ba \odot
\hat{\y})'\sum_{t:\ss_t \in \D}\Big (\mu_t \K^{\ss_t} \Big)(\ba \odot \hat{\y}), \label{eq:lgmil_framework}
\end{align}
which is a convex relaxation of Equation~(\ref{eq:dual-mil}).

\subsubsection{Algorithm}

Similar to semi-supervised learning, the cutting plane algorithm is used for
solving Equation~(\ref{eq:lgmil_framework}). Recall that there are two issues in the
use of cutting-plane algorithms, namely, efficient multiple label-kernel
learning and the finding of a violated label assignment. For the first issue,
suppose that the current $\C$ is $\{\ss_1,\dots,\ss_T\}$, the MKL problem in Equation~(\ref{eq:lgmil_framework}) corresponds to the following primal problem:
\begin{eqnarray}\label{eq:mil_mkl}
\min\limits_{\bmu\in\M,\bW=[\w_1;\dots;\w_T],\bxi} &&
\frac{1}{2}\sum_{t=1}^{T}\frac{1}{\mu_{t}} ||\w_t||^{2} +
C_1 \sum_{i=1}^p \xi_i + C_2 \sum_{i=p+1}^{m}\sum_{j=1}^{m_i} \xi_{s(i,j)} \\
\mbox{s.t.} && \sum_{t=1}^{T}\left(\sum_{j=1}^{m_i}\w_t'
d_{i,j}^{t}\phi(\x_{i,j})\right) \geq 1 - \xi_i, \; \xi_i \geq 0, \; i  = 1,\dots, p, \nonumber\\
&& \!\!\!\!\!\!\!\!\!\!\!\!\!\!\!\! - \sum_{t=1}^{T}\w_t'\phi(\x_{s(i,j)}) \geq 1 - \xi_{s(i,j)}, \; \xi_{s(i,j)} \geq 0, \; i= p+1,\dots,m; \; j = 1, \dots, m_i. \nonumber
\end{eqnarray}
Therefore, we can still apply the MKLGL algorithm to solve MKL problem in
Equation~(\ref{eq:lgmil_framework}) efficiently. As for the second issue, one needs to solve the following problem:
\begin{align*}
\min_{\ss \in \D} & \;\;\;\;  \1'\ba - \frac{1}{2} (\ba \odot
\hat{\y})'\K^{\ss} (\ba \odot \hat{\y}),
\end{align*}
which is equivalent to
\begin{equation*}
\max\limits_{\ss\in \D} \;\;\; \sum\nolimits_{i,j=1}^{q}\alpha_i \alpha_j
\hat{y}_i \hat{y}_j (\bpsi^{\ss}_i)'(\bpsi^{\ss}_j).
\end{equation*}
According to the definition of $\bpsi$ in Equation~(\ref{eq:phi}), this can be rewritten as
\begin{align*}
\max\limits_{\ss\in \D} \; \left\|\sum_{i=1}^{p}\alpha_i
\sum_{j=1}^{m_i}d_{i,j}\phi(\x_{i,j}) \! - \!\!\!\!
\sum_{i=p+1}^{m}\sum_{j=1}^{m_i}\alpha_{s(i,j)} \phi(\x_{s(i,j)})\right\|^2,
\end{align*}
which can be reformulated as
\begin{align}\label{eq:concave_QP}
\max\limits_{{\ss}\in \D} \;\;\;\; \ss'\bH\ss + \bt'\ss,
\end{align}
where $\bH \in \R^{J_{p} \times J_{p}}$ and $\bt \in \R^{J_p}$. Let $v(i,j) = J_{i-1}+j$, $i \in 1,\dots, p, j \in 1, \dots, m_i$, we have $H_{v(i,j), v(\hat{i},\hat{j})} = \alpha_{i}\alpha_{\hat{i}}\phi(\x_{i,j})'\phi(\x_{\hat{i},\hat{j}})$ and $\tau_{v(i,j)} = -2\alpha_{i}\phi(\x_{i,j})'(\sum_{i=p+1}^{m}\sum_{j=1}^{m_i}\alpha_{s(i,j)} \phi(\x_{s(i,j)}))$. It is easy to verify that $\bH$ is psd.

Equation~(\ref{eq:concave_QP}) is also a concave QP whose globally optimal solution, or equivalently the most violated $\ss$, is intractable in general. In the following, we adapt a variant of the simple yet efficient method proposed in Section~\ref{sec:violate} to find a violated $\ss$. Let $\bar{\ss} \in \C$, where $\C =\{{\ss}_1,\dots,{\ss}_T\}$,
be the following suboptimal solution of Equation~(\ref{eq:concave_QP}):
$\bar{\ss} = \argmax_{\ss \in \C} \ss'\bH\ss + \bt'\ss$.
Let $\ss^{*}$ be an optimal solution of the
following optimization problem
\begin{align}\label{eq:a_violated_d}
{\ss}^{*} = \argmax\limits_{\ss\in \D} \ss'\bH\bar{\ss} + \frac{\bt'\ss}{2}.
\end{align}

\begin{prop}
${\ss}^{*}$ is a violated label assignment when $({\ss}^{*})'\bH \bar{\ss}+  \frac{\bt'\ss^{*}}{2}> \bar{\ss}'\bH \bar{\ss}+\frac{\bt'\bar{\ss}}{2}$.
\end{prop}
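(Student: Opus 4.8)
The plan is to mimic exactly the argument used for Proposition~\ref{prop:violate}, adapting it to the quadratic-plus-linear objective $\phi(\ss) := \ss'\bH\ss + \bt'\ss$ of Equation~(\ref{eq:concave_QP}). First I would note that the hypothesis $(\ss^{*})'\bH\bar{\ss} + \tfrac12\bt'\ss^{*} > \bar{\ss}'\bH\bar{\ss} + \tfrac12\bt'\bar{\ss}$ immediately forces $\ss^{*} \neq \bar{\ss}$, since equality of the two arguments would hold if $\ss^{*} = \bar{\ss}$. The goal is to show that $\phi(\ss^{*}) > \phi(\bar{\ss})$, i.e.\ that $\ss^{*}$ strictly improves the objective over the current best element of the working set $\C$, which is precisely the definition of a violated label assignment (an $\ss \in \D$ whose objective value exceeds $\min_{\ss' \in \C}$ of the corresponding $G$ values, equivalently exceeds $\phi(\bar{\ss})$ after the sign flip).

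The key algebraic step is the psd-ness of $\bH$ (established just before the proposition): $(\ss^{*} - \bar{\ss})'\bH(\ss^{*} - \bar{\ss}) \geq 0$, which expands to
\[
(\ss^{*})'\bH\ss^{*} + \bar{\ss}'\bH\bar{\ss} \geq 2(\ss^{*})'\bH\bar{\ss}.
\]
I would then argue by contradiction: suppose $\phi(\ss^{*}) \leq \phi(\bar{\ss})$, that is $(\ss^{*})'\bH\ss^{*} + \bt'\ss^{*} \leq \bar{\ss}'\bH\bar{\ss} + \bt'\bar{\ss}$. Combining this with the psd inequality above — adding $\bt'\ss^{*}$ to both sides of the psd inequality and then using the contradiction hypothesis to bound $(\ss^{*})'\bH\ss^{*} + \bt'\ss^{*}$ — yields
\[
2(\ss^{*})'\bH\bar{\ss} + \bt'\ss^{*} \leq (\ss^{*})'\bH\ss^{*} + \bar{\ss}'\bH\bar{\ss} + \bt'\ss^{*} \leq 2\bar{\ss}'\bH\bar{\ss} + \bt'\bar{\ss},
\]
and dividing by $2$ gives $(\ss^{*})'\bH\bar{\ss} + \tfrac12\bt'\ss^{*} \leq \bar{\ss}'\bH\bar{\ss} + \tfrac12\bt'\bar{\ss}$, contradicting the hypothesis. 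Hence $\phi(\ss^{*}) > \phi(\bar{\ss})$, so $\ss^{*}$ is violated.

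I do not expect any genuine obstacle here — this is the same two-line convexity trick as in Proposition~\ref{prop:violate}, and the only care needed is bookkeeping the factor of $\tfrac12$ on the linear term $\bt'\ss$ that appears in the surrogate objective of Equation~(\ref{eq:a_violated_d}) but with coefficient $1$ in the true objective of Equation~(\ref{eq:concave_QP}); that halving is exactly what makes the psd inequality line up with the hypothesis. One should also double-check the direction of the "violated" inequality against Algorithm~\ref{alg:LGSVM}'s termination test (the sign flip between $G$ and $\ss'\bH\ss + \bt'\ss$), but this is a notational verification rather than a mathematical difficulty. The whole proof will be four or five lines.
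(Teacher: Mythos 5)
Your proof is correct and is essentially the same argument as the paper's: assume $(\ss^{*})'\bH\ss^{*}+\bt'\ss^{*}\leq\bar{\ss}'\bH\bar{\ss}+\bt'\bar{\ss}$ and combine it with $(\ss^{*}-\bar{\ss})'\bH(\ss^{*}-\bar{\ss})\geq 0$ to contradict the strict hypothesis, the factor $\tfrac12$ on $\bt$ lining up exactly as you describe. The paper merely arranges the same inequalities in a slightly different order, so there is no substantive difference.
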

\begin{proof}
From $({\ss}^{*})'\bH \bar{\ss}+  \frac{\bt'\ss^{*}}{2}> \bar{\ss}'\bH \bar{\ss}+\frac{\bt'\bar{\ss}}{2}$, we have ${\ss}^{*} \neq \bar{\ss}$.
Suppose that $({\ss}^{*})'\bH {\ss}^{*}+\bt'\ss^{*} \leq
\bar{\ss}'\bH\bar{\ss}+\bt'\bar{\ss}$. Then
\begin{eqnarray*}
\lefteqn{\Big(({\ss}^{*})'\bH {\ss}^{*} + \bt'\ss^{*}\Big)+
\Big(\bar{\ss}'\bH\bar{\ss}+\bt'\bar{\ss}\Big)-\Big[2({\ss}^{*})'\bH
\bar{\ss}+\bt'\bar{\ss}+\bt'\ss^{*} \Big]} \\
& \leq &
2\Big[\bar{\ss}'\bH\bar{\ss}+\bt'\bar{\ss} - ({\ss}^{*})'\bH
\bar{\ss}-\frac{\bt'\bar{\ss}}{2}-\frac{\bt'\ss^{*}}{2} \Big] < 0,
\end{eqnarray*}
which contradicts
\begin{eqnarray*}
\Big(({\ss}^{*})'\bH {\ss}^{*} + \bt'\ss^{*}\Big)+
\Big(\bar{\ss}'\bH\bar{\ss}+\bt'\bar{\ss}\Big)-\Big[2({\ss}^{*})'\bH
\bar{\ss}+\bt'\bar{\ss}+\bt'\ss^{*} \Big]
& = & ({\ss}^{*} - \bar{\ss})'\bH
({\ss}^{*}- \bar{\ss}) \\
& \geq & 0.
\end{eqnarray*}
So $({\ss}^{*})'\bH {\ss}^{*}+\bt'(\ss^{*}) > \bar{\ss}'\bH\bar{\ss}+\bt'\bar{\ss}$,
which indicates that ${\ss}^{*}$ is a violated label assignment.
\end{proof}

Similar to Equation~(\ref{eq:a_violated_y}), Equation~(\ref{eq:a_violated_d}) is also a linear integer program but with different constraints. We now show that the optimal ${\ss}^*$ in Equation~(\ref{eq:a_violated_d}) can still be solved via sorting. Notice that Equation~(\ref{eq:a_violated_d}) can be reformulated as
\begin{eqnarray}\label{eq:toy3}
\max\limits_{\ss } && \br'{\ss} \\
\mbox{s.t}. && \1'\dd_i = 1, \dd_i \in \{0,1\}^{m_i}, i = 1, \dots, p, \nonumber
\end{eqnarray}
where $\br = \bH \bar{\ss}+\frac{\bt}{2}$. As can be seen, $\dd_i$'s are decoupled in both the objective and constraints of Equation~(\ref{eq:toy3}). Therefore, one can obtain its optimal solution by solving the $p$ subproblems individually
\begin{eqnarray*}
\max\limits_{\dd_i} && \sum_{j=1}^{m_i} r_{J_{i-1}+j} d_{i,j} \\
\mbox{s.t}. && \1'\dd_i = 1, \dd_i \in \{0,1\}^{m_i}. \nonumber
\end{eqnarray*}
It is evident that the optimal $\dd_i$ can be obtained by assigning
$d_{i,\hat{i}}=1$,
where $\hat{i}$ is the
index of the largest element among $[r_{J_{i-1}+1},\dots,r_{J_{i-1}+m_i}]$,
and the rest to zero. Similar to semi-supervised learning, the complexity to
find a violated $\ss$ scales as $O(N^2)$ (resp. $O(N\log N)$) when the nonlinear (resp.
linear) kernel is used, and so is computationally efficient.

On prediction, each instance $\x$ can be treated as a bag,  and its output from the \lgsvm\
is given by $f(\x) = \sum_{t=1}^{T}\w_t'\phi(\x)$. Algorithm \ref{alg:MIL}
summarizes the pseudo codes of \lgsvm\ for multi-instance learning.

\begin{algorithm}[htbp]
\caption{\lgsvm\ for multi-instance learning.}
\begin{algorithmic}[1]
\STATE Initialize ${\ss^{*}}$ and $\C = \emptyset$.
\REPEAT
\STATE Update $\C \leftarrow \{{\ss}^{*}\} \bigcup \C$.
\STATE Obtain the optimal $\{\bmu,\bW\}$ or $\ba$ from Equation~(\ref{eq:mil_mkl}).
\STATE Find the optimal solution ${\ss}^{*}$ of Equation~(\ref{eq:a_violated_d}).
\UNTIL{$G(\ba,\ss^{*}) > \min_{\ss \in \C} G(\ba,\ss) - \eps$ or the decrease of objective value is smaller than a threshold. }
\STATE Output $f(\x) = \sum_{t=1}^{T}\w_t'\phi(\x)$ as the prediction function.
\end{algorithmic}
\label{alg:MIL}
\end{algorithm}

\subsection{Clustering}
\label{subsec:clustering}

In this section, we consider the third weakly labeled learning task, namely,
clustering, where all the class labels are unknown. Similar to semi-supervised
learning, one can obtain a trivially ``optimal'' solution with infinite
margin by assigning all patterns to the same cluster. To prevent such a useless
solution, \citet{xu2005mmc} introduced a class balance constraint
\[ -\b \leq \1'\hat{\y}\leq \b, \]
where $\hat{\y}=[\hat{y}_1,\dots, \hat{y}_N]'$ is the vector of unknown labels, and $\beta\geq 0$ is a user-defined constant controlling the class imbalance.

Let $\Omega(f) = \frac{1}{2}\|\w\|_2^2$ and $\ell_{f}(\md)$ be the sum of hinge
losses on the individual examples. Equation~(\ref{eq:weakly labeled}) then leads to
\begin{eqnarray}
\min_{\hat{\y} \in \B}\min\limits_{\w, \bxi} && \frac{1}{2}||\w||_2^2 + C \sum_{i=1}^{N} \xi_i
\label{eq:mmc_primal}\\
\mbox{s.t} && \hat{y}_i \w'\phi(\x_i) \geq 1 - \xi_i, \;\; \xi_i \geq 0, \;\; i = 1
\dots, N, \nonumber
\end{eqnarray}
where $\B = \{\hat{\y} \;|\; \hat{y}_i \in \{+1,-1\}, i=1,\dots, N; -\b \leq \1'\hat{\y}\leq \b\}$.
The inner minimization problem is usually rewritten in its dual
\begin{eqnarray}
\min_{\hat{\y} \in \B} \max_{\ba} && \sum_{i=1}^{N}\alpha_i - \frac{1}{2}
\sum_{i,j=1}^{N}\a_i\a_j\left(\hat{y}_i\hat{y}_j \phi(\x_i)'\phi(\x_j)\right)
\label{eq:mmc_dual} \\
\mbox{s.t.}&& C \geq \a_i\geq 0, \;\; i = 1 \dots, N, \nonumber
\end{eqnarray}
where $\alpha_i$ is the dual variable for each inequality constraint in Equation~(\ref{eq:mmc_primal}). Let $\ba=[\a_1,\cdots,\a_N]'$ be the vector of dual variables, and $\A=\{\ba \; \big| \; C \1 \geq \ba\geq \0\}$. Then Equation~(\ref{eq:mmc_dual}) can be rewritten in matrix form as
\begin{eqnarray} \label{eq:mmc_dual3}
&\min_{\hat{\y} \in \B} \max_{\ba\in\A} &  G(\ba,\hat{\y}) := \1'\ba - \frac{1}{2}\ba'\Big(\K\odot \hat{\y}\hat{\y}' \Big) \ba.
\end{eqnarray}
This, however, is still a mixed integer programming problem.

With Proposition 1, we have
\begin{align}
\min_{\bmu\in \M} \max_{\ba\in\A}  \;\; \1'\ba-
\frac{1}{2}\ba'\Big(\sum_{t:\hat{\y}_t\in\B}\mu_t\K\odot
\hat{\y}_t\hat{\y}_t' \Big) \ba \label{eq:mmc_dual_maxmin}
\end{align}
as a convex relaxation of Equation~(\ref{eq:mmc_dual3}). Note that $G(\ba,\hat{\y})$ can be reformulated by
$\bar{G}(\ba,\bM_{\y})=\1'\ba - \frac{1}{2}\ba'\Big(\K\odot \bM_{\y}
\Big) \ba$, where $\bar{G}$ is concave in $\ba$ and linear in $\bM_{\y}$. Hence, according to Theorem 1, \lgsvm\
is at least as tight as the SDP relaxation in \citet{xu2005mmc}.

\subsubsection{Algorithm}

The cutting plane algorithm can still be applied for clustering. Similar to semi-supervised learning, the MKL can be formulated as the following primal problem:
\begin{eqnarray}\label{eq:mmc_mkl}
\min\limits_{\bmu\in\M,\bW=[\w_1;\dots;\w_T],\bxi} &&
\frac{1}{2}\sum_{t=1}^{T}\frac{1}{\mu_{t}} ||\w_t||^{2} +
C \sum_{i=1}^N\xi_i \\
\mbox{s.t.} && \sum_{t=1}^{T}\hat{y}_{ti}\w_t'\phi(\x_i) \geq 1 -
\xi_i, \;\; \xi_i \geq 0, \;\;i = 1,\dots, N, \nonumber
\end{eqnarray}
and its dual is
\begin{eqnarray*}
\min\limits_{\bmu \in \M} \max\limits_{\ba \in \A} & \1'\ba-\frac{1}{2} \ba'\Big(\sum_{t=1}^{T}\mu_t
\K \odot \hat{\y}_t \hat{\y}_t' \Big)\ba,
\end{eqnarray*}
which is the same as Equation~(\ref{eq:mmc_dual_maxmin}). Therefore, MKLGL algorithm can still be applied for solving the MKL problem in Equation~(\ref{eq:mmc_dual_maxmin}) efficiently.

As for finding a violated label assignment, let $\bar{\y} \in \C$ be
\[
\bar{\y} = \mathop{\arg\max}\nolimits_{\hat{\y}\in \C} \hat{\y}'\bH\hat{\y},
\]
where $\bH = \K \odot (\ba \ba')$ is a positive semidefinite matrix.
Consider an optimal solution of the following optimization problem
\begin{align}\label{eq:a_violated_y_2}
{\y}^{*} = \argmax\limits_{\hat{\y}\in \B} \hat{\y}'\bH\bar{\y}.
\end{align}
With
Proposition~\ref{prop:violate},
we obtain that $\y^{*}$ is a violated label assignment
if $\bar{\y}'\bH{\y}^{*} \geq \bar{\y}'\bH\bar{\y}$.

Note that Equation~(\ref{eq:a_violated_y_2}) is a linear program for $\hat{\y}$ and can be formulated as
\begin{eqnarray}\label{eq:toy4}
\max\limits_{\hat{\y}} && \br'\hat{\y} \\
\mbox{s.t}. && -\beta \leq \hat{\y}'\1 \leq \beta, \hat{\y} \in
\{-1,+1\}^{N}, \nonumber
\end{eqnarray}
where $\br = \bH \bar{\y}$. From Proposition~\ref{prop:ordering},
we can solve
Equation~(\ref{eq:toy4}) by first sorting $r_i$'s in ascending order. The label assignment of
$\hat{y}_i$'s aligns with the sorted values of $r_i$'s. To satisfy the balance
constraint $-\beta\leq \1'\hat{\y}\leq \beta$, the first $\frac{N-\beta}{2}$ of
$\hat{y}_i$'s are assigned $-1$, the last $\frac{N-\beta}{2}$ of them are
assigned 1, and the rest $\hat{y}_i$'s are assigned $-1$ (resp. $1$) if the corresponded $r_i$'s are negative (resp. non-negative). It is easy to verify that such an assignment satisfies the balance constraint and the objective $\br'\hat{\y}$ is maximized.
Similar to
semi-supervised learning, the complexity to find a violated label scales as $O(N^2)$
(resp. $O(N\log N)$) when the nonlinear (resp. linear) kernel is used, and so is computationally efficient.
Finally, we use $f(\x) = \sum_{t=1}^{T}\w_t'\x$ as the
prediction function.
Algorithm \ref{alg:MMC} summarizes the pseudo codes of \lgsvm\ for clustering.

\begin{algorithm}[htbp]
\caption{\lgsvm\ for clustering.}
\begin{algorithmic}[1]
\STATE Initialize $\hat{\y}$ and $\C = \emptyset$.
\REPEAT
\STATE Update $\C \leftarrow \{{\y}^{*}\} \bigcup \C$.
\STATE Obtain the optimal $\{\bmu,\bW\}$ or $\ba$ from Equation~(\ref{eq:mmc_mkl}).
\STATE Find the optimal solution ${\y}^{*}$ of Equation~(\ref{eq:a_violated_y_2}).
\UNTIL{$G(\ba,\y^{*}) > \min_{\y \in \C} G(\ba,\y) - \eps$ or the decrease of objective value is smaller than a threshold. }
\STATE Output $f(\x) = \sum_{t=1}^{T}\w_t'\phi(\x)$ as the prediction function.
\end{algorithmic}
\label{alg:MMC}
\end{algorithm}

\section{Experiments}
\label{sec:expt}

In this section, comprehensive evaluations are performed to verify the effectiveness of the proposed \lgsvm. Experiments are conducted on all the three aforementioned weakly labeled learning tasks: semi-supervised learning (Section \ref{subsec:ssl-expt}), multi-instance learning (Section \ref{subsec:mil-expt}) and clustering (Section
\ref{subsec:clustering-expt}). For nonlinear kernel, the \lgsvm\ adapts
$\nu$-SVM with square hinge loss \citep{tsang2006cvm} and is implemented using the LIBSVM \citep{fan2005wss};
For linear kernel, it adapts standard SVM without offset and is implemented using the LIBLINEAR \citep{hsieh2008dcd}. Experiments are run on a 3.20GHz Intel Xeon(R)2 Duo PC running Windows 7 with 8GB main memory. For all the other methods that will be used for comparison, the default stopping criteria in the corresponding packages are used. For the \lgsvm, both the $\eps$ and stopping threshold
in Algorithm~\ref{alg:LGSVM} are set to $10^{-3}$.

\subsection{Semi-Supervised Learning}
\label{subsec:ssl-expt}

We first evaluate the \lgsvm\ on semi-supervised learning with a large
collection of real-world data sets. 16 UCI data sets, which cover
a wide range of properties, and 2 large-scale data
sets\footnote{Data sets can be found at \url{http://www.csie.ntu.edu.tw/~cjlin/libsvmtools/datasets/binary.html}.} are used.
Table~\ref{table:UCI} shows some statistics of these data sets.

\begin{table}[htbp]
\smallskip \small
\begin{center}
\begin{tabular}{lccc||lccccccc}
\hline\noalign{\smallskip}
& Data & \# Instances & \# Features & & Data & \# Instances & \# Features \\
\hline
1 & \textit{Echocardiogram}  & 132 & 8 & 10 &\textit{Clean1}  & 476 & 166   &  \\
2 & \textit{House} & 232 & 16  &  11  & \textit{Isolet} & 600 & 51 \\
3 & \textit{Heart} & 270 & 9  & 12  &\textit{Australian}  & 690 & 42  \\
4 & \textit{Heart-stalog}  & 270 & 13  & 13  &\textit{Diabetes}  & 768 & 8 \\
5 &\textit{Haberman}  & 306 & 14  & 14   &\textit{German}  & 1,000 & 59 \\
6 &\textit{LiveDiscorders}  & 345 & 6  & 15  &\textit{Krvskp}  & 3,196 & 36 \\
7 &\textit{Spectf}  & 349 & 44  & 16  & \textit{Sick}  & 3,772 & 31 \\
8 & \textit{Ionosphere}  & 351 & 34  & 17  & \textit{real-sim} & 72,309 & 20,958 \\
9 &\textit{House-votes}  & 435 & 16   & 18  &\textit{rcv1} & 677,399 & 47,236 \\
\noalign{\smallskip}\hline
\end{tabular}
\end{center}
\caption{Data sets used in the experiments.}\label{table:UCI}
\end{table}

\subsubsection{Small-Scale Experiments}
\label{sec:small}

For each UCI data set, $75\%$ of the examples
are randomly chosen for training, and the rest for testing.
We investigate
the performance of each approach with varying amount of labeled data
(namely, $5\%$, $10\%$ and $15\%$ of all the labeled data).
The whole
setup is repeated $30$ times and the average accuracies (with standard
deviations) on the test set are reported.

We compare \lgsvm\ with 1) the standard SVM (using labeled data only), and three
state-of-the-art semi-supervised SVMs (S$^3$VMs), namely 2) Transductive SVM (TSVM)\footnote{Transductive SVM can be found at \url{http://svmlight.joachims.org/}.}
\citep{Joachims1999}; 3)
Laplacian SVM (LapSVM)\footnote{Laplacian SVM can be found at \url{http://manifold.cs.uchicago.edu/manifold_regularization/software.html}.}
\citep{Belkin:Niyogi:Sindhwani2006};
and 4)
UniverSVM
(USVM)\footnote{UniverSVM can be found at \url{http://mloss.org/software/view/19/}.}
\citep{collobert2006lst}.
Note that TSVM and USVM adopt the same
objective as \lgsvm, but with different optimization strategies (local search
and constrained convex-concave procedure, respectively). LapSVM is another
S$^3$VM based on the manifold assumption
\citep{Belkin:Niyogi:Sindhwani2006}. The SDP-based
S$^3$VMs \citep{xu2005semi,de2006semi} are not compared, as they
do not converge after $3$ hours on even the smallest data set
(\textit{Echocardiogram}).

Parameters of the different methods are set as follows.
$C_1$ is fixed at 1 and $C_2$ is selected in the range
$\{0.001,0.005,0.01,0.05,0.1, 0.5,1\}$. The linear and Gaussian kernels
are used for all SVMs, where the width $\sigma$ of the Gaussian kernel
$k(\x,\hat{\x})=\exp(-||\x-\hat{\x}||^2/2\sigma^2)$ is picked from
$\{0.25\sqrt{\gamma},0.5\sqrt{\gamma}$
$,\sqrt{\gamma},2\sqrt{\gamma},4\sqrt{\gamma}\}$, with $\gamma$ being the
average distance between all instance pairs. The initial label assignment of
\lgsvm\ is obtained from the predictions of a standard SVM. For LapSVM,
the number of nearest neighbors
in the underlying data graph
is selected from $\{3,5,7,9\}$. All
parameters are determined by using the five-fold cross-validated accuracy.

Table~\ref{table:uci-ssl-5} shows the results
on the UCI data sets with $5\%$
labeled examples.
As can be seen, \lgsvm\ obtains highly
competitive performance with the other methods, and achieves the best
improvement against
SVM in terms of both the counts of ($\#\text{wins}- \#\text{loses}$) as well as
average accuracy. The Friedman test \citep{demvsar2006statistical}
shows that both \lgsvm\ and USVM perform significantly better
than SVM
at the 90\% confidence level, while TSVM and LapSVM do not.

As can be seen, there are cases where unlabeled data cannot help for TSVM, USVM and
\lgsvm. Besides the local minimum problem, another possible reason may be
that there are multiple large margin separators coinciding well with labeled data and the labeled examples are too few to provide a reliable selection for these separators \citep{li2011s4vm}. Moreover, overall, LapSVM cannot obtain good performance, which may be due
to that the manifold assumption does not hold on these data \citep{chapelle2006semi}.

Tables~\ref{table:uci-ssl-10} and \ref{table:uci-ssl-15} show the results on the UCI
data sets with $10\%$ and $15\%$ labeled examples, respectively. As
can be seen, as the number of labeled examples increases, SVM gets much better performance. As a result,
both TSVM and USVM cannot beat the SVM. On the other
hand, the Friedman test shows that \lgsvm\ still performs significantly better than SVM with $10\%$
labeled examples at the 90\% confidence level.
With $15\%$ labeled examples,
no S$^3$VM performs significantly better than SVM.

\begin{table}[t]
\begin{center}\scriptsize
\begin{tabular}{l|r|rrrrrccccc}
\hline\noalign{\smallskip}
Data & SVM  &  TSVM &   LapSVM &  USVM & \lgsvm  \\
\hline\noalign{\smallskip}
\textit{Echocardiogram} & 0.80 $\pm$ 0.07 (2.5) & 0.74 $\pm$ 0.08 (4) & 0.64 $\pm$ 0.22 (5) & \textbf{0.81 $\pm$ 0.06} (1) & 0.80 $\pm$ 0.07 (2.5) \\
\textit{House} & \textbf{0.90 $\pm$ 0.04} (3) & \textbf{0.90 $\pm$ 0.05} (3) & \textbf{0.90 $\pm$ 0.04} (3) & \textbf{0.90 $\pm$ 0.03} (3) & \textbf{0.90 $\pm$ 0.04} (3) \\
\textit{Heart} & 0.70 $\pm$ 0.08 (5) & 0.75 $\pm$ 0.08 (3) & 0.73 $\pm$ 0.09 (4) & 0.76 $\pm$ 0.07 (2) & \textbf{0.77 $\pm$ 0.08} (1) \\
\textit{Heart-statlog} & 0.73 $\pm$ 0.10 (4.5) & \textbf{0.75 $\pm$ 0.10} (1.5) & 0.74 $\pm$ 0.11 (3) & \textbf{0.75 $\pm$ 0.12} (1.5) & 0.73 $\pm$ 0.12 (4.5) \\
\textit{Haberman} & 0.65 $\pm$ 0.07 (3) & 0.61 $\pm$ 0.06 (4) & 0.57 $\pm$ 0.11 (5) & \textbf{0.75 $\pm$ 0.05} (1.5) & \textbf{0.75 $\pm$ 0.05} (1.5) \\
\textit{LiverDisorders} & 0.56 $\pm$ 0.05 (2) & 0.55 $\pm$ 0.05 (3.5) & 0.55 $\pm$ 0.05 (3.5) & \textbf{0.59 $\pm$ 0.05} (1) & 0.53 $\pm$ 0.07 (5) \\
\textit{Spectf} & 0.73 $\pm$ 0.05 (2) & 0.68 $\pm$ 0.10 (4) & 0.61 $\pm$ 0.08 (5) & \textbf{0.74 $\pm$ 0.05} (1) & 0.70 $\pm$ 0.07 (3) \\
\textit{Ionosphere} & 0.67 $\pm$ 0.06 (4) & \textbf{0.82 $\pm$ 0.11} (1) & 0.65 $\pm$ 0.05 (5) & 0.77 $\pm$ 0.07 (2) & 0.70 $\pm$ 0.08 (3) \\
\textit{House-votes} & 0.88 $\pm$ 0.03 (3) & \textbf{0.89 $\pm$ 0.05} (1.5) & 0.87 $\pm$ 0.03 (4) & 0.83 $\pm$ 0.03 (5) & \textbf{0.89 $\pm$ 0.03} (1.5) \\
\textit{Clean1} & 0.58 $\pm$ 0.06 (4) & 0.60 $\pm$ 0.08 (3) & 0.54 $\pm$ 0.05 (5) & \textbf{0.65 $\pm$ 0.05} (1) & 0.63 $\pm$ 0.07 (2) \\
\textit{Isolet} & 0.97 $\pm$ 0.02 (3) & \textbf{0.99 $\pm$ 0.01} (1) & 0.97 $\pm$ 0.02 (3) & 0.70 $\pm$ 0.09 (5) & 0.97 $\pm$ 0.02 (3) \\
\textit{Australian} & 0.79 $\pm$ 0.05 (4) & \textbf{0.82 $\pm$ 0.07} (1) & 0.78 $\pm$ 0.08 (5) & 0.80 $\pm$ 0.05 (3) & 0.81 $\pm$ 0.04 (2) \\
\textit{Diabetes} & 0.67 $\pm$ 0.04 (4) & 0.67 $\pm$ 0.04 (4) & 0.67 $\pm$ 0.04 (4) & \textbf{0.70 $\pm$ 0.03} (1) & 0.69 $\pm$ 0.03 (2)\\
\textit{German} & \textbf{0.70 $\pm$ 0.03} (2) & 0.69 $\pm$ 0.03 (4) & 0.62 $\pm$ 0.05 (5) & \textbf{0.70 $\pm$ 0.02} (2) & \textbf{0.70 $\pm$ 0.02} (2) \\
\textit{Krvskp} & 0.91 $\pm$ 0.02 (3.5) & \textbf{0.92 $\pm$ 0.03} (1.5) & 0.80 $\pm$ 0.02 (5) & 0.91 $\pm$ 0.03 (3.5) & \textbf{0.92 $\pm$ 0.02} (1.5) \\
\textit{Sick} & \textbf{0.94 $\pm$ 0.01} (2) & 0.89 $\pm$ 0.01 (5) & 0.90 $\pm$ 0.02 (4) & \textbf{0.94 $\pm$ 0.01} (2) & \textbf{0.94 $\pm$ 0.01} (2) \\
\noalign{\smallskip}\hline
\multicolumn{2}{c|}{SVM: win/tie/loss}  & 5/7/4 & 8/7/1  & 2/9/5  & \textbf{3/6/7}  \\
\noalign{\smallskip}\hline
ave. acc. & 0.763 & 0.767 & 0.723 & 0.770 & \textbf{0.778} \\
ave. rank &  3.2188  &  2.8125  &  4.2813  &  2.2188  &  2.4688 \\
\noalign{\smallskip}\hline
\end{tabular}
\end{center}\vspace{-4mm}
\caption{Accuracies on the various data sets with 5\% labeled examples. The
best performance on each data set is bolded. The win/tie/loss counts (paired
$t$-test at $95\%$ significance level) are listed. The method with the largest
number of (\#wins - \#losses) against SVM as well as the best average accuracy
is also highlighted. Number in parentheses denotes the ranking
(computed as in \citealt{demvsar2006statistical})
of each method on the data set.}\label{table:uci-ssl-5} \vspace{-0mm}
\end{table}

\begin{table}[h]
\smallskip
\begin{center}\scriptsize
\begin{tabular}{l|r|rrrrrccccc}
\hline\noalign{\smallskip}
Data & SVM  &  TSVM &   LapSVM &  USVM & \lgsvm  \\
\hline\noalign{\smallskip}
\textit{Echocardiogram} & 0.81 $\pm$ 0.05 (2.5) & 0.76 $\pm$ 0.12 (4) & 0.69 $\pm$ 0.14 (5) & \textbf{0.82 $\pm$ 0.05} (1) & 0.81 $\pm$ 0.05 (2.5) \\
\textit{House} & 0.90 $\pm$ 0.04 (2.5) & \textbf{0.92 $\pm$ 0.05} (1) & 0.89 $\pm$ 0.04 (4) & 0.83 $\pm$ 0.03 (5) & 0.90 $\pm$ 0.04 (2.5) \\
\textit{Heart} & 0.76 $\pm$ 0.05 (3.5) & 0.75 $\pm$ 0.05 (5) & 0.76 $\pm$ 0.06 (3.5) & \textbf{0.78 $\pm$ 0.05} (1.5) & \textbf{0.78 $\pm$ 0.04} (1.5)\\
\textit{Heart-statlog} & 0.79 $\pm$ 0.03 (4) & 0.74 $\pm$ 0.05 (5) & 0.80 $\pm$ 0.04 (2.5) & 0.80 $\pm$ 0.04 (2.5) & \textbf{0.81 $\pm$ 0.04} (1) \\
\textit{Haberman} & \textbf{0.75 $\pm$ 0.04} (2) & 0.60 $\pm$ 0.07 (4.5) & 0.60 $\pm$ 0.07 (4.5) & \textbf{0.75 $\pm$ 0.04} (2) & \textbf{0.75 $\pm$ 0.04} (2) \\
\textit{LiverDisorders} & \textbf{0.59 $\pm$ 0.06} (1) & 0.57 $\pm$ 0.05 (2.5) & 0.55 $\pm$ 0.06 (4) & 0.53 $\pm$ 0.06 (5) & 0.57 $\pm$ 0.05 (2.5)\\
\textit{Spectf} & 0.74 $\pm$ 0.05 (2) & \textbf{0.76 $\pm$ 0.06} (1) & 0.64 $\pm$ 0.06 (5) & 0.72 $\pm$ 0.06 (3.5) & 0.72 $\pm$ 0.07 (3.5) \\
\textit{Ionosphere} & 0.78 $\pm$ 0.07 (4) & \textbf{0.90 $\pm$ 0.04} (1) & 0.66 $\pm$ 0.06 (5) & 0.88 $\pm$ 0.05 (2) & 0.82 $\pm$ 0.05 (3) \\
\textit{House-votes} & \textbf{0.92 $\pm$ 0.03} (1.5) & 0.91 $\pm$ 0.03 (3.5) & 0.88 $\pm$ 0.04 (5) & 0.91 $\pm$ 0.03 (3.5) & \textbf{0.92 $\pm$ 0.03} (1.5)\\
\textit{Clean1} & 0.69 $\pm$ 0.05 (3.5) & 0.71 $\pm$ 0.05 (2) & 0.63 $\pm$ 0.07 (5) & \textbf{0.72 $\pm$ 0.05} (1) & 0.69 $\pm$ 0.04 (3.5) \\
\textit{Isolet} & 0.99 $\pm$ 0.01 (2.5) & \textbf{1.00 $\pm$ 0.01} (1) & 0.96 $\pm$ 0.02 (4) & 0.52 $\pm$ 0.03 (5) & 0.99 $\pm$ 0.01 (2.5) \\
\textit{Australian} & 0.81 $\pm$ 0.03 (5) & \textbf{0.84 $\pm$ 0.03} (1.5) & 0.82 $\pm$ 0.04 (4) & \textbf{0.84 $\pm$ 0.03} (1.5) & 0.83 $\pm$ 0.03 (3) \\
\textit{Diabetes} & 0.70 $\pm$ 0.03 (4.5) & 0.70 $\pm$ 0.05 (4.5) & 0.71 $\pm$ 0.04 (3) & 0.72 $\pm$ 0.03 (2) & \textbf{0.74 $\pm$ 0.03} (1) \\
\textit{German} & 0.67 $\pm$ 0.03 (3.5) & 0.67 $\pm$ 0.03 (3.5) & 0.66 $\pm$ 0.04 (5) & \textbf{0.70 $\pm$ 0.02} (1.5) & \textbf{0.70 $\pm$ 0.02} (1.5) \\
\textit{Krvskp} & 0.93 $\pm$ 0.01 (3) & 0.93 $\pm$ 0.01 (3) & 0.86 $\pm$ 0.04 (5) & 0.93 $\pm$ 0.01 (3) & \textbf{0.94 $\pm$ 0.01} (1) \\
\textit{Sick} & \textbf{0.93 $\pm$ 0.01} (2) & 0.89 $\pm$ 0.01 (5) & 0.92 $\pm$ 0.01 (4) & \textbf{0.93 $\pm$ 0.01} (2) & \textbf{0.93 $\pm$ 0.01} (2)\\
\noalign{\smallskip}\hline
\multicolumn{2}{c|}{SVM: win/tie/loss}  & 5/8/3 & 10/5/1  & 5/6/5  & \textbf{0/9/7}  \\
\noalign{\smallskip}\hline
avg. acc. & 0.799 & 0.789 & 0.753 & 0.774 & \textbf{0.807} \\
avg. rank &  2.9375  &  3.0000 &   4.2813  &  2.6250  &  2.1563 \\
\noalign{\smallskip}\hline
\end{tabular}
\end{center}\vspace{-3mm}
\caption{Accuracies on the various data sets with 10\% labeled examples.}\label{table:uci-ssl-10}
\end{table}

\begin{table}[h]
 \smallskip
\begin{center}\scriptsize
\begin{tabular}{l|r|rrrrrccccc}
\hline\noalign{\smallskip}
Data & SVM  &  TSVM &   LapSVM &  USVM & \lgsvm  \\
\hline\noalign{\smallskip}
\textit{echocardiogram} & 0.83 $\pm$ 0.04 (2.5) & 0.76 $\pm$ 0.07 (4) & 0.75 $\pm$ 0.08 (5) & \textbf{0.85 $\pm$ 0} (1) & 0.83 $\pm$ 0.04 (2.5) \\
\textit{house} & 0.92 $\pm$ 0.04 (2.5) & \textbf{0.94 $\pm$ 0.04} (1) & 0.83 $\pm$ 0.11 (5) & 0.91 $\pm$ 0.04 (4) & 0.92 $\pm$ 0.03 (2.5) \\
\textit{heart} & 0.78 $\pm$ 0.06 (3) & 0.78 $\pm$ 0.05 (3) & \textbf{0.79 $\pm$ 0.05} (1) & 0.78 $\pm$ 0.07 (3) & 0.78 $\pm$ 0.06 (3) \\
\textit{heart-statlog} & 0.76 $\pm$ 0.06 (2) & 0.74 $\pm$ 0.06 (4) & \textbf{0.79 $\pm$ 0.05} (1) & 0.73 $\pm$ 0.07 (5) & 0.75 $\pm$ 0.06 (3) \\
\textit{haberman} & 0.72 $\pm$ 0.03 (3) & 0.62 $\pm$ 0.07 (5) & 0.63 $\pm$ 0.11 (4) & \textbf{0.74 $\pm$ 0} (1.5) & \textbf{0.74 $\pm$ 0} (1.5) \\
\textit{liverDisorders} & \textbf{0.61 $\pm$ 0.05} (1) & 0.54 $\pm$ 0.06 (4) & 0.53 $\pm$ 0.07 (5) & 0.58 $\pm$ 0 (2) & 0.56 $\pm$ 0.06 (3) \\
\textit{spectf} & 0.77 $\pm$ 0.03 (2) & \textbf{0.79 $\pm$ 0.04} (1) & 0.6 $\pm$ 0.1 (5) & 0.74 $\pm$ 0 (4) & 0.75 $\pm$ 0.06 (3) \\
\textit{ionosphere} & 0.76 $\pm$ 0.04 (5) & \textbf{0.9 $\pm$ 0.04} (1) & 0.83 $\pm$ 0.04 (4) & 0.89 $\pm$ 0.04 (2) & 0.84 $\pm$ 0.03 (3) \\
\textit{house-votes} & \textbf{0.92 $\pm$ 0.02} (1.5) & \textbf{0.92 $\pm$ 0.03} (1.5) & 0.9 $\pm$ 0.03 (3) & 0.83 $\pm$ 0.03 (5) & 0.89 $\pm$ 0.02 (4)\\
\textit{clean1} & 0.71 $\pm$ 0.04 (4) & 0.74 $\pm$ 0.04 (2) & 0.63 $\pm$ 0.07 (5) & \textbf{0.76 $\pm$ 0.06} (1) & 0.72 $\pm$ 0.04 (3)\\
\textit{isolet} & 0.98 $\pm$ 0.01 (3.5) & \textbf{0.99 $\pm$ 0.01} (1.5) & 0.98 $\pm$ 0.01 (3.5) & 0.54 $\pm$ 0.02 (5) & \textbf{0.99 $\pm$ 0.01} (1.5) \\
\textit{australian} & \textbf{0.86 $\pm$ 0.02} (1.5) & 0.85 $\pm$ 0.03 (3) & 0.83 $\pm$ 0.02 (4.5) & 0.83 $\pm$ 0.03 (4.5) & \textbf{0.86 $\pm$ 0.03} (1.5)\\
\textit{diabetes} & \textbf{0.75 $\pm$ 0.03} (1.5) & 0.73 $\pm$ 0.02 (3.5) & 0.73 $\pm$ 0.03 (3.5) & 0.72 $\pm$ 0.04 (5) & \textbf{0.75 $\pm$ 0.03} (1.5) \\
\textit{german} & 0.71 $\pm$ 0.01 (2) & 0.7 $\pm$ 0.03 (3.5) & 0.68 $\pm$ 0.04 (5) & 0.7 $\pm$ 0.04 (3.5) & \textbf{0.72 $\pm$ 0.01} (1)\\
\textit{krvskp} & \textbf{0.95 $\pm$ 0.01} (1.5) & 0.93 $\pm$ 0.01 (4) & 0.91 $\pm$ 0.01 (5) & 0.94 $\pm$ 0.01 (3) & \textbf{0.95 $\pm$ 0.01} (1.5)\\
\textit{sick} & \textbf{0.94 $\pm$ 0} (2) & 0.9 $\pm$ 0.01 (4.5) & 0.9 $\pm$ 0.12 (4.5) & \textbf{0.94 $\pm$ 0} (2) & \textbf{0.94 $\pm$ 0} (2) \\
\noalign{\smallskip}\hline
\multicolumn{2}{c|}{SVM: win/tie/loss}  & 8/3/5 & 11/2/3  & 6/6/4  & \textbf{2/9/5}  \\
\noalign{\smallskip}\hline
avg. acc. & 0.809 & 0.801 & 0.771 & 0.780 & \textbf{0.811} \\
avg. rank &  2.4063  &  2.9063  &  4.0000  &  3.2188 &   2.3438 \\
\noalign{\smallskip}\hline
\end{tabular}
\end{center}\vspace{-3mm}
\caption{Accuracies on various data sets with 15\% labeled examples.}\label{table:uci-ssl-15}
\end{table}

Figure~\ref{fig:cputimes-UCI-ssl} compares the average CPU time of \lgsvm\ with the other S$^3$VMs different numbers of labeled examples.
As can be seen, TSVM is the slowest while USVM is the most efficient. \lgsvm\ is comparable to LapSVM.
Figure~\ref{fig:obj} shows the objective
values of \lgsvm\ on five representative UCI data sets.
We can observe that the number of
iterations is always fewer than 25.
As mentioned above, the SDP-based S$^3$VMs \citep{xu2005semi,de2006semi}, in contrast, cannot
converge in $3$ hours even on the smallest
data set \textit{Echocardiogram}. Hence, \lgsvm\ scales much better than these SDP-based approaches.

\begin{figure}[h]
\centering
\includegraphics[height=4.5 in]{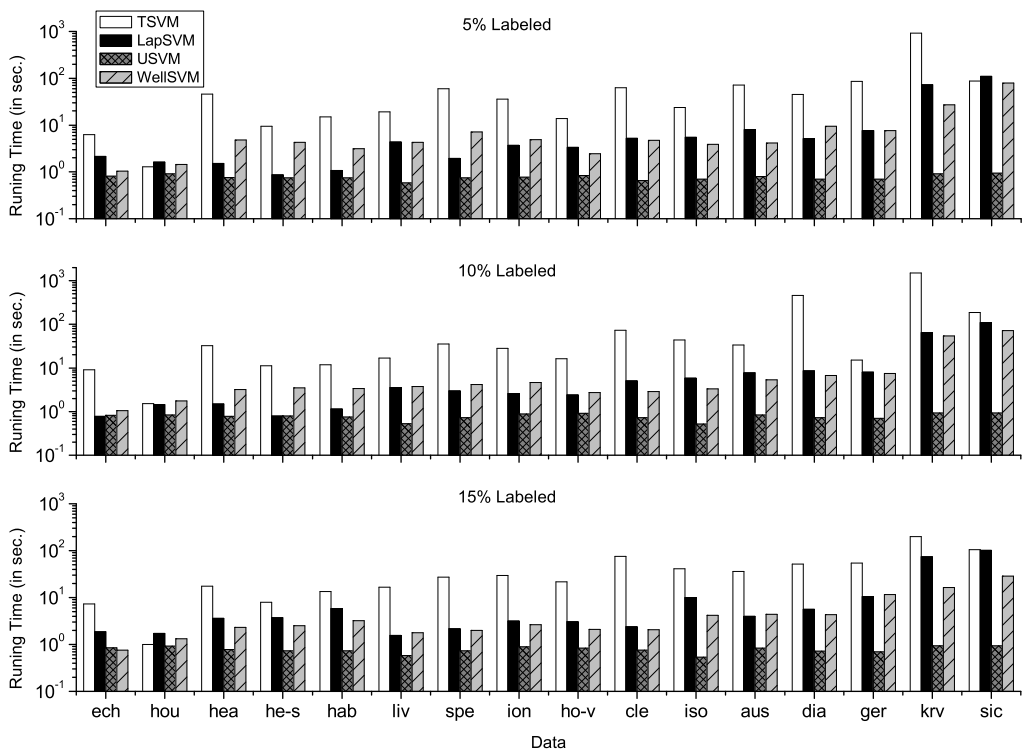}\vspace{-10mm}
\caption{CPU time on the UCI data sets.}\label{fig:cputimes-UCI-ssl}\vspace{-2mm}
\end{figure}

\begin{figure}
\centering
\includegraphics[width = 2.5 in]{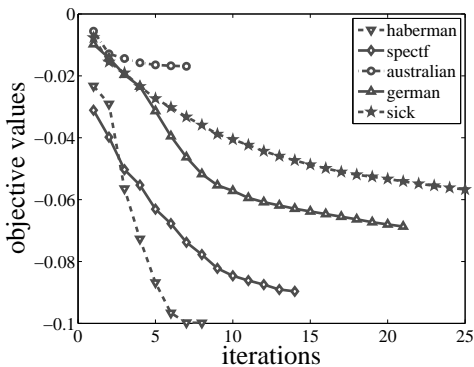}\vspace{-4mm}
\caption{Number of \lgsvm\ iterations on the UCI data sets.}\label{fig:obj}\vspace{-2mm}
\end{figure}

\subsubsection{Large-Scale Experiments}

In this section, we study the scalability of the proposed \lgsvm\ and other state-of-the-art approaches  on two large data sets, \textit{real-sim}
and \textit{RCV1}. The \textit{real-sim} data has 20,958 features and 72,309 instances. while the \textit{RCV1} data has 47,236 features and 677,399
instances. The linear kernel is used. The S$^3$VMs compared in Section~\ref{sec:small} are for general kernels and cannot converge in 24
hours. Hence, to conduct a fair comparison,
an efficient linear S$^3$VM solver, namely,
SVMlin\footnote{SVMlin can be found at \url{http://vikas.sindhwani.org/svmlin.html}.}
using deterministic annealing
\citep{sindhwani2006large}, is employed. All the parameters are determined in the same manner as in
Section~\ref{sec:small}.

In the first experiment, we study the performance at different numbers of unlabeled examples. Specifically, $1\%, 2\%, 5\%, 15\%, 35\%, 55\%$ and
$75\%$ of the data (with $50$ of them labeled)
are used for training, and $25\%$ of the data are for testing.
This is repeated $10$ times and the
average performance is reported.

Figure~\ref{fig:real-sim-comparison-ssl} shows the results. As can be seen, \lgsvm\ is always superior to SVMlin,
and achieves highly competitive or even better accuracy than the SVM as the number of unlabeled examples increases.
Moreover, \lgsvm\ is much faster than SVMlin. As the number of unlabeled examples increases, the difference becomes more prominent. This  is mainly because SVMlin employs gradient descent while \lgsvm\ (which is based on
LIBLINEAR \citep{hsieh2008dcd}) uses coordinate descent, which is known to be one of the fastest solvers for large-scale linear SVMs \citep{shalev2007pegasos}.

\begin{figure}[t]
\centering
\begin{minipage}[c]{2.5 in} \centering \includegraphics[width = 2.5 in]{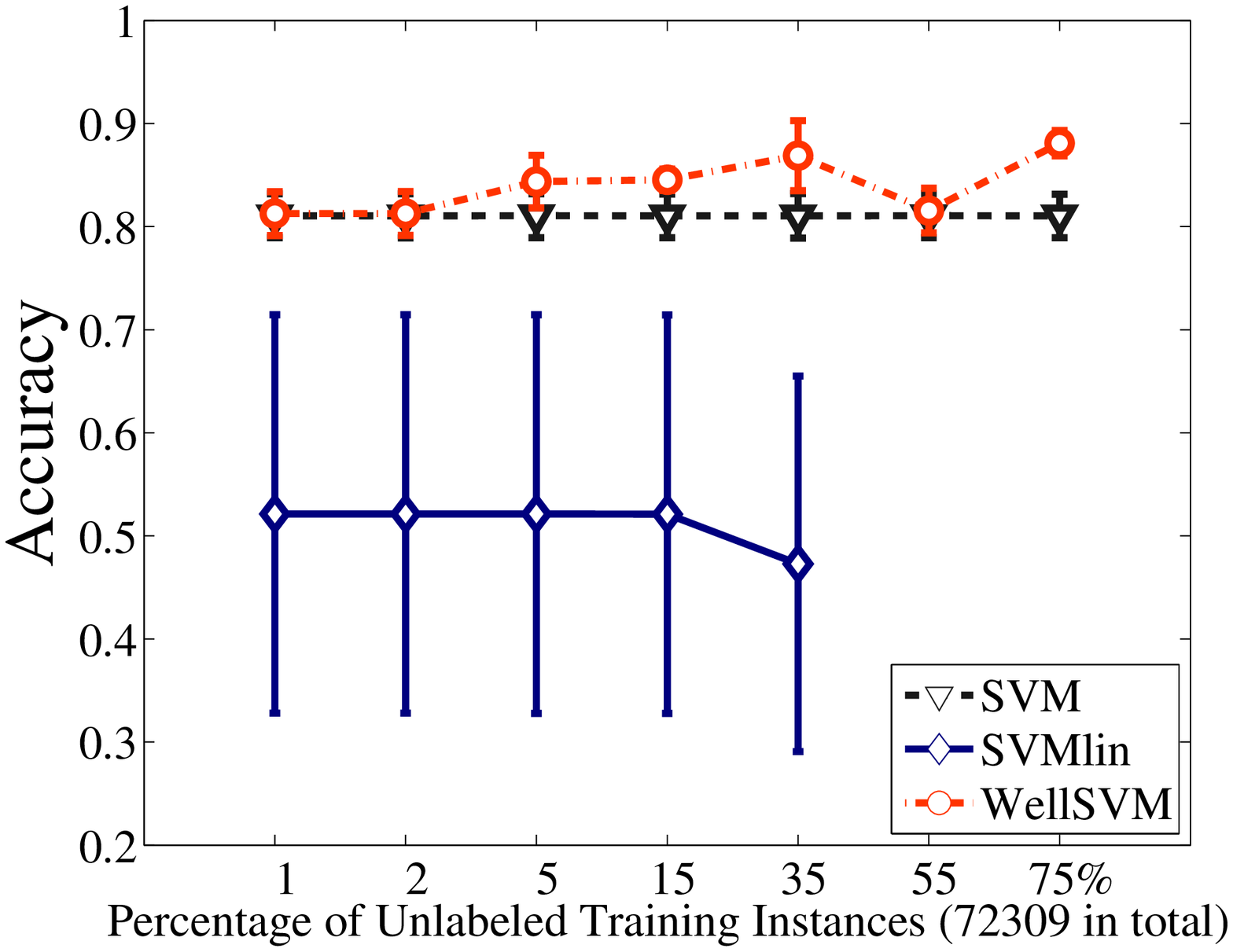}
\end{minipage}
\begin{minipage}[c]{2.5 in} \centering \includegraphics[width = 2.5 in]{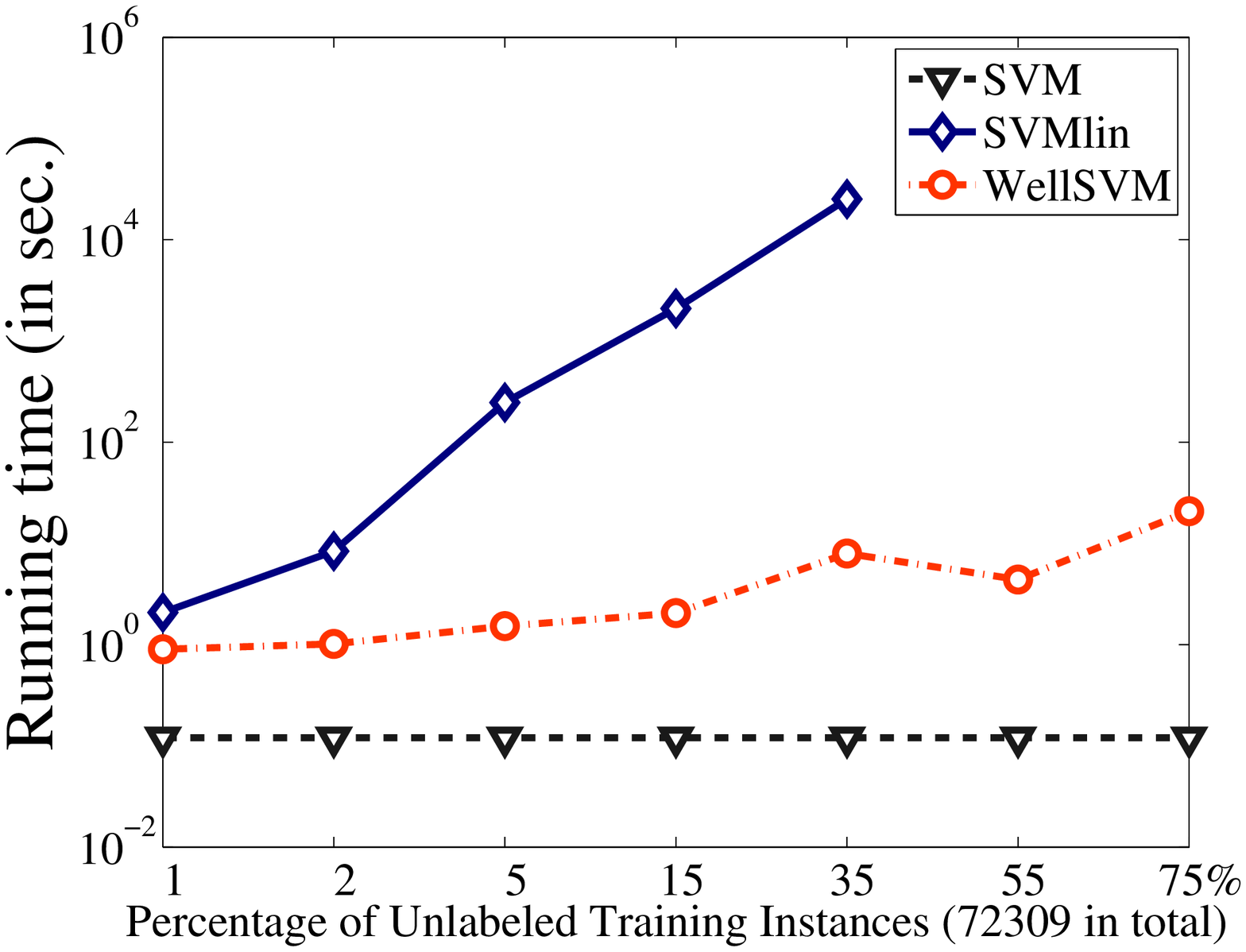}
\end{minipage} \\
\caption{Semi-supervised learning results on the \textit{real-sim} data with different amounts of unlabeled examples.}\label{fig:real-sim-comparison-ssl}\vspace{-8mm}
\end{figure}

\begin{figure}[t]
\centering
\begin{minipage}[c]{2.5 in} \centering \includegraphics[width = 2.5 in]{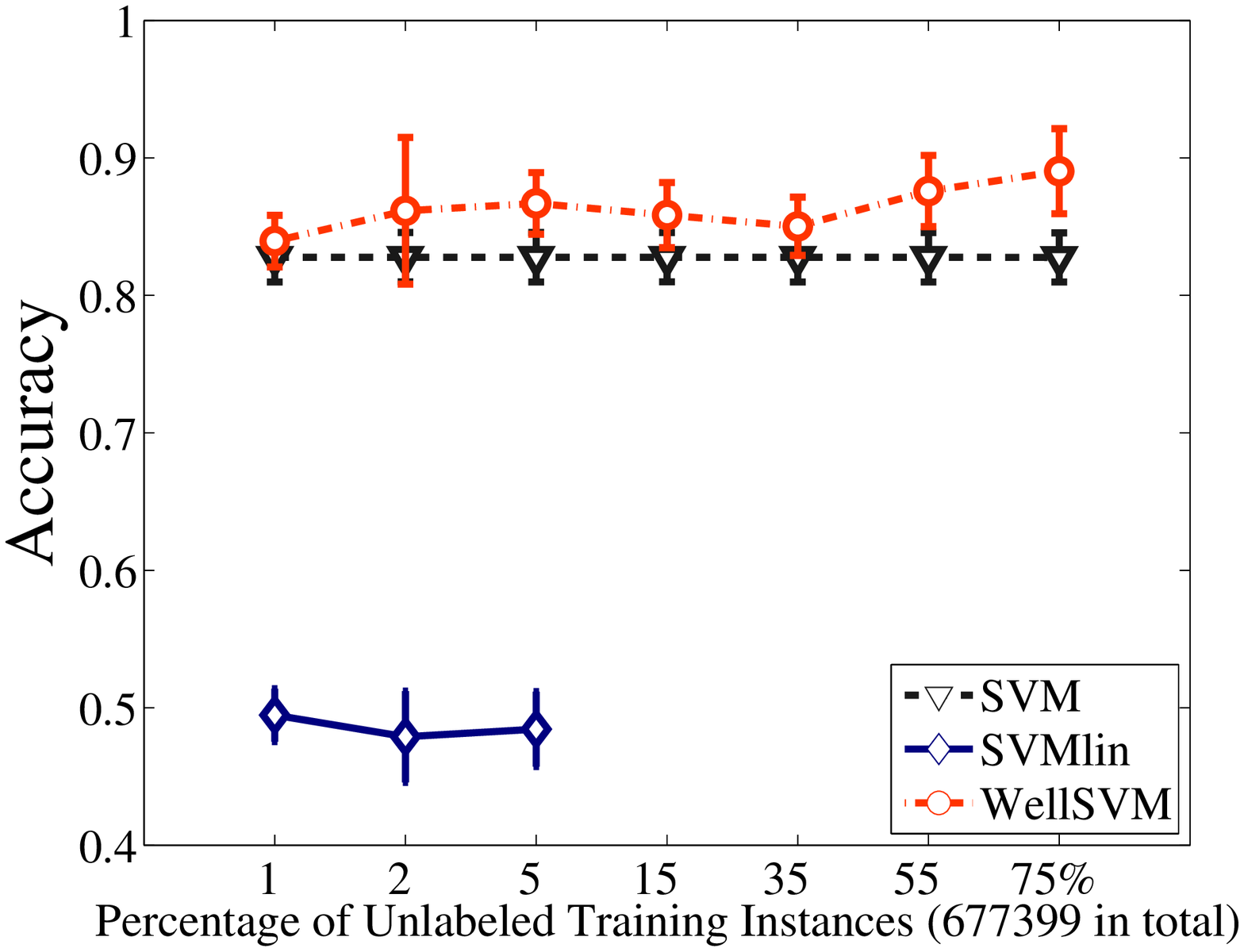}
\end{minipage}
\begin{minipage}[c]{2.5 in} \centering \includegraphics[width = 2.5 in]{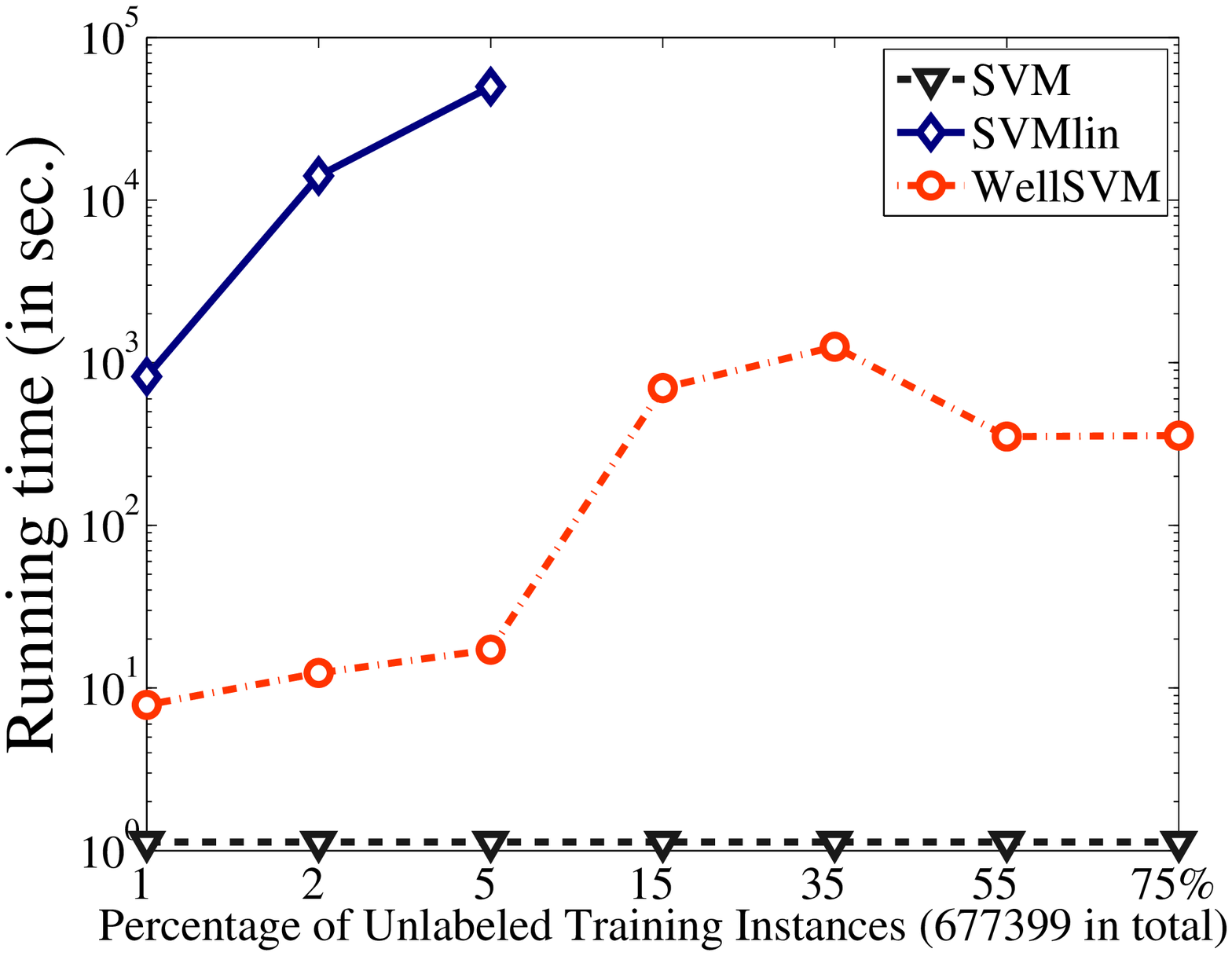}
\end{minipage} \\
\caption{Semi-supervised learning results on the \textit{RCV1} data with different number of unlabeled examples.}\label{fig:rcv1-comparison-ssl}\vspace{-6mm}
\end{figure}

\begin{table}[!h]
\begin{center}\small
\begin{tabular}{c|c|c|c|c|c|c}
\noalign{\smallskip}\hline
\multicolumn{2}{c|}{\# of labeled examples} & 25 & 50 & 100 & 150 & 200 \\
\noalign{\smallskip}\hline
\textit{real-sim} & SVM & 0.78 $\pm$ 0.03 & 0.81 $\pm$ 0.02 & 0.84 $\pm$ 0.02 & 0.86 $\pm$ 0.01 & 0.88 $\pm$ 0.01 \\
 & \lgsvm & \textbf{0.81 $\pm$ 0.08} & \textbf{0.84 $\pm$ 0.02} & \textbf{0.89 $\pm$ 0.01} & \textbf{0.9 $\pm$ 0.01} & \textbf{0.91 $\pm$ 0.01} \\
\noalign{\smallskip}\hline
\textit{rcv1} & SVM & 0.77 $\pm$ 0.03 & 0.83 $\pm$ 0.01 & 0.87 $\pm$ 0.01 & 0.89 $\pm$ 0.01 & 0.9 $\pm$ 0.01 \\
& \lgsvm & \textbf{0.83 $\pm$ 0.03} & \textbf{0.9 $\pm$ 0.02} & \textbf{0.91 $\pm$ 0.01} & \textbf{0.92 $\pm$ 0.01} & \textbf{0.93 $\pm$ 0.01} \\
\noalign{\smallskip}\hline
\end{tabular}
\end{center}\vspace{-4mm}
\caption{Accuracy (with standard derivations) on the \textit{real-sim} and
\textit{rcv1} data sets, with different numbers of labeled examples.
Results for which the performance of \lgsvm\ is \textbf{significantly} better than SVM are in bold.}
\label{table:uci-large} \vspace{-6mm}
\end{table}

Figure~\ref{fig:rcv1-comparison-ssl} shows the results on the larger
\textit{RCV1} data set. As
can be seen, \lgsvm\ obtains good accuracy at different numbers of unlabeled
examples. More importantly, \lgsvm\ scales well on \textit{RCV1}. For example,
\lgsvm\ takes fewer than 1,000 seconds
with more than 500,000 instances.
On the other hand, SVMlin cannot converge in $24$ hours when more than $5\%$
examples are used for training.

Our next experiment studies how the performance of \lgsvm\ changes with
different numbers of labeled examples. Following the
setup in Section~\ref{sec:small}, $75\%$ of the examples are used for training
while the rest are for testing. Different numbers
(namely, 25,50,100,150, and 200)
of labeled examples
are randomly chosen. Since SVMlin cannot handle such a
large training set,
the SVM is used instead.
The above process is repeated $30$ times.
Table~\ref{table:uci-large} shows
the
average testing accuracy.
As can be seen, \lgsvm\ is
significantly better than SVM in all cases. The high
standard
deviation of \lgsvm~on \textit{real-sim} with 25 labeled examples may be due to the
fact that the large amount of unlabeled instances lead to a large variance in deriving a large margin classifier, whereas the amount of labeled examples is too small to reduce the variance.

\subsubsection{Comparison with Other Benchmarks in the Literature}

In this section, we further evaluate the proposed \lgsvm\ with other published results in the literature. First, we experiment on the benchmark data sets in \citet{chapelle2006semi} by using their same setup. Results
on the average test error are shown in Table~\ref{table:Benchmark}. As can be seen, \lgsvm\ is highly competitive.

\vspace{+2mm}
\begin{table}[h]
\begin{center}\small
\begin{tabular}{l|ccccccccccc}
\hline
& g241c & g241d & Digit1 & USPS & COIL & BCI & Text \\
\noalign{\smallskip}\hline
SVM & 47.32 & 46.66 & 30.60 & \textbf{20.03} & 68.36 & 49.85 & 45.37\\
TSVM & \textbf{24.71} & 50.08 & 17.77 & 25.20 & \textbf{67.50} & 49.15 & 40.37 \\
\lgsvm & 37.37 & \textbf{43.33} & \textbf{16.94} & 22.74 & 70.73 & \textbf{48.50} & \textbf{33.70}\\
\noalign{\smallskip}\hline
\end{tabular}
\end{center}\vspace{-4mm}
\caption{Test errors (\%) on the SSL benchmark data sets (using 10 labeled examples).
The SVM and TSVM results are from Table 21.9 in \citet{chapelle2006semi}.}
\label{table:Benchmark}\vspace{-5mm}
\end{table}

\begin{table}[h]
\begin{center}\small
\begin{tabular}{l|c|cccccccc}
\hline
& SVM  & $\nabla$$\text{S}^3$VM & c$\text{S}^3$VM & USVM & TSVM & $\nabla$DA & Newton & BB & \lgsvm \\
\noalign{\smallskip}\hline
2moons & 35.6  & 65.0 & 49.8 & 66.3 & 68.7 & 30.0 & 33.5 & \textbf{0.0} & 33.5 \\
g50c & 8.2 & 8.3 & 8.3 & 8.5 & 8.4 & \textbf{6.7} & 7.5  & -& 7.6 \\
text  & 14.8 & \textbf{5.7} & 5.8 & 8.5 & 8.1 & {6.5} & 14.5  & - & 8.7 \\
uspst & 20.7 & 14.1 & 15.6 & 14.9 & 14.5 & \textbf{11.0} & 19.2 & - & 14.3  \\
coil20 & 32.7 & 23.9 & 23.6 & 23.6 & 21.8 & \textbf{18.9} & 24.6  & - & 23.0  \\
\noalign{\smallskip}\hline
\end{tabular}
\end{center}\vspace{-5mm}
\caption{Test errors (\%) of the \lgsvm\  and various $\text{S}^3$VM variants.
Results of the $\text{S}^3$VMs compared are from Table 11 in \citet{Chapelle2008}.
BB can only be run on the \textit{2moons} data set due to its high computational cost.
Note that in \citet{Chapelle2008}, USVM is called CCCP and TSVM is called $\text{S}^3$VM$^{light}$.}
\label{table:BB}\vspace{-2mm}
\end{table}

Next, we compare \lgsvm\ with the SVM and other state-of-the-art
$\text{S}^3$VMs reported in \citet{Chapelle2008}. These include
\begin{enumerate}
\item $\nabla$$\text{S}^3$VM \citep{chapelle2004semi},
which minimizes the $\text{S}^3$VM objective by
gradient descent;
\item Continuation $\text{S}^3$VM (c$\text{S}^3$VM) \citep{chapelle2006continuation}, which first relaxes the
$\text{S}^3$VM objective to a continuous function and then employs gradient
descent;
\item
USVM \citep{collobert2006lst};
\item TSVM \citep{Joachims1999};
\item Deterministic annealing
$\text{S}^3$VM with gradient minimization ($\nabla$DA) \citep{sindhwani2006das}, which
is based on the global optimization heuristic of deterministic annealing;
\item Newton $\text{S}^3$VM (Newton) \citep{chapelle2007training}, which
uses the second-order Newton's method;  and
\item Branch-and-bound (BB) \citep{chapelle2006bb}.
\end{enumerate}
Results are shown in Table~\ref{table:BB}.
As can be seen, BB attains the best performance. Overall, \lgsvm\ performs slightly
worse than $\nabla$DA, but is highly competitive compared with the other $\text{S}^3$VM variants.

Finally, we compare \lgsvm\ with MMC \citep{xu2005mmc}, a SDP-based S$^3$VM, on the data sets used
there. Table~\ref{table:MMC} shows the results. Again, \lgsvm\ is highly competitive.

\vspace{+2mm}
\begin{table}[h]
\begin{center}\small
\begin{tabular}{l|cccccc}
\hline
& HWD 1-7 & HWD 2-3 & Australian & Flare & Vote & Diabetes \\
\noalign{\smallskip}\hline
MMC & 3.2 & \textbf{4.7} & \textbf{32.0} & 34.0 & 14.0 & \textbf{35.6} \\
\lgsvm & \textbf{2.7} & 5.3 & 40.0 & \textbf{28.9} &  \textbf{11.6} & 41.3 \\
\noalign{\smallskip}\hline
\end{tabular}
\end{center}\vspace{-5mm}
\caption{Test errors (\%) of \lgsvm\ and MMC (a SDP-based S$^3$VM) on the
data sets used in \citet{xu2005mmc}.  The MMC results are copied from their Table~2.}
\label{table:MMC}\vspace{-4mm}
\end{table}

\subsection{Multi-Instance Learning for Locating ROIs}
\label{subsec:mil-expt}

In this section, we evaluate the proposed method on multi-instance learning,
with application to ROI-location in CBIR image data. We employ the image
database in \citet{zhou2005lri},
which consists of 500 COREL images from five image categories:
\textit{castle}, \textit{firework}, \textit{mountain}, \textit{sunset} and
\textit{waterfall}.
Each image is of size $160 \times 160$, and is converted
to the multi-instance feature representation by the bag generator
SBN \citep{maron1998mil}.
Each region (instance) in the image (bag) is of size 20 $\times$ 20. Some of
these regions are labeled manually as ROIs. A summary of the data set is shown
in Table~\ref{table:Statistics}. It is very labor-expensive to collect
large image data with all the regions labeled. Hence, we will leave the
experiments
on large-scale data sets as a future direction.

\vspace{+2mm}
\begin{table}[!ht]
\begin{center}
\begin{tabular}{l|c|c}
\hline
concept & ~\#images~ & ~average \#ROIs per image~ \\ \hline
\textit{castle}  & 100 & 19.39 \\
\textit{firework}  & 100 & 27.23 \\
\textit{mountain}  & 100 & 24.93 \\
\textit{sunset}  & 100 & 2.32 \\
\textit{waterfall}  & 100 & 13.89 \\
\hline
\end{tabular}
\end{center}\vspace{-4mm}
\caption{Some statistics of the image data set.}
\label{table:Statistics}
\end{table}

The one-vs-rest strategy is used. Specifically, a training set of 50 images is created
by randomly sampling 10 images from each of the five categories. The remaining 450
images constitute the test set. This training/test split is randomly generated
10 times, and the average performance is reported.

Although many multi-instance methods have been proposed, they mainly focus on improving the classification performance, whereas only some of them are used to identify the ROIs. We list these state-of-the-art methods \citep{andrews2003svm,maron1998mil,zhang2002dim,zhou2005lri} as well as related SVM-based methods for comparisons in experiments. Specifically,
the \lgsvm\ is compared with the following SVM variants:  1)
MI-SVM \citep{andrews2003svm}; 2)
mi-SVM \citep{andrews2003svm}; and 3) SVM with multi-instance kernel
(MI-Kernel) \citep{gartner2002mik}.
The Gaussian kernel is used for all the SVMs, where its width $\sigma$ is
picked from
$\{0.25\sqrt{\gamma}, 0.5\sqrt{\gamma}, \sqrt{\gamma}, 2\sqrt{\gamma}, 4\sqrt{\gamma}\}$
with $\gamma$ being the average distance between instances;
$C_1$ is picked from
$\{C_2,4 C_2,10 C_2\}$; and
$C_2$ is
from $\{1,10,100\}$.
We also compare with three
state-of-art non-SVM-based
methods
that can
locate
ROIs, namely, Diverse Density (DD)
 \citep{maron1998mil}, EM-DD \citep{zhang2002dim} and
C\textit{k}NN-ROI \citep{zhou2005lri}.
All the parameters are selected by
ten-fold cross-validation (except for C\textit{k}NN-ROI, in which its parameters
are based on the best setting reported in \citet{zhou2005lri}).

In each image classified as relevant by the algorithm, the image region with the maximum prediction value
is taken as its ROI.\footnote{Alternatively, if we allow an algorithm to output multiple
ROI's for an image,
a heuristic thresholding of the prediction values
will be needed. For simplicity, we defer such a setup as future work.}
The following two measures are used in evaluating the performance of ROI location.
\begin{enumerate}
\item
\begin{equation} \label{eq:m1}
\text{success rate of relevant images} = \frac{\text{number of ROI successes} }{
\text{number of relevant images}}.
\end{equation}
Here, for each image predicted as relevant by the algorithm, the ROI returned by the algorithm is counted as a success if it is a real ROI.
\item The ROI success rate computed based on those images that are predicted as relevant, that is,
\begin{equation} \label{eq:m2}
\text{success rate of ROIs} = \frac{\text{number of ROI successes} }{ \text{number of images predicted as relevant}}.
\end{equation}
\end{enumerate}
Notice that there is a tradeoff between these two measures. When
an algorithm classifies many images as relevant, the success rate of relevant
images (Equation~(\ref{eq:m1})) is
high while the success rate of ROIs (Equation~(\ref{eq:m2})) can be low, since there are many relevant
images predicted by the algorithm. On the other hand, when an algorithm classifies many images as irrelevant, the success rate of ROIs is high while the success rate of relevant images is low since many relevant images are missing. To compromise
these two goals,
we introduce a novel \textit{success rate} of ROIs
\[
\textit{success rate} = \frac{2\# \text{ROI successes}}{\# \text{relevant images}+\# \text{predicted relevant images}}.
\]
This is similar to the F-score in information retrieval as
\begin{eqnarray*}
\frac{1}{\textit{success rate}} &=& \frac{ \# \text{relevant images}+\# \text{predicted relevant images} }{2\# \text{ROI successes}} \nonumber\\
&=& \frac{1}{2} \left( \frac{1}{\frac{\# \text{ROI successes} }{ \# \text{relevant images}}}+ \frac{1}{ \frac{\# \text{ROI successes} }{ \# \text{predicted relevant images}} } \right).
\end{eqnarray*}
Intuitively, when an algorithm correctly recognizes all the relevant images and
their ROIs, the success rate will be high.

Table~\ref{table:ROI} shows the success rates (with standard deviations) of the various methods. As can be seen, \lgsvm\ achieves the best performance among all the SVM-based methods. As for its performance comparison with the other non-SVM methods, \lgsvm\ is still always better than DD and C$k$NN-ROI, and is highly comparable to EM-DD.
In particular, EM-DD achieves the best performance on \textit{castle} and \textit{sunset}, while \lgsvm\ achieves the best performance on the remaining three categories (\textit{firework}, \textit{mountain} and \textit{waterfall}). Figure~\ref{fig:roi} shows some example images with the located ROIs. It can be observed that \lgsvm\ can correctly identify more ROIs than the other SVM-based methods.

\begin{table}[t]
\begin{center}\small
\begin{tabular}{cc|c@{\!}c@{\!}c@{\!}c@{\!}c@{\!}} \hline
& method & \textit{castle} & \textit{firework} & \textit{mountain} & \textit{sunset} & \textit{waterfall} \\
\hline\noalign{\smallskip}
& \lgsvm  &~~{0.57 $\pm$ 0.12}~~&~~\textbf{0.68 $\pm$ 0.17}~~  &~~\textbf{0.59 $\pm$ 0.10}~~  &~~{0.32 $\pm$ 0.07}~~ &~~\textbf{0.39 $\pm$ 0.13 }~~  \\
\cline{2-7}\noalign{\smallskip}
SVM & mi-SVM             & 0.51 $\pm$ 0.04  &0.56 $\pm$ 0.07  &{0.18 $\pm$ 0.09}  &0.32 $\pm$ 0.01 & 0.37 $\pm$ 0.08   \\
\cline{2-7}\noalign{\smallskip}
methods & MI-SVM   &{0.52 $\pm$ 0.22}  &{ 0.63 $\pm$ 0.26}  & 0.18 $\pm$ 0.13  & 0.29 $\pm$ 0.10  & 0.06 $\pm$ 0.02   \\
\cline{2-7}\noalign{\smallskip} & MI-Kernel &0.56 $\pm$ 0.08  & 0.57 $\pm$ 0.11  & 0.23 $\pm$ 0.20  & 0.24 $\pm$ 0.03 & 0.20 $\pm$ 0.11   \\
\hline\hline\noalign{\smallskip}
& DD   &0.24 $\pm$ 0.16  & 0.15 $\pm$ 0.28  & 0.56 $\pm$ 0.11  & 0.30 $\pm$ 0.18 & 0.26 $\pm$ 0.24   \\
\cline{2-7}\noalign{\smallskip}
non-SVM & EM-DD   &\textbf{0.69 $\pm$ 0.06}  & 0.65 $\pm$ 0.24 &{0.54 $\pm$ 0.18 } & \textbf{0.36 $\pm$ 0.15}  &{0.30 $\pm$ 0.12}   \\
\cline{2-7}\noalign{\smallskip}
methods & C\textit{k}NN-ROI  &0.48 $\pm$ 0.05  & 0.65 $\pm$ 0.09  & 0.47 $\pm$ 0.06  & 0.31 $\pm$ 0.04 &0.20 $\pm$ 0.05   \\
\hline
\end{tabular}
\end{center} \vspace{-4mm}
\caption{Success rate in locating the ROIs. The best performance and those which are comparable to the best performance (paired $t$-test at $95\%$ significance level) on each data set are bolded.}\label{table:ROI} \vspace{-4mm}
\end{table}

\begin{figure}[t]
\centering
\includegraphics[height=2.2in]{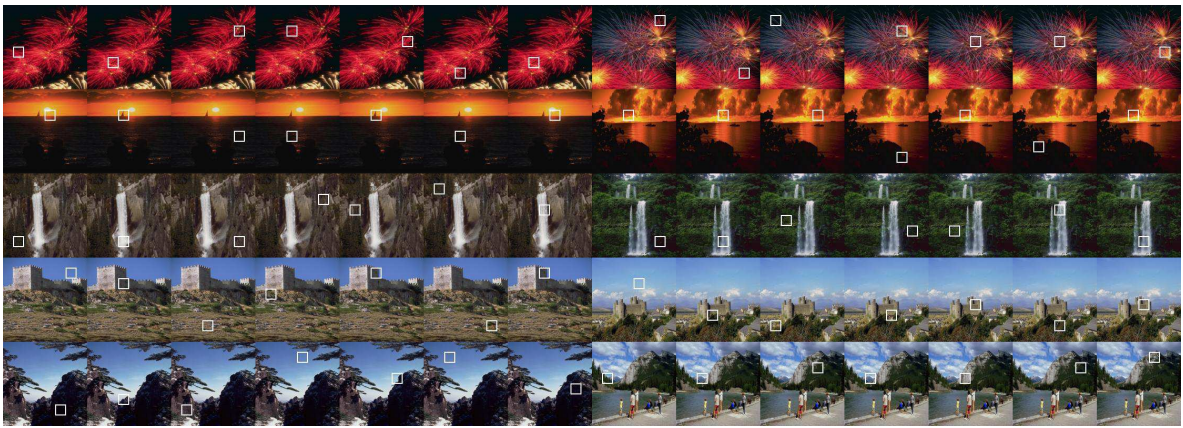}\vspace{-0mm}
\centering \caption{ROIs located by (from left to right) DD, EM-DD, C\textit{k}NN-ROI, MI-SVM, mi-SVM, MI-Kernel,
and \lgsvm. Each row shows one category (top to bottom: \textit{firework}, \textit{sunset}, \textit{waterfall},
\textit{castle} and \textit{mountain}).} \label{fig:roi} \vspace{-6mm}
\end{figure}

\subsection{Clustering}
\label{subsec:clustering-expt}

In this section, we further evaluate our \lgsvm\ on clustering problems where all
the labels are unknown. As in semi-supervised learning, 16 UCI data sets and 2 large data sets are used for comparison.

\subsubsection{Small-Scale Experiments}

The \lgsvm\ is compared with the following methods: 1) $k$-means clustering (KM); 2) kernel $k$-means clustering (KKM); 3) normalized cut (NC) \citep{shi2000nca}; 4) GMMC \citep{valizadegan1400gmm}; 5) IterSVR\footnote{IterSVR can be found at \url{http://www.cse.ust.hk/~twinsen}.} \citep{zhang2007mmc}; and 6) CPMMC\footnote{CPMMC can be found at \url{http://binzhao02.googlepages.com/}.} \citep{zhao:emm}. In the preliminary experiment, we also compared with the original SDP-based approach in \citet{xu2005mmc}. However, similar to the experimental results in semi-supervised learning, it does not converge after $3$ hours on the smallest data set \textit{echocardiogram}. Hence, GMMC, which is also based on SDP but about 100 times faster than \citet{xu2005mmc}, is used in the
comparison.

For GMMC, IterSVR, CPMMC and \lgsvm, the $C$ parameter is selected in a range \linebreak[4] $\{0.1,0.5,1,5,10,100\}$. For the UCI data sets, both the linear and Gaussian kernels are used. In particular, the width $\sigma$ of the Gaussian kernel is picked from $\{0.25\sqrt{\gamma},0.5\sqrt{\gamma},\sqrt{\gamma},$ $2\sqrt{\gamma},4\sqrt{\gamma}\}$, where $\gamma$ is the average distance between instances. The parameter of normalized cut is chosen from the same
range of $\sigma$. Since $k$-means and IterSVR are susceptible to the problem of local minimum, these two methods are run 10 times and the average performance reported. We set the balance constraint in the same manner as in \citet{zhang2007mmc}, that is, $\beta$ is set as $0.03N$ for balanced data and $0.3N$ for imbalanced data. To initialize \lgsvm, 20 random label assignments are generated and the one with the maximum kernel alignment \citep{cristianini2002kernel} is chosen. We also use this to initialize KM, KKM and IterSVR, and the resultant variants are denoted KM-r, KKM-r and IterSVR-r, respectively. All the methods are reported with the best parameter setting.

We follow the strategy in \citet{xu2005mmc} to evaluate the clustering accuracy. We first remove the labels for all instances, and then obtain the clusters by the various clustering algorithms. Finally, the misclassification error is measured w.r.t. the true labels.

We first study the clustering accuracy on 16 UCI data sets that cover a wide range of properties. Results are shown in
Table~\ref{table:uci-mmc}. As can be seen, \lgsvm\ outperforms existing clustering approaches on most data sets. Specifically, \lgsvm\
obtains the best performance on 10 out of 16 data sets. GMMC is not as good as \lgsvm. This may due to that the convex relaxation
proposed in GMMC is not the same as the original SDP-based approach \citep{xu2005mmc} and \lgsvm.

\setlength{\tabcolsep}{5pt}
\begin{table}[t]\small
\begin{center}
\begin{tabular}{lcccccccccc}
\hline\noalign{\smallskip}
& & & & & & & Iter & Iter & CP & \textsc{Well}\\
Data & KM & KM-r & KKM & KKM-r & NC  & GMMC & SVR & SVR-r & MMC & \textsc{SVM}  \\
\hline\noalign{\smallskip}
\textit{Echocardiogram} & 0.76 & 0.76 & 0.76 & 0.77 & 0.76 & 0.7 & 0.74 & 0.78 & 0.82 & \textbf{0.84} \\
\textit{House} & 0.89 & 0.89 & 0.89 & 0.88 & 0.89 & 0.78 & 0.87 & 0.87 & 0.53 & \textbf{0.90} \\
\textit{Heart} & 0.66 & 0.59 & 0.69 & 0.59 & 0.57 & \textbf{0.7} & 0.59 & 0.59 & 0.56 & {0.59} \\
\textit{Heart-statlog} & 0.68 & 0.79 & 0.78 & 0.79 & 0.79 & 0.77 & 0.76 & 0.76 & 0.56 & \textbf{0.81} \\
\textit{Haberman} & 0.6 & 0.59 & 0.69 & 0.64 & 0.7 & 0.6 & 0.62 & 0.57 & \textbf{0.74} & \textbf{0.74} \\
\textit{LiverDisorders} & 0.55 & 0.54 & 0.56 & 0.56 & 0.57 & 0.55 & 0.53 & 0.51 & \textbf{0.58} & \textbf{0.58} \\
\textit{Spectf} & 0.58 & 0.57 & \textbf{0.77} & \textbf{0.77} & 0.63 & 0.64 & 0.53 & 0.53 & 0.73 & 0.73 \\
\textit{Ionosphere} & 0.7 & 0.71 & 0.73 & \textbf{0.74} & 0.7 & 0.73 & 0.71 & 0.65 & 0.64 & {0.72} \\
\textit{House-votes} & \textbf{0.87} & \textbf{0.87} & \textbf{0.87} & \textbf{0.87} & 0.86 & 0.6 & 0.83 & 0.82 & 0.61 & \textbf{0.87} \\
\textit{Clean1} & 0.54 & 0.54 & 0.59 & 0.62 & 0.52 & \textbf{0.66} & 0.61 & 0.53 & 0.56 & 0.55 \\
\textit{Isolet} & 0.98 & 0.96 & 0.89 & 0.95 & 0.98 & 0.56 & \textbf{1.00} & \textbf{1.00} & 0.5 & {0.98} \\
\textit{Australian} & 0.54 & 0.55 & 0.57 & 0.57 & 0.56 & 0.6 & 0.56 & 0.51 & 0.56 & \textbf{0.83} \\
\textit{Diabetes} & 0.67 & 0.67 & \textbf{0.69} & \textbf{0.69} & 0.66 & \textbf{0.69} & 0.66 & 0.66 & 0.65 & \textbf{0.69} \\
\textit{German} & 0.57 & 0.56 & 0.68 & 0.62 & 0.66 & 0.56 & 0.56 & 0.64 & \textbf{0.7} & \textbf{0.7} \\
\textit{Krvskp} & 0.52 & 0.51 & 0.55 & 0.55 & \textbf{0.56} &  -  & 0.51 & 0.51 & 0.52 & {0.54} \\
\textit{Sick} & 0.68 & 0.63 & 0.88 & 0.77 & 0.84 &  -  & 0.63 & 0.59 & \textbf{0.94} & \textbf{0.94} \\
\noalign{\smallskip}\hline
\end{tabular}
\end{center}\vspace{-4mm}
\caption{Clustering accuracies on various data sets.
``-'' indicates that the method does not converge in 2 hours or out-of-memory problem occurs.
}\label{table:uci-mmc}\vspace{-5mm}
\end{table}

The CPU time on the UCI data sets are shown in
Figure~\ref{fig:cputimes-UCI-clustering}. As can be seen, local optimization
methods, such as IterSVR and CPMMC, are often efficient.
As for the global optimization method, \lgsvm\ scales much better than GMMC. On average, \lgsvm\ is about 10
times faster. These results validate that \lgsvm\ achieves much better scalability than the SDP-based GMMC approach. However, in general, convex methods are still slower than non-convex optimization methods on the small data sets.

\begin{figure}[t]
\centering
\includegraphics[height=2 in]{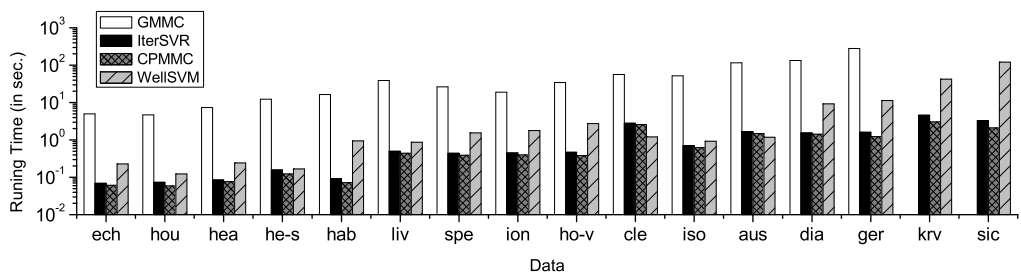}\vspace{-10mm}
\caption{CPU time (in seconds) on the UCI data sets.}\label{fig:cputimes-UCI-clustering}\vspace{-1mm}
\end{figure}

\subsubsection{Large-Scale Experiments}

In this section, we further evaluate the scalability of \lgsvm\ on large data sets
when the linear kernel is used.  In this case, the \lgsvm\ only involves solving a sequence of
linear SVMs. As packages specially designed for the linear SVM (such as
LIBLINEAR) are  much more efficient than those designed  for general
kernels (such as LIBSVM), it can be expected that the linear \lgsvm\ is also
scalable
on large data sets.

The \textit{real-sim} data contains 72,309 instances and has 20,958 features. To study the effect of sample size on performance, different
sampling rates ($1\%$, $2\%$, $5\%$ and $10\%, 20\%, \ldots,100\%$) are considered. For each sampling rate (except for $100\%$), we
perform random sampling 5 times, and report the average performance. Since $k$-means depends on random initialization,
we run it 10 times for each sampling rate, and report its average accuracy. Figure~\ref{fig:real-sim-comparison} shows
the accuracy
and running time.\footnote{$k$-means is implemented in matlab, and so
its running time
is not compared
with \lgsvm,
whose core procedure
is implemented in C++.} As can be seen, \lgsvm\ outperforms $k$-means and can be used on large data sets.

\begin{figure}[t]
\centering
\begin{minipage}[c]{2.5 in} \centering \includegraphics[width = 2.5 in]{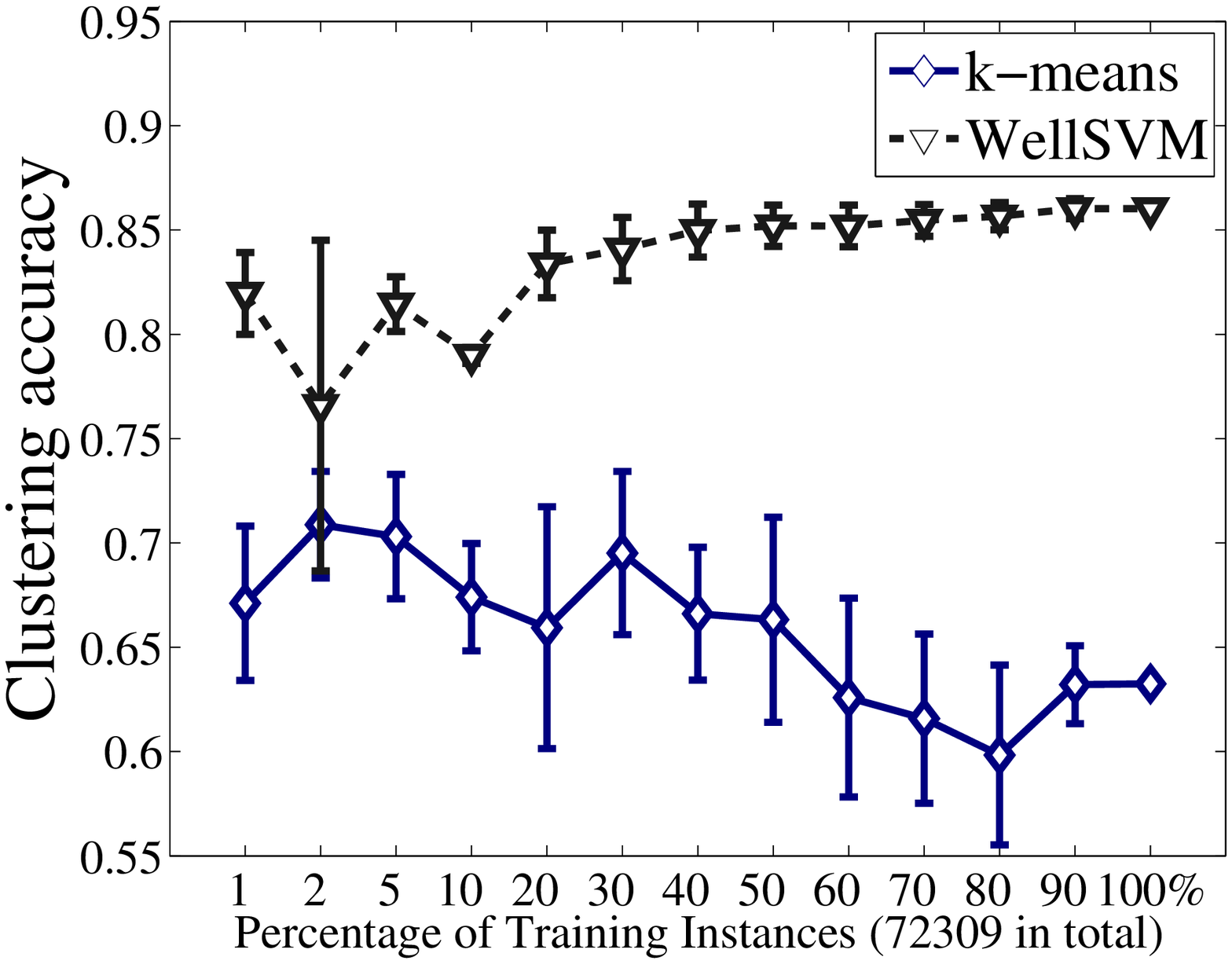}
\end{minipage}\;\;\;\;
\begin{minipage}[c]{2.5 in} \centering \includegraphics[width = 2.5 in]{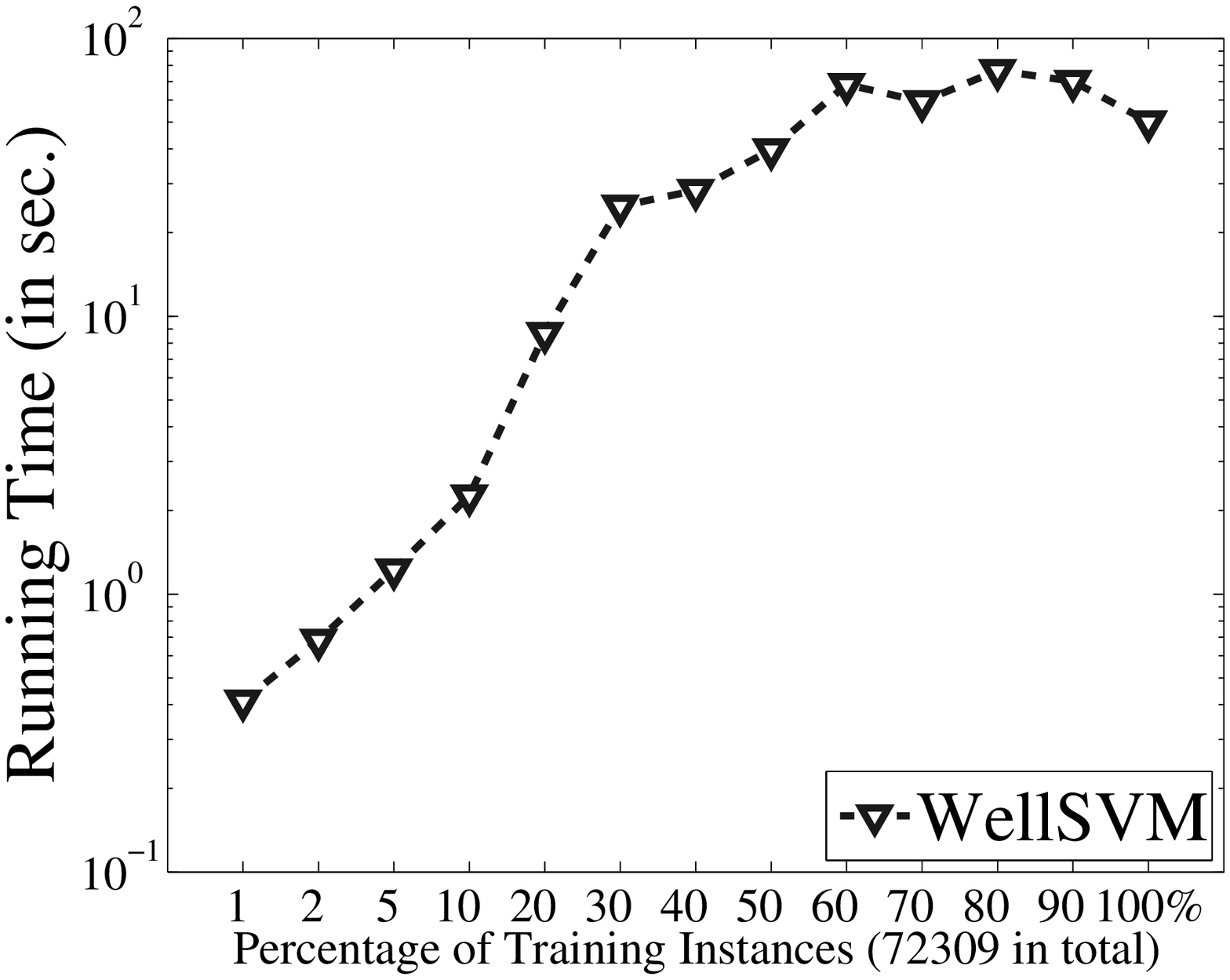}
\end{minipage} \\
\caption{Clustering results on the \textit{real-sim} data with different numbers of examples.}\label{fig:real-sim-comparison}
\end{figure}

\begin{figure}[t]
\centering
\begin{minipage}[c]{2.5 in} \centering \includegraphics[width = 2.5 in]{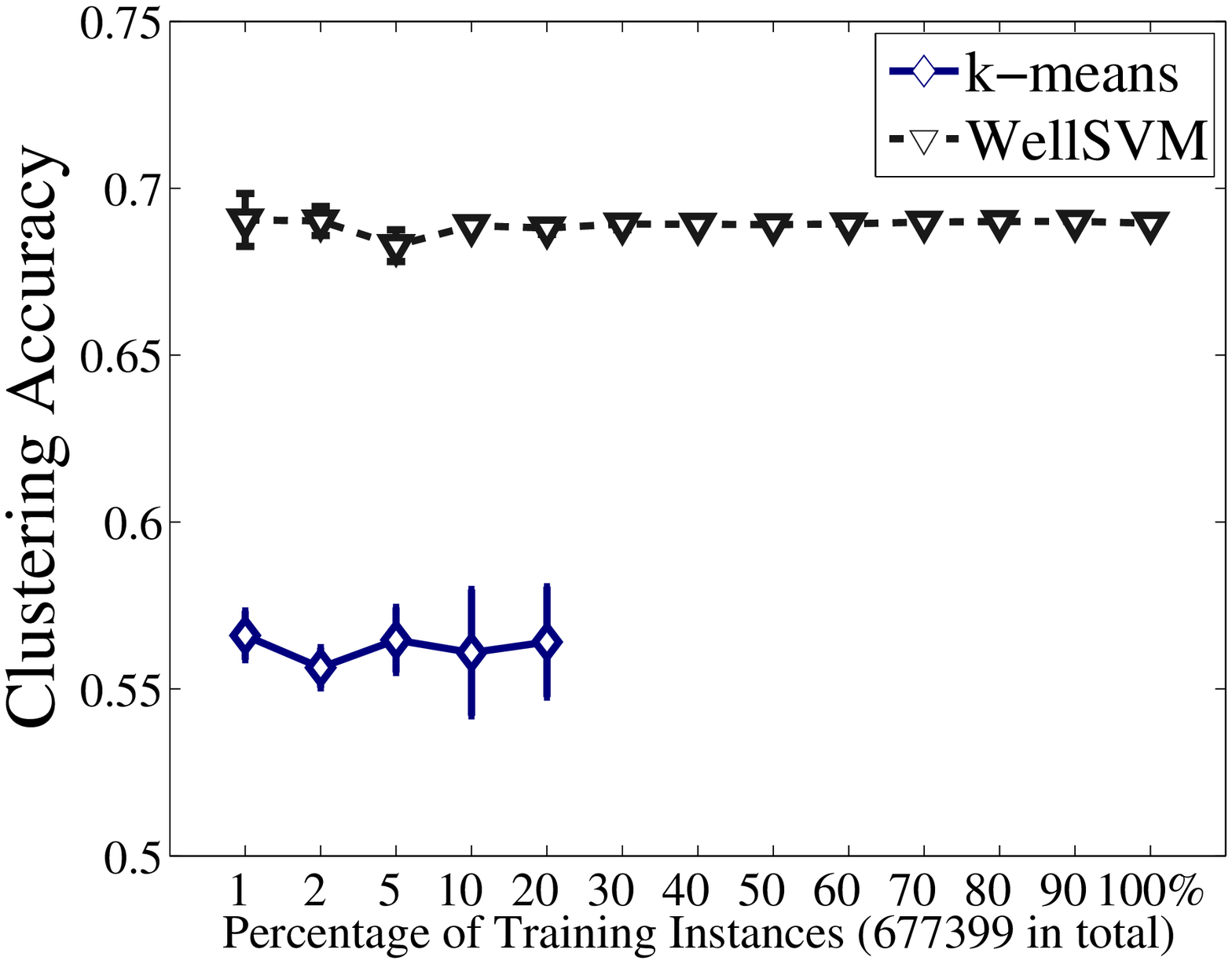}
\end{minipage}\;\;\;\;
\begin{minipage}[c]{2.5 in} \centering \includegraphics[width = 2.5 in]{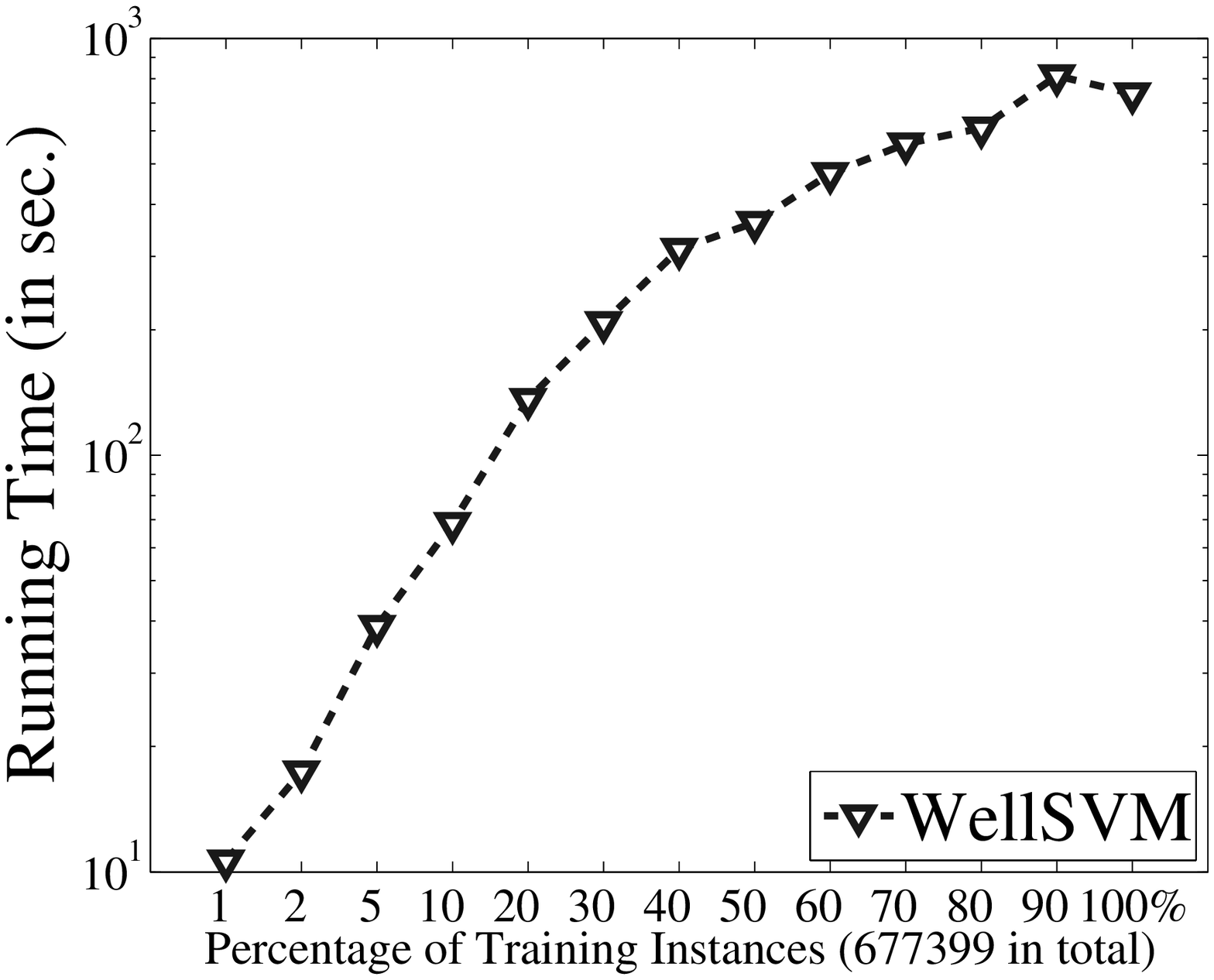}
\end{minipage} \\
\caption{Clustering results on the \textit{RCV1} data with different numbers of examples.}\label{fig:rcv1-comparison}
\end{figure}

The \textit{RCV1} data is very high-dimensional and contains more than 677,000 instances. Following the same setup as for the \textit{Real-sim} data, \lgsvm\ is compared with $k$-means under different sampling rates.
Figure~\ref{fig:rcv1-comparison} shows the results. Note that $k$-means does not converge in 24 hours when more than 20\% training instances are used. As can be seen, \lgsvm\ obtains better performance than $k$-means and \lgsvm\ scales quite well on \textit{RCV1}. It takes fewer than 1,000 seconds for \textit{RCV1} with more than 677,000 instances and 40,000 dimensions.

\section{Conclusion}

Learning from \emph{weakly labeled data}, where the training labels are incomplete, is generally regarded as a crucial yet challenging machine
learning task. However, because of the underlying mixed integer programming
problem, this limits its scalability and accuracy.
To alleviate these difficulties, we proposed
a convex \lgsvm\ based on a novel ``label
generation" strategy. It can be shown that \lgsvm\ is
at least as tight as existing SDP relaxations, but
is much more scalable.
Moreover, since it can be reduced to a sequence of standard SVM training, it can
directly benefit from advances in the development of efficient SVM software.

In contrast to traditional approaches that are tailored for a specific weak-label
learning problem, our \lgsvm\ formulation can be used on a general class of weak-label learning problems.
Specifically, \lgsvm\ on three common weak-label learning tasks, namely (i)
semi-supervised learning where labels are partially known; (ii) multi-instance
learning where labels are implicitly known; and (iii) clustering where labels are
totally unknown, can all be put under the same formulation.
Experimental results show that the \lgsvm\ obtains good performance and is readily
scalable on large data sets. We believe that
similar conclusions can be reached on other weak-label learning tasks, such as
the noisy-tolerant problem \citep{angluin1988learning}.

The focus of this paper is on binary weakly labeled problems.
For multi-class weakly labeled problems, they can be easily handled by
decomposing into multiple binary problems \citep{crammer2002algorithmic}.
However, one
exception is clustering problems, in which existing decomposition methods cannot be applied
as there is no label.
Extension to this more challenging multi-class clustering scenario will
be considered as a future work.

\acks{The authors want to thank the editor and reviewers for helpful comments and
suggestions. We also thank Teng Zhang, Linli Xu and Kai Zhang for help in the experiments.
This work was partially supported by the National Fundamental Research Program of China
(2010CB327903), the National Science Foundation of China (61073097, 61021062), the program
for outstanding PhD candidate of Nanjing University, Singapore A*star under Grant SERC 112
280 4005, and the Research Grants Council
of the Hong Kong Special Administrative Region under Grant 614012. Z.-H. Zhou is the corresponding author of this paper.}

\appendix

\section{Proof of Theorem \ref{them:objective_gain}}
\label{app:proof of theorem}

\begin{proof}
Let $\{\bar{\ba}^{(t)},\bar{\bmu}^{(t)}\}$ be the optimal solution of
Equation~(\ref{eq:LGSVM_framework_v22}), which
can be viewed as a saddle-point problem. Let $J({\ba},{\bmu}) = \sum_{\hat{\y} \in
\C^{(t)} }{\mu}_{\hat{\y}} g_{\hat{\y}}({\ba})$. Using
the saddle-point property \citep{boyd2004co}, we have
\[
J(\ba,\bar{\bmu}^{(t)}) \geq J(\bar{\ba}^{(t)},\bar{\bmu}^{(t)}) \geq
J(\bar{\ba}^{(t)},{\bmu}), \;\; \forall \ba, \bmu.
\]
In other words, $\bar{\ba}^{(t)}$ minimizes $J(\ba,\bar{\bmu}^{(t)})$. Note that
$g_{{\y}}(\ba)$ is $\lambda$-strongly convex
and $\sum_{\y \in \C^{(t)}} \bar{\mu}_{\y}^{(t)} = 1$, thus $J({\ba},\bar{\bmu}^{(t)})$
is also $\lambda$-strongly convex. Using the Taylor expansion,
we have
\[
J({\ba},\bar{\bmu}^{(t)}) - J(\bar{\ba}^{(t)},\bar{\bmu}^{(t)}) \geq \frac{\lambda}{2}
\|\ba-\bar{\ba}^{(t)}\|^2, \;\;
\forall \ba \in \A.
\]
Using the definition of $J({\ba},{\bmu})$, we then have
\begin{equation}\label{eq:ineq1}
\sum_{\hat{\y} \in \C^{(t)}}\bar{\mu}_{\hat{\y}}^{(t)}g_{\hat{\y}}(\ba) - \sum_{\hat{\y} \in
\C^{(t)}}\bar{\mu}_{\hat{\y}}^{(t)}g_{\hat{\y}}(\bar{\ba}^{(t)}) \geq \frac{\lambda}{2} \sum_{\hat{\y} \in \C^{(t)}}
\bar{\mu}_{\hat{\y}}^{(t)}\|\ba-\bar{\ba}^{(t)}\|^2 = \frac{\lambda}{2} \|\ba-\bar{\ba}^{(t)}\|^2.
\end{equation}

Let $\hat{\y}^{(t+1)}$ be the violated label vector selected at iteration $t+1$ in Algorithm \ref{alg:LGSVM}, that is, $\C^{t+1} = \C^{(t)}
\bigcup \hat{\y}^{(t+1)}$. From the definition, we have
\begin{eqnarray}\label{eq:ineq2}
g_{\hat{\y}^{(t+1)}}(\bar{\ba}^{(t)}) = -G(\bar{\ba}^{(t)},\hat{\y}^{(t+1)}) &\geq& \max_{\hat{\y} \in \C^{(t)}}-G(\bar{\ba}^{(t)},\hat{\y})+\eps
= \max_{\hat{\y} \in \C^{(t)}}g_{\hat{\y}}(\bar{\ba}^{(t)})+\eps \nonumber \\
&\geq& \sum_{\hat{\y} \in \C^{(t)}}\bar{\mu}_{\hat{\y}}^{(t)}g_{\hat{\y}}(\bar{\ba}^{(t)}) + \eps = -p^{(t)}+\eps.
\end{eqnarray}
Consider the following optimization problem and let $\hat{p}^{(t+1)}$ be its optimal objective value
\begin{equation}\label{eq:reduced-form}
\hat{p}^{(t+1)} = -\min_{\ba \in \A} \max_{0 \leq \theta \leq 1} \theta\sum_{\hat{\y} \in \C^{(t)}} \bar{\mu}_{\hat{\y}}^{(t)}g_{\hat{\y}}(\ba)+
(1-\theta) g_{\hat{\y}^{(t+1)}}(\ba).
\end{equation}
When $\theta = 1$, it reduces to Equation~(\ref{eq:LGSVM_framework_v22}) at
iteration $t$, and so $\hat{p}^{(t+1)} \leq p^{(t)}$. On the other hand,
note that $\theta\sum_{\hat{\y} \in \C^{(t)}}\bar{\mu}_{\hat{\y}}^{(t)} + (1-\theta) = \theta+(1-\theta) =1$, the optimal solution in
Equation~(\ref{eq:reduced-form}) is suboptimal to that of Equation~(\ref{eq:LGSVM_framework_v22}) at iteration $t+1$. Then we have $p^{(t+1)} \leq
\hat{p}^{(t+1)}$. Let $\hat{p}^{(t+1)} = p^{(t)} - \eta$, now we aims at showing $\eta \geq (\frac{-c+\sqrt{c^2+4\epsilon}}{2})^2$ which obviously
induces our final inequality Equation~(\ref{eq:objective_gain}).

Let $\{\tilde{\ba}^{(t)},\tilde{\theta}\}$ be the optimal solution of Equation~(\ref{eq:reduced-form}), we have following inequalities
\begin{eqnarray}
p^{(t)}-\eta &\leq & -\sum_{\hat{\y} \in \C^{(t)}}
\bar{\mu}_{\hat{\y}}^{(t)}g_{\hat{\y}}(\tilde{\ba}^{(t)}), \label{eq:ineq3}\\
p^{(t)}-\eta &\leq & -g_{\hat{\y}^{(t+1)}}(\tilde{\ba}^{(t)}). \label{eq:ineq4}
\end{eqnarray}
Using Equations~(\ref{eq:ineq1}), (\ref{eq:ineq2}), (\ref{eq:ineq3}) and (\ref{eq:ineq4}), we have
\begin{eqnarray}
\eta & \geq & \sum_{\hat{\y} \in \C^{(t)}}
\bar{\mu}_{\hat{\y}}^{(t)}g_{\hat{\y}}(\tilde{\ba}^{(t)}) - \sum_{\hat{\y} \in
\C^{(t)}}\bar{\mu}_{\hat{\y}}^{(t)}g_{\hat{\y}}(\bar{\ba}^{(t)})  \geq
\frac{\lambda}{2}\|\tilde{\ba}^{(t)}-\bar{\ba}^{(t)}\|^2, \label{eq:ineq5}\\
\eps-\eta &\leq & g_{\hat{\y}^{(t+1)}}(\bar{\ba}^{(t)})-g_{\hat{\y}^{(t+1)}}(\tilde{\ba}^{(t)}) \leq M
\|\tilde{\ba}^{(t)}-\bar{\ba}^{(t)}\|. \label{eq:ineq6}
\end{eqnarray}
On combining Equations~(\ref{eq:ineq5}) and (\ref{eq:ineq6}), we obtain
\[
\epsilon - \eta \leq M \sqrt{\frac{2\eta}{\lambda}},
\]
and then finally we have
${\eta} \geq \Big( \frac{-c+\sqrt{c^2+4\epsilon}}{2} \Big)^2$, where $c = \frac{M}{\sqrt{\lambda/2}}$.
\end{proof}

\bibliography{lgsvm}

\end{document}